\def\BibTeX{{\rm B\kern-.05em{\sc i\kern-.025em b}\kern-.08em
    T\kern-.1667em\lower.7ex\hbox{E}\kern-.125emX}}
\newcommand{\isarxiv}{}          
  \newcommand{\arxivonly}[1]{#1}
  \newcommand{\journalonly}[1]{}
  \newcommand{\arxivjournal}[2]{#1}
  \newcommand{\arxivonly}[1]{}
  \newcommand{\journalonly}[1]{#1}
  \newcommand{\arxivjournal}[2]{#2}
\newcommand{\AnalogSGD}{\texttt{Analog-\allowbreak SGD-\allowbreak WOP}}
\newcommand{\AnalogSGDAP}{\texttt{Analog-\allowbreak SGD-\allowbreak AP}}
\newcommand{\AnalogSGDSP}{\texttt{Analog-\allowbreak SGD-\allowbreak SP}}
\newtheorem{assumption}{\hspace{0pt}\bf Assumption}
\newtheorem{lemma}{\hspace{0pt}\bf Lemma}
\newtheorem{proposition}{\hspace{0pt}\bf Proposition}
\newtheorem{corollary}{\hspace{0pt}\bf Corollary}
\newtheorem{remark}{\hspace{0pt}\bf Remark}
\begin{document}

\title{On the Convergence Theory of Pipeline Gradient-based Analog In-memory Training}
\author{
Zhaoxian Wu, Quan Xiao, Tayfun Gokmen, Hsinyu Tsai, Kaoutar El Maghraoui, Tianyi Chen 
\thanks{
Z. Wu and Q. Xiao are with Cornell University, New York, NY 10044 (\{zw868, qx232\}@cornell.edu). 
T. Gokmen, H. Tsai, and K. El Maghraoui are with IBM Research, Yorktown Heights, NY 10598 (\{tgokmen, htsai, kelmaghr\}@us.ibm.com).
T. Chen is with Rensselaer Polytechnic Institute and Cornell University (tianyi.chen@cornell.edu).}  
 \thanks{
The work was done when the authors were at Rensselaer Polytechnic Institute. The work was supported by IBM through the IBM-Rensselaer Future of Computing Research Collaboration, the National Science Foundation Projects 2401297, 2532349, and 2532653, and by the Cisco Research Award. } \\
}
\maketitle
\vspace{-5em}
\begin{abstract}
    Aiming to accelerate the training of large deep neural networks (DNN) in an energy-efficient way, analog in-memory computing (AIMC) emerges as a solution with immense potential. 
    AIMC accelerator keeps model weights in memory without moving them from memory to processors during training, reducing overhead dramatically.
    Despite its efficiency, scaling up AIMC systems presents significant challenges.
    Since weight copying is expensive and inaccurate, data parallelism is less efficient on AIMC accelerators. 
    It necessitates the exploration of pipeline parallelism, particularly asynchronous pipeline parallelism, which utilizes all available accelerators during the training process. 
    This paper examines the convergence theory of stochastic gradient descent on AIMC hardware with an asynchronous pipeline (\AnalogSGDAP).
    Although there is empirical exploration of AIMC accelerators, the theoretical understanding of how analog hardware imperfections in weight updates affect the training of multi-layer DNN models remains underexplored. Furthermore, the asynchronous pipeline parallelism results in stale weights issues, which render the update signals no longer valid gradients.
    To close the gap, this paper investigates the convergence properties of {\AnalogSGDAP} on multi-layer DNN training. We show that the {\AnalogSGDAP} converges with iteration complexity $\ccalO\lp \varepsilon^{-2}+\varepsilon^{-1}\rp$ despite the aforementioned issues, which matches the complexities of digital SGD and Analog SGD with synchronous pipeline, except the non-dominant term $\ccalO(\varepsilon^{-1})$. It implies that AIMC training benefits from asynchronous pipelining almost for free compared with the synchronous pipeline by overlapping computation.
\end{abstract}

\begin{IEEEkeywords}
Analog computing, in-memory computing, stochastic optimization, model parallelism
\end{IEEEkeywords}

\section{Introduction}
\begin{figure}
	\centering
	\includegraphics[width=0.55\linewidth]{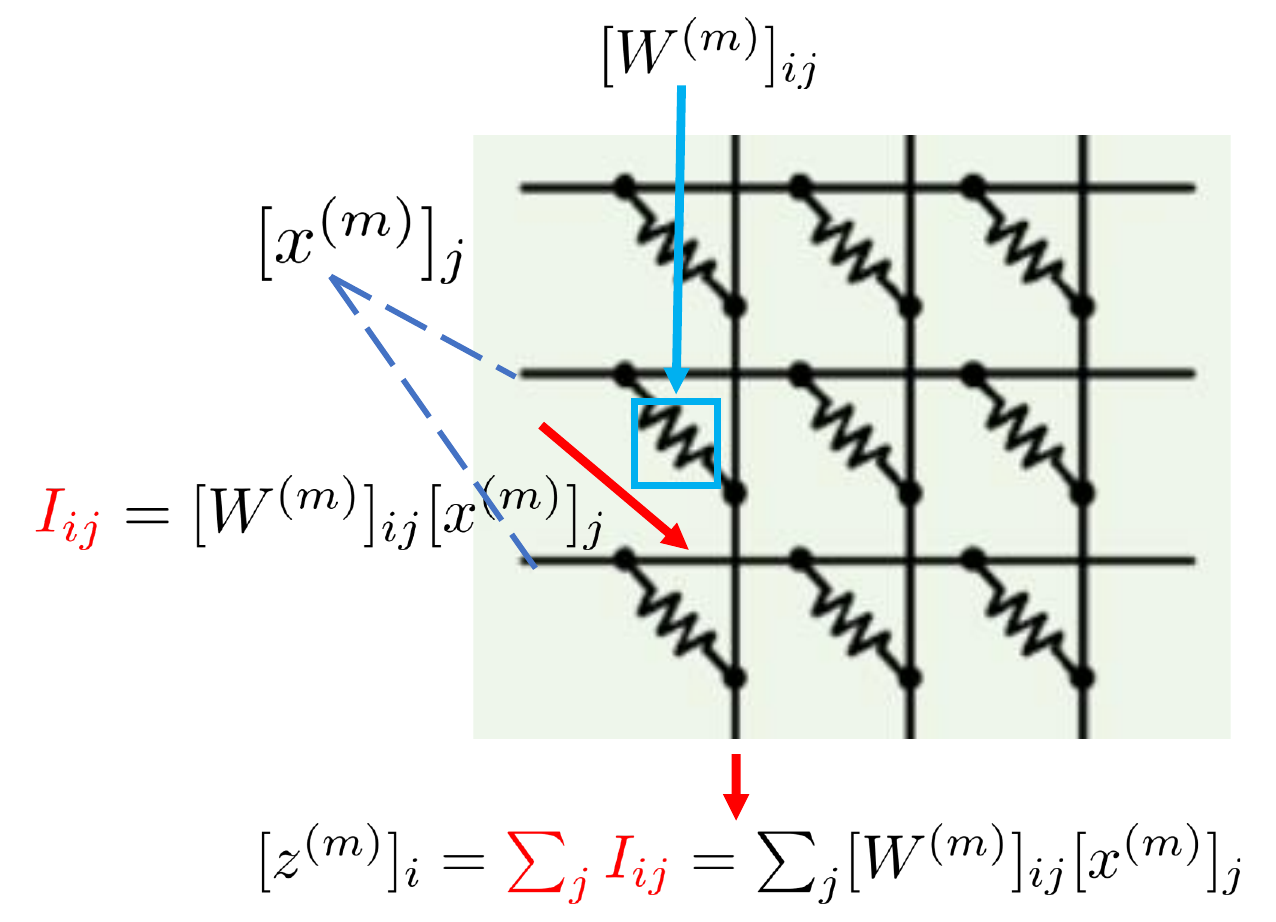}
	\vspace{0.5em}
	\caption{
		Illustration of MVM computation in AIMC accelerators. The \textbf{weight} $W^\spp{m}$ on layer $m$ is stored in a crossbar tile.
		The $(i,j)$-th element of $W^\spp{m}$ is represented by the \blue{\textbf{conductance}} of the $(i,j)$-th resistive element.
		MVM operation $z^\spp{m} = W^\spp{m}x^\spp{m}$ is performed by applying \bulletcolor{\textbf{voltage}} $[x^\spp{m}]_j$ between $j$-th and $(j+1)$-th row. By Ohm's law, the \red{\textbf{current}} is $I_{ij}=[W^\spp{m}]_{ij}[x^\spp{m}]_{j}$; and by Kirchhoff's law, the total current on the $i$-th column is $\sum_{j}I_{ij}=\sum_{j}[W^\spp{m}]_{ij}[x^\spp{m}]_{j}$. Unlike the digital counterpart, no movement of $W^\spp{m}$ is required for MVM calculation.
	}
	\label{fig:MVM}
	\vspace{-2em}
\end{figure}
Deep Neural Networks (DNNs) have facilitated remarkable progress in a wide range of applications. 
As the model sizes and dataset volumes grow exponentially, DNN training becomes time-consuming and resource-intensive. For example, training an LLAMA2 7B model requires 184 thousand GPU hours, and this increases to 1.7 million GPU hours for its 70B version \citep{touvron2023llama2}.
In response to the escalating computational demands of large-scale DNN training and inference, the pursuit of specialized hardware accelerators has emerged. 
The ubiquitous fully connected and convolutional layers, whose core operation is matrix-vector multiplication (MVM), form the fundamental backbone of both training and inference as the demand for efficient processing intensifies.
Typically, each forward and backward pass requires millions of multiply-accumulate 
(MAC) operations, necessitating specialized hardware with high performance and low energy consumption. 

\noindent
\textbf{The promise of analog computing.}
To this end, a promising solution to accelerate MVM operations is the \emph{analog in-memory computing} (AIMC) accelerator, wherein both the input and output of the MVM operation are analog signals like current or voltage. The DNN weights are stored in analog crossbar tiles consisting of resistive elements \citep{jain2019neural} where the conductance of the resistive element represents the weight magnitude. 
Unlike traditional Von Neumann accelerators such as GPUs and TPUs, which move weights from memory to the processor during computation, AIMC accelerators keep the weights stationary and perform MVM operations in memory. Without moving weights frequently, AIMC accelerators enable time- and energy-efficient MVM computations; see Fig. \ref{fig:MVM}.

\noindent
\textbf{The curse of data parallelism.}
While efficient computation is beneficial, in-memory computing imposes scalability limitations. Specifically, contemporary DNN optimizers depend on mini-batch SGD to facilitate highly parallel training. 
In the Von Neumann computation architecture, the most efficient parallel mechanism is \emph{data parallelism} \citep{li2020pytorch}, where DNN weights are replicated to different accelerators to support gradient descent with large batches \citep{goyal2017accurate,you2020Large}. 
By contrast, in the AIMC architecture, trainable weights are stored in the analog tiles as the conductances of resistive elements; thus, copying weights is expensive and prone to error. Therefore, data parallelism, the enabling factor of large-scale AI training, is unrealistic for AIMC accelerators.

Due to the physical limitations on the AIMC accelerator, we instead resort to \emph{pipeline parallelism} \citep{xu2021parallelizing}. 
Pipeline parallelism partitions a large model into a series of stages distributed across multiple accelerators, with each stage corresponding to the computation of a few layers. After an accelerator processes one datum (or batch), it passes the intermediate outputs to the next accelerator and begins processing the next datum. 
Pipeline parallelism overlaps computation across different accelerators, increasing system throughput.

\noindent
\textbf{Synchronous and asynchronous parallelism.} 
As the requirement of training large models emerges, a plethora of pipeline parallel libraries have been developed, including Gpipe \cite{huang2019gpipe,kim2020torchgpipe}, Deepspeed \citep{ren2021zero}, FSDP \citep{zhao2023pytorch}, Megetron-LM \citep{shoeybi2019megatron}, Colossal-AI \cite{li2023colossal}, just to name a few. 
In their implementation, a gradient \textit{mini-batch} (e.g., with batch size 128) is split into \textit{micro-batches} (e.g., 8 micro-batches with batch size 16) and filled into the pipeline. The model weights are updated after all the micro-batches are processed. Featuring the synchronization operation at the end of each mini-batch, this pipeline paradigm is referred to as \textit{synchronous pipeline}.
However, during the fill and drain phases, the pipeline is not fully occupied, which limits the device utilization. To overcome the issue, \textit{asynchronous pipeline} is proposed, which continuously fills the micro-batch into the pipeline and updates the weights once the gradient is available. Asynchronous pipeline increases the device utilization, while the weights for gradient computing can be stale, leading to a biased training dynamic. 

To fully realize the promise of analog training, this paper focuses on the theoretical understanding of the viability of stochastic gradient descent on AIMC hardware with asynchronous pipeline (\AnalogSGDAP). Besides the stale signal issue in the asynchronous pipeline, the update dynamics on AIMC accelerators also suffer from bias since the weight update involves adjusting conductance, which is difficult to control in theory. Both stale signals and imperfect update dynamics lead to the following natural question: 
\begin{center}
	{\em How do their combined effects provably impact training on the AIMC accelerator?  }
\end{center}
This paper theoretically demonstrates that {\AnalogSGDAP} converges at almost the same rate as SGD on a single AIMC accelerator ({\AnalogSGD}) and AIMC hardware with synchronous pipeline ({\AnalogSGDSP}), providing a theoretical foundation for parallel AIMC training.

\subsection{Main results}
This paper aims to understand the impact of bias and investigate the computational complexities and acceleration effect of {\AnalogSGDAP}.
We highlight the unique technical challenges compared with the analysis of the asynchronous pipeline on digital accelerators as follows. 
    \textbf{T1)}
    Unlike the digital training with precise weight update, the update dynamics on AIMC accelerators include an asymmetric bias. To be specific, consider an analog weight $W_k$ on a resistive crossbar array and a desired update $\Delta W_k$. The updated weight becomes
    \begin{tcolorbox}[emphblock, width=1\linewidth]
        \vspace{-1.em}
        \begin{align}
            \label{dynamic:analog-update}
            W_{k+1} = W_k + \Delta W_k
            - \frac{1}{\tau}|\Delta W_k|\odot W_k
        \end{align}
        \vspace{-1.8em}
    \end{tcolorbox}
    where $\tau$ is a hardware-specific constant, $|\cdotc|$ is element-wise absolute value, and $\odot$ is the element-wise production. Unlike digital training, the last term in \eqref{dynamic:analog-update} is unique to analog accelerators. This bias can not be eliminated within a reasonable cost and leads to challenges to the analysis. In \cref{section:response-functions}, we provide more background.
    \textbf{T2)}
    Even though there are already theoretical works on the design of the asynchronous pipeline, most of them model it as a delayed gradient method \cite{sun2017asynchronous}. However, in the asynchronous pipeline, the forward pass is performed using stale weights, while the backward pass uses the latest weights. Consequently, their inconsistent multiplication used for weight updates is not a valid gradient, and this error accumulates during training. 
    The stale signals in DNN render the update increment used in {\AnalogSGDAP} no longer a valid gradient, making its convergence behavior unclear. 
    \textbf{T3)}
    Multi-layer DNN models accumulate error through layers, leading to challenges in the analysis. Without considering a specific multi-layer architecture, prior work \cite{huo2018decoupled} treats all weights as a whole, thereby failing to capture the multi-layer DNN structure.

\begin{table*}[!t]
\centering
\begin{tabular}{c c cc}
    \hline\hline
    Pipeline Type & Sample Complexity & Comp. Density & Wall-clock Complexity \\
    \hline
    \AnalogSGD
    &  $\ccalO\lp \sigma^2\varepsilon^{-2}\rp$        & $\frac{1}{M}$  & $\ccalO\lp M\sigma^2\varepsilon^{-2}\rp$ \\ 
    {\AnalogSGDSP} [Thm. \ref{theorem:ASGD-sync-convergence-noncvx-linear}] & 
    $\ccalO\lp \sigma^2\varepsilon^{-2}\rp$ 
    & $\frac{B}{M+B-1}$ & $\ccalO\lp (M+B)B^{-1}\sigma^2\varepsilon^{-2}\rp$  \\
    {\AnalogSGDAP} [Thm. \ref{theorem:ASGD-async-convergence-noncvx-linear}] & 
    $\ccalO\lp \sigma^2\varepsilon^{-2}+\varepsilon^{-1}\rp$ & $1$ & $\ccalO\lp \sigma^2\varepsilon^{-2}+\varepsilon^{-1}\rp$\\
    \hline\hline
\end{tabular}
\vspace{-0.5em}
\caption{
    Comparison of different pipeline strategies.
    Sample complexity is the number of samples needed to achieve a training loss $\le E+\varepsilon$, where $E$ is an inevitable asymptotic error due to hardware imperfections. 
    Computing density is the average portion of active accelerators. In the table, $M$ and $B$ are the number of stages and micro-batches, respectively.
    Wall-clock complexity measures the running time for achieving training loss $\le \varepsilon$, which in math is sample complexity divided by computation density. See Theorems \ref{theorem:ASGD-sync-convergence-noncvx-linear} and \ref{theorem:ASGD-async-convergence-noncvx-linear} for iteration complexity analysis; and Corollary \ref{corollary:sample-complexity} and \ref{corollary:clock-cycle-complexity} for sample and wall-clock complexity analysis.
}
\label{table:comparison}
\vspace{-1em}
\end{table*}

Overcoming the challenges, this paper answers the question above and provides both a convergence guarantee and computational complexities. 
The key contributions are as follows.
    \textbf{C1)}
    We consider the training of a multi-layer DNN, which consists of linear layers and non-linear activation functions. We formulate the training dynamics of {\AnalogSGDAP}, which reflects the stale weights in the asynchronous pipeline and the bias from the AIMC accelerators' imperfection. We investigate the smoothness of the training objective as well as the robustness of the network against stale weights.
    \textbf{C2)}
    We analyze three types of computation complexities of the asynchronous pipeline training in AIMC accelerators: iteration, sample, and wall-clock complexities. 
    The crucial step is to construct a Lyapunov function that has sufficient descents at each iteration.
    We show that {\AnalogSGDAP} has the sample complexity $\ccalO\lp \sigma^2\varepsilon^{-2}+\varepsilon^{-1}\rp$, which is the same as that of {\AnalogSGD} and {\AnalogSGDSP} except for the higher-order term $\ccalO(\varepsilon^{-1})$.
    Unlike the synchronous pipeline, the asynchronous pipeline utilizes all available accelerators and increases the computation density from $\frac{B}{M+B-1}$ to $1$, where $B$ is the number of micro-batches and $M$ is the number of stages. For example, $M=4$ and $B=4$ lead to about 1.7$\times$ computation density. 
    The results are summarized in Table \ref{table:comparison}.
    \textbf{C3)}
    We verify the speedup of {\AnalogSGDAP} against {\AnalogSGD} and {\AnalogSGDSP} in AIMC accelerator simulator, AIHWKIT. Numerical simulation results show that despite a slightly larger sample complexity, {\AnalogSGDAP} achieves the same accuracy with fewer clock cycles due to its high computation density, which is consistent with the theoretical guarantee.
    Furthermore, empirical results show that {\AnalogSGDAP} achieves linear speedup at least within the range of 1-8 accelerators.
\subsection{Prior art.}
Prior art can be grouped into two categories.

\noindent
\textbf{Gradient-based training on AIMC accelerators.}
Analog SGD implemented by pulse update is extremely efficient \cite{gokmen2016acceleration, yao2017face, wang2019situ}, and a series of analog-friendly algorithms has been proposed to enable the gradient-based on-chip training on the AIMC hardware. 
However, the asymmetric update and noisy gradient lead to large asymptotic errors in training \citep{burr2015experimental}. 
Numerous algorithms have been proposed proposed to address these issues.
TT-v1 \citep{gokmen2020} heuristically introduces an auxiliary analog array as a gradient approximation to eliminate the asymptotic error. To demonstrate its efficiency, \cite{wu2024towards} establishes a model to characterize the dynamics of the analog training and provides a theoretical convergence guarantee. Building on that, \cite{wu2025analog} investigates the impact of response functions on the training and demonstrates how its impact can be alleviated by bilevel optimization methods.
To filter out the high-frequency noise introduced by inter-device transfer, TT-v2 \citep{gokmen2021} introduces a digital array. Based on TT-v1/v2, Chopped-TTv2 (c-TTv2) and Analog Gradient Accumulation with Dynamic reference (AGAD) \citep{rasch2024fast} are proposed to mitigate the impact of inaccurate symmetric point correction \cite{onen2022neural}. 
To overcome the limit update granularity issue, \cite{li2025memory} proposed a multi-tile training paradigm.
However, the mentioned works focus on training single AIMC accelerators. Instead, this work aims to scale up the system by parallelizing model computation.

\noindent
\textbf{Asynchronous pipeline.}
Existing works proposed methods to alleviate the stale weight issues in the asynchronous pipeline.
For instance, \cite{kosson2021pipelined} proposed weight-prediction approaches to compensate for the delay, PipeDream \citep{narayanan2019pipedream} combined the synchronous and asynchronous pipeline parallelisms to achieve greater speedup; and \cite{chen2022sapipe} and \citep{li2018pipe} propose only overlapping gradient computation and communication to ensure the delay is one. 
These algorithms are {\em not for analog pipeline training}. 
Researchers also attempted to design analog pipeline circuits to accelerate the training \cite{shafiee2016isaac, song2017pipelayer}, but they focus on the synchronous pipeline.
On the theory side, there are few studies on pipeline training for digital accelerators such as \cite{huo2018decoupled,xu2020acceleration}, but their algorithms either require weight stashing, which is {\em expensive for analog accelerators}, or slow down digital SGD through higher-order terms. 
As \cite{narayanan2019pipedream} points out, delay affects both forward and backward passes in the asynchronous pipeline DNN training, rendering the increment no longer a valid gradient and making the analysis challenging.
\section{Analog Training with Model Parallelism}
In this section, we first introduce the multi-layer DNN training model, and then introduce the analog stochastic gradient descent algorithm with the asynchronous pipeline.

\subsection{Training problem and back-propagation}
Consider a standard model training problem on a data distribution $\ccalP$ given by
\begin{equation}
    \label{problem}
    \min_{W\in\reals^{D}}  \Big\{f(W):= \mbE_{\xi\sim\ccalP} [f(W; \xi) ]\Big\}
\end{equation}
where $W$ is the trainable model weights on analog crossbar tiles, which is vectorized into a $D$-dimension vector.

\noindent
\textbf{DNN model.}
Consider the multi-layer DNN, which consists only of linear layers and activation functions.
Let $[M] := \{1, 2, \cdots, M\}$ denote an index set. 
The trainable weights are separated into $M$ different blocks, i.e., $W = [W^\spp{1}, W^\spp{2}, \cdots, W^\spp{M}]$, where $W^\spp{m}, \forall m\in[M]$ is a matrix. 
The training of DNN typically consists of three phases: forward, backward, and update.
Suppose $x$ and $y$ are the feature and label of data $\xi=(x,y)\in\ccalX\times\ccalY$, where $\ccalX$ and $\ccalY$ are feature and label spaces, respectively. The forward pass of the model for layer $m\in[M]$ is expressed by
\begin{align}
    \label{recursion:forward}
     \textsf{Forward}\qquad 
    &x^\spp{1} = x,~~~~
    z^\spp{m} = W^\spp{m}x^\spp{m},
    \\
    &x^\spp{m+1} = g^\spp{m}(z^\spp{m})
    \nonumber
\end{align}
where $g^\spp{m}(\cdotc)$ is the activation function of layer $m$, and $x^\spp{m}$ is the input feature.
At the last layer, the ouput $x^\spp{M}$ and label $y$ are fed together into the loss function $\ell(\cdotc, \cdotc) : \ccalY\times\ccalY\to\reals$ and yields 
$f(W; \xi) = \ell(x^\spp{M+1}, y)$.
For the backward pass, define the backward signal $\delta^\spp{m} := \fracpartial{f(W; \xi)}{z^\spp{m}}$, wherein the error of the last layer is first computed by $\delta^\spp{M} = \nabla_{x^\spp{M+1}}\ell(x^\spp{M+1}, y) \nabla g^\spp{M}(z^\spp{M})$.
Subsequently, each layer $m\in[M-1]$ computes the next recursion by the chain rule 
\begin{align}
    \label{recursion:backward}
    \textsf{Backward}\qquad 
    \delta^\spp{m}
    =&\ \fracpartial{f(W; \xi)}{z^\spp{m+1}}\fracpartial{z^\spp{m+1}}{x^\spp{m+1}}\fracpartial{x^\spp{m+1}}{z^\spp{m}} 
    \\
    =&\ (\delta^\spp{m+1}W^\spp{m}) \nabla g^\spp{m}(z^\spp{m}).
    \nonumber
\end{align}
In the forward pass, the activation $z^\spp{m}$ is stashed until the backward signal $\delta^\spp{m+1}$ returns. 
Typically, the computation $\nabla g^\spp{m}(z)$ is light-weight, and hence the major overhead of the backward pass lies at the MVM among $W^\spp{m}$ and $\delta^\spp{m}$. 
With the forward and backward signals, the gradient with respect to the weight $W^\spp{m}$ of layer $m$ is computed by 
\begin{align}
    \label{definition:gradient}
    \textsf{Gradient}\qquad 
    \nabla_{W^\spp{m}} f(W; \xi)
    =&\ \fracpartial{f(W; \xi)}{z^\spp{m}}\fracpartial{z^\spp{m}}{W^\spp{m}} 
    \\
    =&\ \delta^\spp{m}\otimes x^\spp{m}
    \nonumber
\end{align}
where $\otimes$ is the outer product between two vectors.

\noindent
\textbf{Rank-update}
Unlike digital accelerators, AIMC accelerators use \textit{rank-update} to change the matrix, which leverages the outer product structure as shown in \eqref{definition:gradient}. To do so, independent random voltage pulses with probability proportional to the elements of $\delta^\spp{m}$ and $x^\spp{m}$ are applied to columns and rows, respectively. Consequently, the $(i, j)$-th element $[W]_{i,j}$ receives a pulse coincident with probability $\propto [\delta^\spp{m}]_i[x^\spp{m}]_j$.
Without computing the whole matrix $\delta^\spp{m}\otimes x^\spp{m}$ explicitly, rank-update has only $O(BN)$ complexity while the update on digital accelerators has $O(BN^2)$ complexity to update an $N\times N$ matrix.
Consequently, AIMC accelerators perform the update operation in a fast and energy-efficient way \cite{gokmen2016acceleration}.

\begin{figure*}[t]
  \centering
  \includegraphics[width=0.8\linewidth]{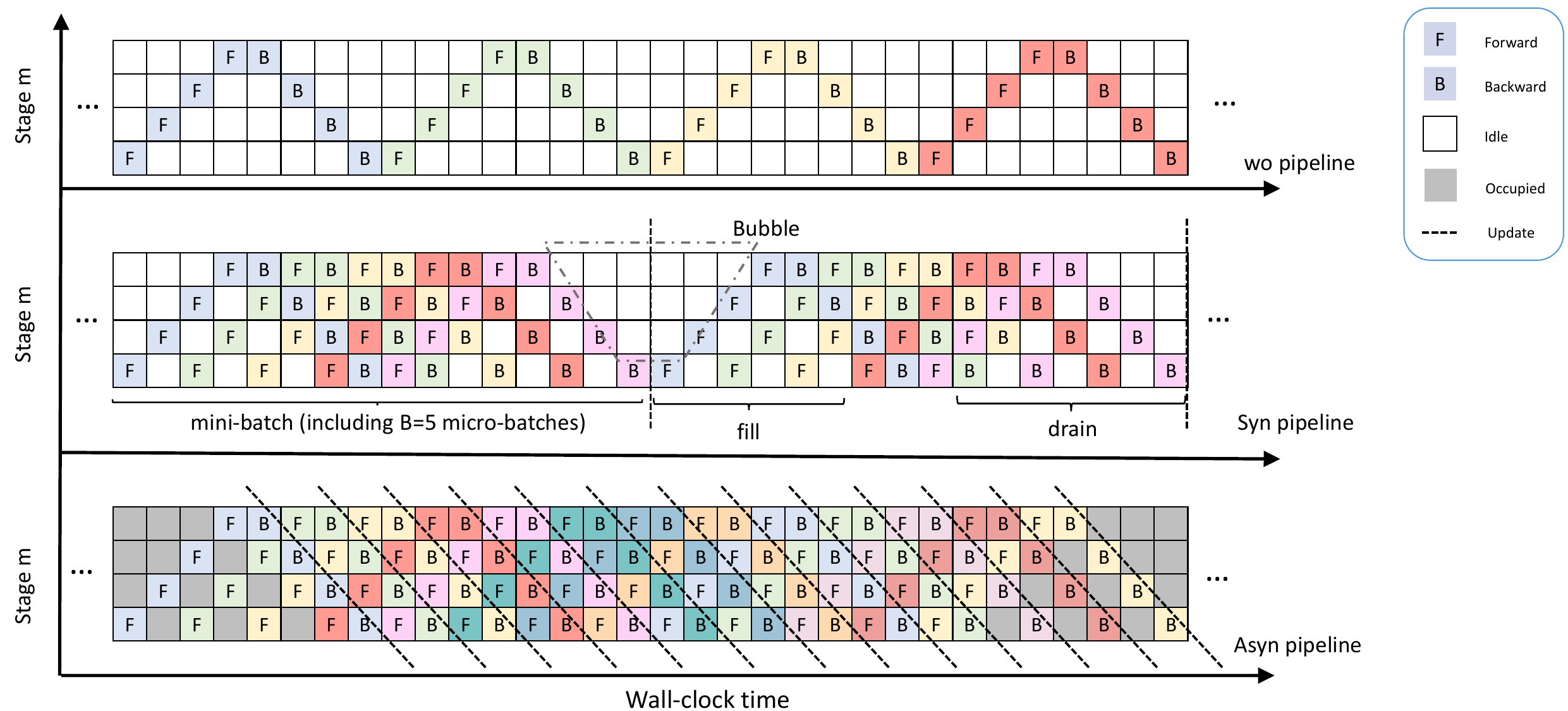}
  \caption{Illustration of pipelines with 4 accelerators ($M=4$). Each mini-batch is split into $B=4$ micro-batches. Each color represents one micro-batch, and each row from bottom to top represents stages 0 to 4. Each column corresponds to a single clock cycle in which one micro-batch is processed. The white square indicates the corresponding accelerator is idle.
  \textbf{(Top)} {\AnalogSGD}. All weights are updated after each mini-batch is computed, which is not presented in the figure. 
  \textbf{(Middle)} {\AnalogSGDSP} with micro-batch count $B=5$. The update occurs after all 5 micro-batches are processed.
  \textbf{(Bottom)} {\AnalogSGDAP}. The update occurs once the gradient of a single micro-batch is achieved.
  The grey square indicates that the corresponding accelerator is processing data that is not fully reflected in the figure.
  }
  \label{figure:syn-vs-asyn}
  \vspace{-0.4cm}
\end{figure*}

\subsection{Non-ideal update on AIMC accelerator}
\label{section:response-functions}
Despite the ultra-efficient weight update on the AIMC accelerator, as noted by \citep{gokmen2020,wu2024towards,wu2025analog}, the update on AIMC accelerators exhibits asymmetric bias. To be specific, for any desired update $\Delta W_k^\spp{m}$, the update on $W_k$ is scaled by response functions $q_+(\cdotc)$ and $q_-(\cdotc)$.
Considering the weight $W_k$ at iteration $k$ and the expected update $\Delta W\in\reals^D$, we express the asymmetric update as $W_{k+1}=U(W_k, \Delta W)$ with the update function $U: \reals\times\reals\to\reals$ defined by
\begin{align}
    \label{analog-update}
    U(w, \Delta w) := \begin{cases}
        w + \Delta w \cdot q_{+}(w),~~~\Delta w \ge 0, \\
        w + \Delta w \cdot q_{-}(w),~~~\Delta w < 0.
    \end{cases}
\end{align}
Defining the symmetric and asymmetric components as $F(w) := \frac{1}{2}(q_{-}(w)+q_{+}(w))$ and $G(w) := \frac{1}{2}(q_{-}(w)-q_{+}(w))$, we write the update in \eqref{analog-update} as $U(w, \Delta w)=w + \Delta w \cdot F(w) -|\Delta w| \cdot G(w)$. 
This paper considers a special case that $q_{+}(w) = 1 - {w}/{\tau}, q_{-}(w) = 1 + {w}/{\tau}$, which leads to the dynamics (1).
The linear forms of $q_+(w)$ and $q_-(w)$ can be viewed as a first-order approximation of more general nonlinear device response functions along the training trajectory.
In this model, $\tau$ controls how strongly the update magnitude depends on the current conductance state.
Physically, a larger $\tau$ means weaker state dependence (more symmetric updates), while a smaller $\tau$ means stronger asymmetry and stronger bias during training.
Therefore, $\tau$ is a hardware-specific constant. Refer to \citep{wu2024towards} for more details about the background of this model.

\begin{remark}
    As established in \citep{gokmen2020, wu2024towards}, the asymmetric update is one of the most significant challenges in analog training, and our theoretical framework explicitly captures this dominant source of non-ideality through the bias term in equation (1). In the theoretical analysis, we focus on the asymmetric update as the primary source of hardware imperfection to balance between faithfulness to the hardware and analytical tractability. Beyond the asymmetric update, the simulations additionally incorporate a comprehensive set of hardware imperfections to more accurately reflect real device behavior; see Section \ref{section:experiments}.

    Beyond the update model in (6), additional hardware effects (e.g., output noise) can be viewed as bounded perturbations to the idealized update dynamics. Under such perturbations, the dominant convergence trend is expected to be preserved, while constants and the asymptotic error may increase by perturbation-dependent terms. A full study that explicitly incorporates these non-idealities is left for future work.
\end{remark}

\subsection{Synchronous and asynchronous pipelines}
Consider training a large model using a mini-batch size $B_{\text{mini}}$, which is too large to fit into one accelerator and necessitates \emph{model parallelism}. 
In model parallelism, the forward and backward computations are partitioned into a sequence of \emph{stages}, which, in general, consist of a consecutive set of layers in the model. 
Each stage is mapped to one digital or analog accelerator.
Apart from the model separation, a mini-batch of data is also split into $B$ \emph{micro-batch} of size $B_{\text{micro}} := B_{\text{mini}} / B$. 
Without loss of generality, this paper assumes that one stage contains only one layer and the micro-batch size $B_{\text{micro}} = 1$. Under this assumption, $\xi_{k,b}$ denotes the $b$-th micro-batch which contains only the $b$-th data.
Consequently, the number of micro-batches is the mini-batch size, i.e., $B=B_{\text{mini}}$.
This paper treats a stage as the minimal scheduling unit and investigates the time complexity across different scheduling strategies.
To compare time costs, we define the basic time unit, \emph{clock cycle}, which is the minimum time required for all devices to process one micro-batch. 
Note that this definition is based on the assumption that per-stage computation loads are balanced across all devices, so that all stages complete each micro-batch in approximately the same amount of time, which can be achieved via balanced partitioning \citep{huang2019gpipe, narayanan2019pipedream, shoeybi2019megatron}.
Based on scheduling strategies, there are three types of model parallelism: vanilla model parallelism without pipelining, synchronous pipelining, and asynchronous pipelining.
\noindent
\textbf{Vanilla model parallelism without pipeline. }
Let $\xi_{k,b}, b\in[B]$ denote the $b$-th micro-batch (containing only one data sample, as previously mentioned) for iteration $k$. 
Vanilla model parallelism processes micro-batches sequentially. 
In the forward pass, stage $m$ achieves the output of the previews stage $x^\spp{m}_{k,b}$ as input and outputs $x^\spp{m+1}_{k,b}$ according to \eqref{recursion:forward}, while in the backward pass stage $m$ achieves the backward signal $\delta^\spp{m}_{k,b}$ from next stage and sends the error $\delta^\spp{m-1}_{k,b}$ to its previews stage according to \eqref{recursion:backward}.
The signals 
$x^\spp{m}_{k, b}$ and $\delta^\spp{m}_{k, b}$ 
are stashed when the gradient for the $b$-th micro-batch is completed. 
After achieving all $\{(x^\spp{m}_{k, b}, \delta^\spp{m}_{k, b}):b\in[B_{\text{mini}}]\}$, the weights will be updated using the gradient-based method.
Beginning from $W_{k,0}=W_k$, Analog SGD update the weight as $W_{k+1}^\spp{m} = W_{k,B}^\spp{m}$, which iterates over $b\in[B_{\text{mini}}]$ by
\begin{align}
    \label{dynamic:analog-SGD-sync}
    W_{k, b+1}^\spp{m} =&\ W_{k, b}^\spp{m} - \frac{\alpha}{B} \nabla_{W^\spp{m}} f(W_k; \xi_{k, b}) 
    \bkeq
    - \frac{\alpha}{\tau B}|\nabla_{W^\spp{m}} f(W_k; \xi_{k, b})|\odot W_{k,b}^\spp{m}.
\end{align}
Notice that for digital SGD, the last term in the {\ac{RHS}} of \eqref{dynamic:analog-SGD-sync} disappears, and hence \eqref{dynamic:analog-SGD-sync} reduces to mini-batch SGD, i.e., $W_{k+1}^\spp{m} = W_{k}^\spp{m} - \frac{\alpha}{B}\sum_{b=1}^{B} \nabla_{W^\spp{m}} f(W_k; \xi_{k, b})$. 
To fully release the potential of rank-update, AIMC accelerators follow \eqref{dynamic:analog-SGD-sync} to update its weight, which in turn complicates the convergence analysis.

\noindent
\textbf{Synchronous pipeline. } 
Vanilla model parallelism is simple to implement. However, most devices remain idle, reducing system throughput.
Existing work has already attempted to design an architecture that supports the pipeline training \cite{shafiee2016isaac, song2017pipelayer}. They faithfully implement the iteration \eqref{dynamic:analog-SGD-sync} but overlap the computing inside a mini-batch. 
Featuring synchronization at the end of a mini-batch, this stage schedule mechanism is called \emph{synchronous pipeline}.
In the vanilla non-pipeline version, stage $m$ remains idle after completing the forward computation of the $b$-th micro-batch until the backward signal returns from stage $m+1$. In the synchronous pipelining, each stage sends the output signal $x^\spp{m+1}_{k,b}$ to the next stage, while simultaneously starting to process the next forward signal $x^\spp{m}_{k,b+1}$ or backward signal $\delta^\spp{m}_{k,b-(M-m)}$, if any. 
Similar to the vanilla model parallelism, the weights will be updated after processing all micro-batches.
Consequently, for each mini-batch, most of the devices need to wait until the last micro-batch is processed. After that, each device collects the gradients of the whole mini-batch and updates weights iteratively following \eqref{dynamic:analog-SGD-sync}.
To implement that, a synchronization operation is required to wait for the completion of gradient computation before starting the next mini-batch. During the pipeline filling and draining periods, parts of the devices are idle; see the middle of Fig. \ref{figure:syn-vs-asyn}

\noindent
\textbf{Asynchronous pipeline.} 
To fully utilize all devices, a more progressive strategy, \textit{asynchronous pipeline}, is proposed. Instead of using mini-batch descent, asynchronous pipeline performs gradient descent on each micro-batch. 
Let $\xi_{k}$ denote the micro-batch for iteration $k$. 
In the forward pass, stage $m$ receives the output of the previous stage $x^\spp{m}_k$ as input and outputs $x^\spp{m+1}_k$, while in the backward pass, stage $m$ receives the backward signal $\delta^\spp{m}_k$ from next stage and sends the error $\delta^\spp{m-1}_k$ to its previous stage.
Unlike the synchronous pipeline, 
the asynchronous pipeline updates the weights once $x^\spp{m}_k$ and $\delta^\spp{m}_k$ are ready. 
Without regular synchronization and filling/draining period, no device is idle during the training, maximizing the utilization.
See Fig. \ref{figure:syn-vs-asyn} for the comparison of different parallel computing strategies.

Despite its perfect resource usage, asynchronous pipeline parallelism suffers from the stale signal issue. 
Notice that in the asynchronous pipeline setting, the forward signal $x^\spp{m}$ and backward signals $\delta^\spp{m}$ are computed based on different stale weights. 
For example, after outputting forward signal $x^\spp{0}$ for the $k$-th micro-batch, the first stage ($m=1$) receives the backward signal $\delta^\spp{0}$ after $2(M-1)$ clock cycles, wherein $M-1$ gradient updates happen.
We introduce different symbols for precise description, which have tildes on top of them to indicate the staleness.
Since there are $M-m$ gradient updates between the forward and backward passes in stage $m$, the weight used for the forward pass has a delay of $M-m$. 
In contrast, the weights are updated by the backward pass, which means the weights used for the backward pass are the latest ones. 
The forward and backward passes of asynchronous pipeline training can be summarized as  
\begin{tcolorbox}[emphblock, width=1\linewidth]
    \vspace{-1\baselineskip}
    \begin{align}
        \label{dynamic:asyn-forward}
        & \textsf{Forward} & &\ 
        \tdx^\spp{0}_k = x_k,~~
        \tdx^\spp{m+1}_k = g^\spp{m}(\tdz^\spp{m}_k),\\
        & & &
        \tdz^\spp{m}_k = W^\spp{m}_{k-(M-m)} \tdx^\spp{m}_k,~~
        \nonumber
         \\
         \label{dynamic:asyn-backward}
        & \textsf{Backward} &&\ \tddelta^\spp{m}_k =\tddelta^\spp{m+1}_kW^\spp{m+1}_k \nabla g^\spp{m}(\tdz^\spp{m}_k).
    \end{align}
\end{tcolorbox}

\noindent
\textbf{Imperfect gradient-based training on AIMC accelerators.}
After getting the backward signal $\delta^\spp{m}$, gradient-based algorithms can be used to update the weights. However, there is a bias due to the hardware imperfection \citep{wu2024towards}. Replacing $\Delta W_k$ with $-\alpha\tddelta^\spp{m}_k \otimes \tdx^\spp{m}_k$ in \eqref{dynamic:analog-update}, where $\alpha> 0$ is a positive learning rate, we reach the update dynamics of weights 

\begin{tcolorbox}[emphblock, width=1\linewidth]
    \vspace{-1\baselineskip}
    \begin{align}
        \label{dynamic:asyn-update-analog-sgd}
        \textsf{Update} \quad W_{k+1}^\spp{m} =&\ W_k^\spp{m} - \alpha\tddelta^\spp{m}_k \otimes \tdx^\spp{m}_k 
        \bkeq
        - \frac{\alpha}{\tau} |\tddelta^\spp{m}_k \otimes \tdx^\spp{m}_k| \odot W_k^\spp{m}.
    \end{align}
    \vspace{-1.4\baselineskip}
\end{tcolorbox}
\noindent
In \eqref{dynamic:asyn-update-analog-sgd}, the outer product $\tddelta^\spp{m}_k \otimes \tdx^\spp{m}_k$ is not a valid gradient with respect to any weight $W^\spp{m}$, which is referred to as partially stale gradient and makes the analysis more challenging.

The main challenge coming from the AIMC training is the \emph{weight drift} \citep{wu2024towards}. Consider the case where $W_k^\spp{m}$ is already a non-zero critical point of $f(W)$ but not the one of $f(W; \xi)$, i.e. $\mbE_{\xi\sim\ccalP}[\nabla_{W^\spp{m}} f(W_k^\spp{m}; \xi)]=0$ but $\nabla_{W^\spp{m}} f(W_k^\spp{m}; \xi)\neq 0$ for some $\xi$. After one iteration, the weight $W_k^\spp{m}$ drifts because of the sample randomness
\begin{align}
    \label{equation:analog-SGD-drift}
    \mbE[W_{k+1}^\spp{m}]-W_k^\spp{m}
    =&\ -\frac{\alpha}{\tau} \mbE[|\nabla_{W^\spp{m}} f(W_k^\spp{m}; \xi)|]\odot W_k^\spp{m} 
    \nonumber\\
    \ne&\ 0
\end{align}
which prevents the weights from converging to the critical point, leading to an error.

\arxivonly{
\begin{remark}[Communication overhead]
    In all three pipeline strategies considered in this paper, inter-accelerator communication consists of passing activation vectors $x^\spp{m}$ and backward signals $\delta^\spp{m}$ between adjacent stages. 
    The per-stage communication cost remains O(D) as M grows. Accordingly, deeper pipelines do not increase the asymptotic per-stage communication burden, although the realized wall-clock benefit still depends on bandwidth and latency remaining secondary to per-stage computation.
    Furthermore, the communication overhead can be reduced by overlapping gradient communication with computation as proposed in \cite{chen2022sapipe, li2018pipe}, thereby hiding inter-stage communication latency within the computation time of each clock cycle.
\end{remark}}
\section{Convergence \& Wall-clock Analysis}
\label{sec.convergence}
This section shows that despite stale gradient issue and the update bias in AIMC accelerators, the asynchronous pipeline converges with the same sample complexity as the counterpart without pipeline or with synchronous pipeline.
By increasing the computation density, asynchronous pipeline parallelism provides notable speedup in terms of wall-clock complexity. 
\subsection{Assumptions}
Before showing the complexity of the pipeline algorithms, we first introduce a series of assumptions on the properties of DNN structures, data distribution, as well as the AIMC accelerators.
$\|\cdotc\|$ is $\ell_2$-norm for vectors and Frobenious norm for matrices; $\|\cdotc\|_\infty$ takes the element with maximal absolute value of vector or matrix.

\begin{assumption}[Unbiasness and bounded variance]
    \label{assumption:noise}
    The stochastic gradient is unbiased and has bounded variance $\sigma^2$.
\end{assumption}
\begin{assumption}[Normalized feature]
    \label{assumption:normalized-feature}
    The feature is normalized, i.e. $\|x\|=1, \forall x\in\ccalX$.
\end{assumption}
\begin{assumption}[Activation function]
    \label{assumption:g-Lip}
    All activation functions are
    (i) Lipschitz continuous, i.e. $\|g^\spp{m}(z)-g^\spp{m}(z')\|\le L_{g,0}\|z-z'\|$;
    (ii) smooth, i.e. $\|\nabla g^\spp{m}(z)-\nabla g^\spp{m}(z')\|\le L_{g,1}\|z-z'\|$;
    (iii) centered, i.e. $g(0)=0$.
\end{assumption}
The conditions (i) and (ii) in Assumption \ref{assumption:g-Lip} hold for many activation functions like Sigmoid, Tanh, or smoothed ReLU functions, while (iii) can be achieved by adding an offset mapping before the activation function if there is a point $x$ such that $g(x)=0$.

\arxivonly{
\begin{remark}[Non-smooth activation functions]
    Assumption \ref{assumption:g-Lip} requires the activation functions to be smooth. For non-smooth activations such as ReLU, the signal smoothness (see Lemmas \ref{lemma:signal-stability-short}, \ref{lemma:objective-L-smooth-short} later) cannot be guaranteed in general. In this case, one can establish that the functions are semi-smooth \cite{zou2019improved}. In the overparameterized regime, such a semi-smoothness property can be established by showing that the training trajectory remains within a small neighborhood of initialization where the set of active neurons is nearly fixed.
    Extending the convergence analysis of {\AnalogSGDAP} to non-smooth activations within this semi-smoothness framework is an interesting direction we leave for future work.
\end{remark}
}

\begin{assumption}[Loss function]
    \label{assumption:loss-Lip}
    The loss $\ell(\cdot, \cdot)$ is:
    (i) smooth with respect to the first input, i.e., $\|\nabla_{1}\ell(y_1, y)-\nabla_{1}\ell(y_2, y)\|\le L_{\ell,1}\|y_1-y_2\|$;
    (ii) lower bounded, i.e., $\ell(y_1, y_2)\ge f^*$.
\end{assumption}
Aside from these assumptions, to avoid saturation during training, following the setting in \cite{wu2024towards}, the saturation degree is assumed to be bounded.
\begin{assumption}[Bounded Saturation]
    \label{assumption:W-bounded}
    There exists a positive constant $W_{\max,\infty}<\tau$ such that all the weight $W^\spp{m}_k$, $m\in[M]$ is $\ell_\infty$-norm bounded, i.e., 
    $\|W^\spp{m}_k\|_\infty\le W_{\max,\infty} < \tau, \forall k\in[K]$. The ratio $W_{\max,\infty}/\tau=\Theta (1)<1$ is termed saturation degree.
\end{assumption}

From the physical perspective, Assumption \ref{assumption:W-bounded} requires that the weight can be represented by a conductance state in the dynamic range $(-\tau, \tau)$. Actually, the analog update dynamic induced by the hardware naturally constrains the weight, as shown by the following theorem. 
\begin{proposition}[{\citep[Theorem 1]{wu2024towards}}]
    \label{theorem:bounded-variable}
    Given $\|W_0\|_\infty< \tau$ and for any sequence $\{\Delta W_k:k\in\naturals\}$, which satisfies $\|\Delta W_k\|_\infty\le \tau$, it holds for any $W_k$ following \eqref{dynamic:analog-update} that $\|W_k\|_\infty< \tau,\,\forall k\in\naturals$.
\end{proposition}

Proposition \ref{theorem:bounded-variable} shows that if the magnitude of the update increment $\Delta W_k$ is bounded by $\tau$, then the entire trajectory remains within the dynamic range $(-\tau,\tau)$. 
\arxivjournal{We further establish a stronger result in Lemma \ref{lemma:bounded-signal}: }{In the long version of the paper \cite{wu2025pipelineArxiv}, we further establish a stronger result: }
with a properly chosen stepsize $\alpha$, the Analog SGD update with asynchronous pipeline in \eqref{dynamic:asyn-update-analog-sgd} guarantees that the trajectory of $W_k$ stays within the dynamic range $(-\tau,\tau)$, without requiring any additional assumption on bounded update increments or bounded delay effects. 
However, we note that 
\arxivjournal{both Proposition \ref{theorem:bounded-variable} and Lemma \ref{lemma:bounded-signal} are}{Proposition \ref{theorem:bounded-variable} is} established only for the finite-time setting, so that they do not rule out the extreme limit case where $\lim_{k\to\infty}|W_k|=\tau$. 
This is because the proof is built upon a worst-case analysis that accounts only for the gradient magnitude, rather than its direction. Under additional over-parameterization assumptions for DNNs \citep{oymak2019overparameterized, liu2022loss, zou2019improved}, it is possible to rule out this extreme case. However, a formal derivation is beyond the scope of this paper, whose primary focus is on the impact of pipeline parallelism. Moreover, in practice, the ratio $W_{\max,\infty}/\tau$ is much smaller than $1$. 
We empirically monitor the saturation degree, defined as $\|W_k^{(m)}\|_\infty/\tau$, throughout the training process. 
As illustrated in Section \ref{section:experiments}, the saturation degree for all layers remains remarkably stable and stays well below 10\% across the full trajectory. Therefore, Assumption \ref{assumption:W-bounded} is sensible in practice. 

\subsection{Iteration complexity of pipeline training}\label{sec:iteration_complexity_pipeline}
Next, we present the convergence rate of analog SGD with pipelines.
In non-convex optimization tasks like the DNN training, 
it is standard to use the stationary measure as $E_K := \frac{1}{K}\sum_{k=0}^{K-1}\mbE[\|\nabla f(W_k)\|^2]$.
As a warm-up, we investigate the convergence of the synchronous pipeline, {\AnalogSGDSP}.

\begin{restatable}[Iteration complexity, \AnalogSGDSP]{theorem}{ThmASGDSyncConvergenceNoncvxLinear}
      \label{theorem:ASGD-sync-convergence-noncvx-linear}
      Under Assumption \ref{assumption:noise}--\ref{assumption:W-bounded}, if the learning rate is set as $\alpha=\sqrt{\frac{B(f(W_0) - f^*)}{\sigma^2L_fK}}$ and $K$ is sufficiently large such that $\alpha\le \frac{1}{L}$,
  it holds that
    \begin{align}
		E_K
        \le 4\sqrt{\frac{(f(W_0) -f^*)\sigma^2L_f}{BK}}
        \frac{1}{1-W_{\max,\infty}^2/\tau^2}
        + \sigma^2 S
        \nonumber
    \end{align}
    where $B$ is the number of micro-batches, and the constant $S$ is given by 
    $S := \frac{W_{\max,\infty}^2 / \tau^2}{1-W_{\max,\infty}^2/\tau^2}$.
\end{restatable}
\arxivjournal{}{See the long version of this paper \cite{wu2025pipelineArxiv} for the proof.}
{\AnalogSGDSP} converges in the rate $\ccalO(\sqrt{{\sigma^2}/({BK})})$, which matches the rate of digital SGD \cite{bottou2018optimization}.
Besides, the asymmetric bias introduces an additional term $\sigma^2 S$, which is independent of $K$ and hence referred to as \emph{asymptotic error}, which also occurs in {\AnalogSGD} \citep{wu2024towards}, and can be mitigated by a well-designed AIMC accelerator or error-correction algorithms \citep{wu2024towards,gokmen2020,gokmen2021}.  
Next, we provide the iteration complexity of \AnalogSGDAP.

\begin{restatable}[Iteration complexity, \AnalogSGDAP]{theorem}{ThmASGDAsyncConvergenceNoncvxLinear}
  \label{theorem:ASGD-async-convergence-noncvx-linear}
  Under Assumption \ref{assumption:noise}--\ref{assumption:W-bounded}, if the learning rate is set as $\alpha=\sqrt{\frac{f(W_0) - f^*}{\sigma^2L_fK}}$ and $K$ is sufficiently large such that $\alpha\le \frac{1}{L_f}$,
  it holds that
    \begin{align}\label{convergence_Ek}
		E_K
        \le &\ 
        4\sqrt{\frac{(f(W_0) -f^*)\sigma^2L_f}{K}}
        \frac{1}{1-(1+u)W_{\max,\infty}^2/\tau^2} 
        \nonumber\\
        &\ 
        + \sigma^2 S' 
        +\ccalO\lp \frac{M^3(1+1/u)}{K}\rp
    \end{align}
    where $u > 0$ is any number and $S'$ denotes the amplification factor given by $S' := \frac{(1+u)W_{\max}^2/\tau^2}{1-(1+u)W_{\max}^2/\tau^2}$.
\end{restatable}
Similar to {\AnalogSGD} or {\AnalogSGDSP}, we have \emph{asymptotic error} $\sigma^2S^\prime$, which can be mitigated by error-correction algorithms \citep{wu2024towards,gokmen2020}. Moreover, since $\ccalO\lp \frac{1+1/u}{K}\rp$ is not the dominant term, one can set a small $u$ so that {\AnalogSGDAP} achieves the same convergence order as the {\AnalogSGD} \citep{wu2024towards}  owing to our proof techniques in Section \ref{sec:proof_sketch}.

\begin{remark}
As a side product, we also provide an enhanced convergence analysis for SGD with asynchronous pipeline training on digital accelerators, detailed in 
\arxivjournal{Appendix \ref{sec:digital-pipeline-SGD}.}{the long version of this paper \cite{wu2025pipelineArxiv}.}
Our analysis, based on a novel Lyapunov function, shows that asynchronous pipeline effects appear only in higher-order terms, which does not slow down digital SGD, and the analysis is tighter than  \citep{xu2020acceleration}. 
\end{remark}

\subsection{Proof sketch of Theorem \ref{theorem:ASGD-async-convergence-noncvx-linear}}
\label{sec:proof_sketch}
In this subsection, we highlight the key steps of the proof towards Theorem \ref{theorem:ASGD-async-convergence-noncvx-linear}. 
We only summarize the key lemmas used for analysis. 
\arxivjournal{The proofs for each lemma can be found in \ref{section:proof-ASGD-async-convergence-noncvx-linear}.}{The proofs for each lemma can be found in the long version of this paper \cite{wu2025pipelineArxiv}.}

\noindent
\textbf{(Step I) Smoothness of the objective.} Before analyzing the training dynamic, we show that the objective is smooth. 
Recall that $\nabla_{W^\spp{m}} f(W; \xi) = \delta^\spp{m} \otimes x^\spp{m}$, we first show that under the assumptions of DNN structures, both $\delta^\spp{m}$ and $x^\spp{m}$ are smooth. Using slightly abusing notations, we denote the forward and backward signals computed by $W$ as $x^\spp{m}$ and $\delta^\spp{m}$, respectively; and denote the ones computed by $\tdW$ as $\tdx^\spp{m}$ and $\tddelta^\spp{m}$, respectively. 
The following lemma shows that for each layer $m$, if we regard the signals as functions of weights, they are smooth functions.
\begin{lemma}[Signal stability]
    \label{lemma:signal-stability-short}
    Denote the forward signals computed by $W$ and $\tdW$ as $x^\spp{m}$ and $\tdx^\spp{m}$, respectively, and denote the backward signals as $\delta^\spp{m}$ and $\tddelta^\spp{m}$, respectively.
    \begin{align}
        \label{inequality:x-stability-short}
        \|\tdx^\spp{m}-x^\spp{m}\|^2 \le&\ C_{x'}^\spp{m} 
        \|\tdW-W\|^2,\\ 
        \label{inequality:z-stability-short}
        \|\tdz^\spp{m}-z^\spp{m}\|^2 \le&\ C_{z'}^\spp{m} 
        \|\tdW-W\|^2,\\ 
        \|\tddelta^\spp{m}-\delta^\spp{m}\|^2 \le&\ C_{\delta'}^\spp{m}
        \|\tdW-W\|^2,
    \end{align}
    where $C_{x'}^\spp{m}$, $C_{z'}^\spp{m}$, and $C_{\delta'}^\spp{m}$ are constants that depend on the DNN architecture.
\end{lemma}
Based on this lemma, we prove the objective is smooth.
\begin{lemma}[Smoothness of the objective]
    \label{lemma:objective-L-smooth-short}
    Under Assumption \ref{assumption:normalized-feature}--\ref{assumption:W-bounded}, the objective is $L_f$-smooth with respect to $W$, i.e. $\forall \xi=(x, y)$, $\|\nabla f(W;\xi)-\nabla f(\tdW;\xi)\| \le L_f\|W-\tdW\|$, where $L_f$ is a constant depending on the DNN archetecture.
\end{lemma}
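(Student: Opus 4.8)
The plan is to unroll the composition defining $f(W;\xi) = \ell(x^\spp{M+1}, y)$ and propagate bounds layer by layer. First I would establish that all intermediate signals stay in a bounded region: starting from $\|x^\spp{1}\| = 1$ (Assumption~\ref{assumption:normalized-feature}) and using $\|W^\spp{m}\|_\infty \le W_{\max}$ (Assumption~\ref{assumption:W-bounded}) together with $g^\spp{m}(0)=0$ and $\|g^\spp{m}(z)\| \le L_{g,0}\|z\|$ (Assumption~\ref{assumption:g-Lip}), a forward induction over \eqref{recursion:forward} gives $\|z^\spp{m}\|, \|x^\spp{m}\| \le C_{\mathrm{fwd}}$ for a constant depending only on $M$, $W_{\max}$, $L_{g,0}$ and the layer widths (the $\ell_\infty$-to-operator-norm conversion contributes a harmless width factor). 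Since $g^\spp{m}$ is $L_{g,0}$-Lipschitz, $\nabla g^\spp{m}$ is bounded by $L_{g,0}$ in operator norm, so a backward induction over \eqref{recursion:backward} starting from $\delta^\spp{M} = \nabla_{x^\spp{M+1}}\ell(x^\spp{M+1},y)\nabla g^\spp{M}(z^\spp{M})$ — whose norm is controlled because $x^\spp{M+1}$ lies in the bounded region and $\nabla_1\ell(\cdot,y)$ is Lipschitz (Assumption~\ref{assumption:loss-Lip}) — yields $\|\delta^\spp{m}\| \le C_{\mathrm{bwd}}$ for all $m$.

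Next I would prove that every signal is Lipschitz in $W$. Fix two weight vectors $W$ and $\tdW$, decorating the latter's signals with tildes. A forward induction shows $\|x^\spp{m} - \tdx^\spp{m}\| \le A_m\|W-\tdW\|$ and $\|z^\spp{m} - \tdz^\spp{m}\| \le A_m'\|W-\tdW\|$: the base case is $x^\spp{1} = \tdx^\spp{1} = x$, and the step uses $z^\spp{m} - \tdz^\spp{m} = (W^\spp{m}-\tdW^\spp{m})x^\spp{m} + \tdW^\spp{m}(x^\spp{m}-\tdx^\spp{m})$ with the boundedness above, followed by $\|x^\spp{m+1}-\tdx^\spp{m+1}\| \le L_{g,0}\|z^\spp{m}-\tdz^\spp{m}\|$. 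A backward induction then gives $\|\delta^\spp{m} - \tddelta^\spp{m}\| \le D_m\|W-\tdW\|$: the recursion $\delta^\spp{m} = (\delta^\spp{m+1}W^\spp{m+1})\nabla g^\spp{m}(z^\spp{m})$ is split into three differences, one from $\delta^\spp{m+1}-\tddelta^\spp{m+1}$ (inductive hypothesis), one from $W^\spp{m+1}-\tdW^\spp{m+1}$ (boundedness of $\delta^\spp{m+1}$ and $\nabla g^\spp{m}$), and one from $\nabla g^\spp{m}(z^\spp{m}) - \nabla g^\spp{m}(\tdz^\spp{m})$, bounded by $L_{g,1}\|z^\spp{m}-\tdz^\spp{m}\|$ via Assumption~\ref{assumption:g-Lip}(ii) and the forward estimate; the base case $\delta^\spp{M}$ uses the $L_{\ell,1}$-smoothness of $\ell$ and the forward Lipschitz bound on $x^\spp{M+1}$.

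Finally I would assemble the per-layer gradient bound. Since $\nabla_{W^\spp{m}}f(W;\xi) = \delta^\spp{m}\otimes x^\spp{m}$, the elementary inequality $\|a\otimes b - \tilde a\otimes\tilde b\| \le \|a-\tilde a\|\,\|b\| + \|\tilde a\|\,\|b-\tilde b\|$ together with the boundedness of $\delta^\spp{m}, x^\spp{m}$ and their Lipschitz estimates gives $\|\nabla_{W^\spp{m}}f(W;\xi) - \nabla_{W^\spp{m}}f(\tdW;\xi)\| \le (D_m C_{\mathrm{fwd}} + C_{\mathrm{bwd}}A_m)\|W-\tdW\|$. Summing squares over $m\in[M]$ and taking the square root yields the claim with $L_f = \big(\sum_{m=1}^{M}(D_m C_{\mathrm{fwd}} + C_{\mathrm{bwd}}A_m)^2\big)^{1/2}$.

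I expect the main obstacle to be the bookkeeping in the backward induction: $\delta^\spp{m}$ depends on $W$ both explicitly through the factor $W^\spp{m+1}$ and implicitly through $z^\spp{m}$ inside $\nabla g^\spp{m}$, so closing the recursion for $D_m$ requires the forward Lipschitz constants to be available already and careful tracking of how the constants compound across the $M$ layers. A secondary subtlety is bounding $\|\nabla_1\ell(x^\spp{M+1},y)\|$ uniformly, which relies on $x^\spp{M+1}$ staying in a compact region and on $\nabla_1\ell(\cdot,y)$ being Lipschitz (implicitly with the labels ranging over a bounded set) so that $C_{\mathrm{bwd}}$ is finite.
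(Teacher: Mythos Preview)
Your proposal is correct and follows essentially the same approach as the paper: the paper first proves boundedness of the forward and backward signals by induction (its Lemma~\ref{lemma:bounded-signal}), then proves Lipschitz continuity of $x^\spp{m}$, $z^\spp{m}$, $\delta^\spp{m}$ in $W$ by the same forward/backward inductions you describe (its Lemma~\ref{lemma:signal-stability}), and finally assembles the per-layer gradient difference via the outer-product splitting exactly as you do. The two subtleties you flag are also the ones the paper handles explicitly---the backward recursion consumes the forward Lipschitz constants, and a separate short lemma bounds $\|\nabla_1\ell(x^\spp{M+1},y)\|$ uniformly using that $x^\spp{M+1}$ lies in a compact set.
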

Lemma \ref{lemma:objective-L-smooth-short} enables us to adapt the tools from \cite{bottou2018optimization} to obtain sufficient descent of the function value.
However, the bias from the asymmetric update and stale signals from asynchronous pipelining accumulate layer by layer within the DNN and eventually lead to an extra term in the upper bound. Next step, we quantify the error of the stale signals.

\noindent
\textbf{(Step II) Error of stale signals.} Similar to Step I, we begin from analyzing the error of $x^\spp{m}$ and $\delta^\spp{m}$. However, the stale signal involves the weights at different iterations and introduces additional errors. 
Intuitively, the error of the forward signal $\tdx^\spp{m}_k$ comes from the stale weight $W^\spp{m}_{k-(M-m)}$ as shown by \eqref{dynamic:asyn-forward}; while the error of the backward signal arises from the error of the forward signal.
It motivated us to bound the error by the delay.
Since the error between latest $W^\spp{m'}_k$ and the stale weight $W^\spp{m'}_{k-(M-m)}$ can be decomposed as
\begin{align}
    W^\spp{m'}_k-W^\spp{m'}_{k-(M-m)} = \!\!\!\sum_{k'=k-(M-m')}^{k-1}\!\!\! (W^\spp{m'}_{k'+1}-W^\spp{m'}_{k'}),
\end{align}
the following accumulated delay of layer $m$ can be used for analyzing the upper bound of the stale signals' error
\begin{align}
    \psi_k := \sum_{m'=1}^{M}\sum_{k'=k-(M-m')}^{k-1} 
    \|\Delta^\spp{m'}_{k'}\|^2
\end{align}
where $\Delta^\spp{m}_{k} := W^\spp{m}_{k+1}-W^\spp{m}_k$ is the difference between consecutive weights at iteration $k$.

\begin{lemma}[Delay error of signals]
    \label{lemma:signal-delay-short}
    Under Assumption \ref{assumption:normalized-feature}--\ref{assumption:loss-Lip}, the difference between the latest and stale signals generated by {\AnalogSGDAP} are bounded, i.e., $\forall m\in[M+1]$
    \begin{align}
        \|\tdx^\spp{m}_k-x^\spp{m}_k\| \le&\ C_x^\spp{m} 
        \psi_k,
        \\ 
        \|\tdz^\spp{m}_k-z^\spp{m}_k\| \le&\ C_z^\spp{m} 
        \psi_k,
        \\ 
        \|\tddelta^\spp{m}_k-\delta^\spp{m}_k\| \le&\ C_{\delta}^\spp{m}
        \psi_k,
    \end{align}
    where $C_{x}^\spp{m}$, $C_{z}^\spp{m}$, and $C_{\delta}^\spp{m}$ are constants that depends on the DNN architecture.
\end{lemma}

\begin{remark}
    The inequalities in Lemma \ref{lemma:signal-stability-short} and \ref{lemma:signal-delay-short} have a similar form, but they provide upper bounds for different error: in Lemma \ref{lemma:signal-stability-short}, two signals are computed on two arbitrary weights $W$ and $\tdW$, while in Lemma \ref{lemma:signal-delay-short}, two signals are generated by {\AnalogSGDAP}.
\end{remark}

\noindent
\textbf{(Step III) Sufficient descent of the objective.} 
Building on the results in Steps I and II, we will analyze the sufficient descent of the objective function. However, due to the asymmetric bias of the AIMC accelerators, the convergence results for analog SGD are still non-trivial. 
\begin{lemma}[Sufficient descent]
    \label{lemma:ASGD-descent}
    Under Assumption \ref{assumption:noise}--\ref{assumption:W-bounded}, the objective function $f(W_k)$ satisfies
    \begin{align}\label{accumu_delay}
        &\ 
        \mathbb{E}_{\xi_k}[f(W_{k+1})] - f(W_k) \\
        \lesssim&\ - \frac{\alpha}{2} \lp 1-(1+u)\frac{\|W_k \|^2_\infty}{\tau^2} \rp\|\nabla f(W_k)\|^2 
        + \alpha^2L_f\sigma^2
        \nonumber 
        \\
        &\ 
        + \frac{\alpha\sigma^2}{2\tau^2}(1+u)\|W_k \|^2_\infty
        - \frac{1}{4\alpha}\ccalO\lp\mathbb{E}_{\xi_k}[\|W_{k+1}-W_k\|^2]\rp
        \bkeq
        + C_\psi \mathbb{E}_{\xi_k}[\psi_k]
        \nonumber
    \end{align}
    where $\lesssim$ performs non-essential approximation on the upper bound of descent.
\end{lemma}

From Lemma \ref{lemma:ASGD-descent}, we notice that the accumulated delays $\psi_k$ appear in the descent lemma, but intuitively they vanish when the algorithm converges according to its definition.

\noindent
\textbf{(Step IV) Construction of the Lyapunov function.} 
To address the key challenge that the accumulated delay error $\psi_k$ appears in the descent lemma (Lemma \ref{lemma:ASGD-descent}) as a positive term that prevents a clean telescoping argument, we augment the objective $f(W_k)$ with an auxiliary function $\Psi_k$ that satisfies two properties: (i) $\Psi_{k+1}-\Psi_{k}$ produces a term that cancels $\psi_k$ in the descent of $f(W_k)$ while other terms can be also canceled; and (ii) $\Psi_k$ itself is non-negative so that it works as an upper bound of $f(W_k)$.
Following this, construct a \textit{Lyapunov function} as
\begin{align}
    \mbV_k := f(W_k) - f^* + C_\psi \Psi_k
\end{align}
where the auxiliary function $\Psi_k$ is defined as 
\begin{align}
    \Psi_k := \sum_{m'=1}^{M}\sum_{k'=k-(M-m')}^{k-1} (k'-k+M)
    \|\Delta^\spp{m'}_{k'}\|^2
\end{align}
and $C_\psi$ is a constant.
The weighting scheme $(k' - k + M)$ in equation (23) is specifically designed to satisfy property (i). 
To see this, we view $\Psi_k$ as a weighted version of $\psi_k$, which yields a telescoping structure: it assigns larger weights to more recent iterates, reflecting that recent weight updates contribute more to the current staleness error. To be specific, the closer $k^\prime$ is to the current iteration $k$, the larger the effect of $\Delta_{k^\prime}^{(m^\prime)}$ on the current iteration; if $k^\prime$ is farther from $k$, much of its delayed effect has already been consumed, so it receives a smaller weight. Enabled by this structure, we can prove the property of $\Psi_{k+1} \leq \Psi_k-\psi_k+M\left\|W_{k+1}-W_k\right\|^2$, which cancels the $\psi_k$ term in Lemma \ref{lemma:ASGD-descent}, while the extra $\left\|W_{k+1}-W_k\right\|^2$ term can be canceled by the $\frac{1}{4\alpha}\ccalO\lp\mathbb{E}_{\xi_k}[\|W_{k+1}-W_k\|^2]\rp$ in \eqref{accumu_delay}.
        
To achieve the dynamics of $\Psi_k$, we derive that
\begin{align}
    \label{inequality:Psi-1}
    \Psi_{k+1}
    = &\ \sum_{m'=1}^{M}\sum_{k'=k+1-(M-m')}^{k}
    (k'-(k+1)+M)\|\Delta^\spp{m'}_{k'}\|^2
    \nonumber
    \\
    =&\ \sum_{m'=1}^{M}\sum_{k'=k+1-(M-m')}^{k}
    (k'-k+M)\|\Delta^\spp{m'}_{k'}\|^2
    \bkeqwn
    -\sum_{m'=1}^{M}\sum_{k'=k+1-(M-m')}^{k}
    \|\Delta^\spp{m'}_{k'}\|^2
    \nonumber
\end{align}
Let the index of the inner summation of the first term in the \ac{RHS} of \eqref{inequality:Psi-1} begin from $k-(M-m')$ instead of $k+1-(M-m')$, we have
\begin{align}
    \label{inequality:Psi-2}
    &\ \sum_{m'=1}^{M}\sum_{k'=k+1-(M-m')}^{k-1}
        (k'-k+M)\|\Delta^\spp{m'}_{k'}\|^2 \\
    =&\ \sum_{m'=1}^{M}\sum_{k'=k-(M-m')}^{k-1}
        (k'-k+M)\|\Delta^\spp{m'}_{k'}\|^2 
    \bkeq
    + M\sum_{m'=1}^{M}\|\Delta^\spp{m'}_{k}\|^2
    - \sum_{m'=1}^{M}m' \|\Delta^\spp{m'}_{k-(M-m')}\|^2
    \nonumber\\
    =&\ \Psi_k + M\|W_{k+1}-W_k\|^2
    - \sum_{m'=1}^{M}m' \|\Delta^\spp{m'}_{k-(M-m')}\|^2
    \nonumber
\end{align}
where the last equality holds by definition of $\Delta^\spp{m'}_{k}$, i.e., 
\begin{align}
    \sum_{m'=1}^{M}\|\Delta^\spp{m'}_{k}\|^2
    =&\ \sum_{m'=1}^{M}\|W^\spp{m'}_{k+1}-W^\spp{m'}_k\|^2 
    \\
    =&\ 
    \|W_{k+1}-W_k\|^2.
    \nonumber
\end{align} 
Plugging \eqref{inequality:Psi-2} into \eqref{inequality:Psi-1} yields
    $\Psi_{k+1} \le \Psi_k - \psi_k + M\|W_{k+1}-W_k\|^2$,
which gives a negative $\psi_k$. 
We then show that the Lyapunov functions have sufficient descent if $C_\psi$ is properly chosen, 
i.e.,
\begin{align}
    &\ 
    \mathbb{E}_{\xi_k}[\mbV_{k+1}] - \mbV_k 
    \\
    \lesssim&\ - \frac{\alpha}{2} \lp 1-(1+u)\frac{\|W_k \|^2_\infty}{\tau^2} \rp\|\nabla f(W_k)\|^2 
    + \alpha^2L_f\sigma^2 
    \nonumber\\
    &\ +\alpha^3(1+\frac{1}{u})(1+W_{\max,\infty}/\tau)^2C_{f}^2 M^3 \sigma^2
    \bkeq
    + \frac{\alpha\sigma^2}{2\tau^2}(1+u)\|W_k \|^2_\infty.
\end{align}
Reorganizing the inequality above yields \Cref{theorem:ASGD-async-convergence-noncvx-linear}. 

\begin{figure*}[t]
	\centering
	\vspace{-0.2cm}
		\includegraphics[width=0.35\linewidth]{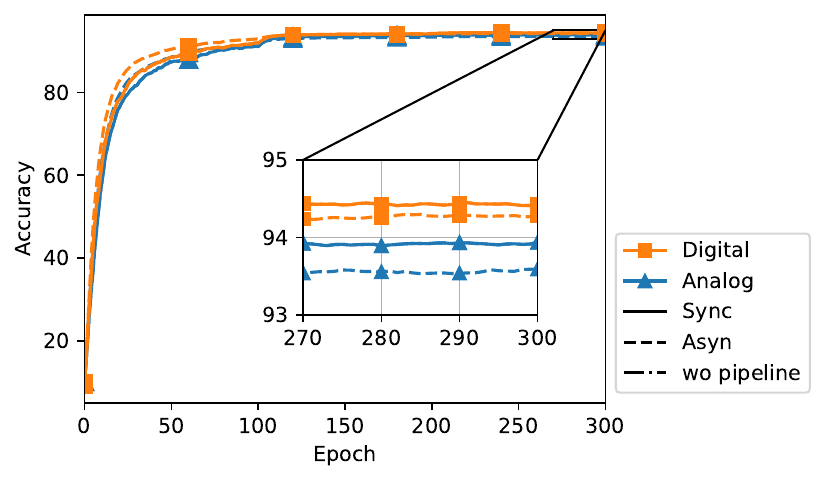}
		\includegraphics[width=0.35\linewidth]{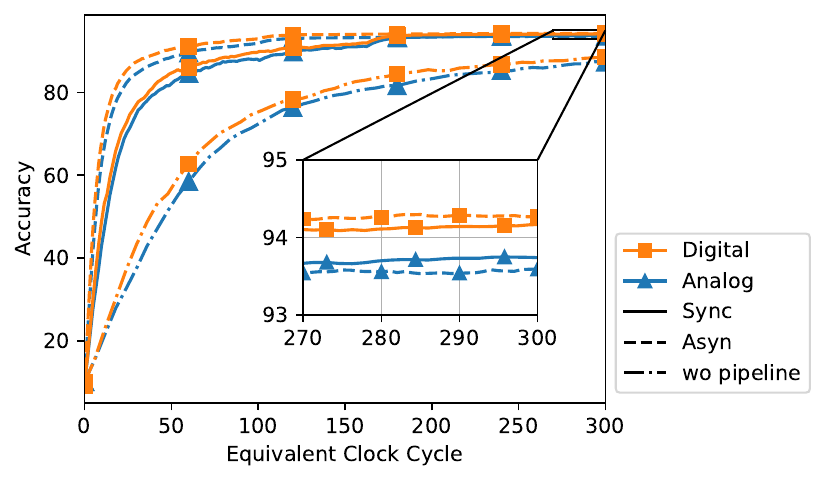}
		\includegraphics[width=0.28\linewidth]{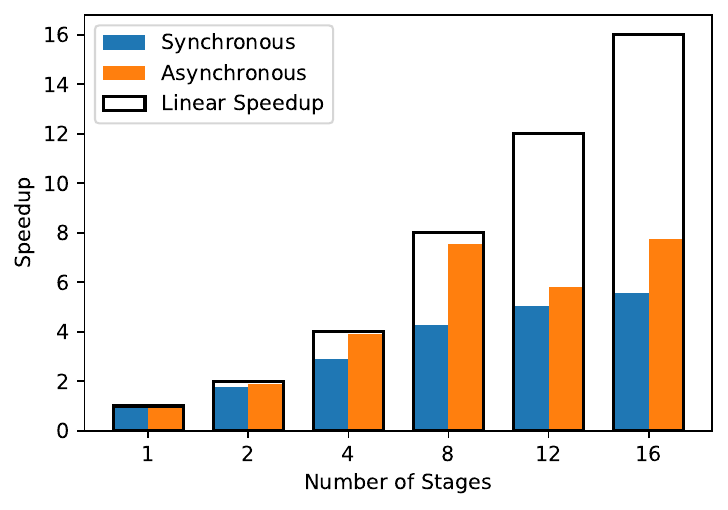}
        \vspace{-0.1cm}
	\caption{
		ResNet10 @ CIFAR10 via vanilla model parallelism without pipeline (wo pipeline), synchronous pipeline (Sync), and asynchronous pipeline (Asyn). 
		\textbf{(Left)} Accuracy-vs-Epoch reflects sample complexity.
		\textbf{(Middle)} Accuracy-vs-Clock cycle.
		\textbf{(Right)} Speedup of the proposed pipeline training methods on AIMC accelerators as the number of stages increases compared with the single machine training case (stage=1). We ignore the communication latency since the asynchronous pipeline does not introduce obvious extra latency.
	}
	\label{fig:resnet10-pipeline-sync-vs-asyn-short}
\end{figure*}

\subsection{Sampling and wall-clock complexity}
\label{sec:sample_clock}
Sample complexity measures the computational cost required to achieve the desired accuracy, whereas in model parallelism, the computation of different stages overlaps. Therefore, we are also interested in the wall-clock time it takes.
In this section, we first study the sample complexity, defined as the minimum number of gradient computations required to achieve the desired error $\varepsilon$. 
Due to asymptotic error, we define the \textit{sample complexity} in analog training as the minimum micro-batch needed for achieving 
$E_K \le \varepsilon + E$,
where $E$ is the asymptotic error, i.e. $\sigma^2 S$ for {\AnalogSGD}/{\AnalogSGDSP} and $\sigma^2 S'$ for {\AnalogSGDAP}, respectively. 
Noticing that the former requires $B$ micro-batches at each iteration while the latter requires $1$, 
we have the following conclusion. 
\begin{corollary}[Sample complexity]
    \label{corollary:sample-complexity}
    The sample complexities of {\AnalogSGD} or {\AnalogSGDSP} are $\ccalO\lp \sigma^2\varepsilon^{-2}\rp$, while that of the {\AnalogSGDAP} is $\ccalO\lp\sigma^2\varepsilon^{-2}+\varepsilon^{-1}\rp$.
\end{corollary}
This shows that three model parallelisms share the same dominant term $\ccalO(\sigma^2\varepsilon^{-2})$ in the sample complexity, while {\AnalogSGDAP} shows down by a non-dominant term $\ccalO(\varepsilon^{-1})$. 
Since the computation is overlapped, the computation density is also needed to determine the number of clock cycles required to achieve $\varepsilon$ accuracy. 

\noindent
\textbf{Computation density.}
In an ideal scenario, a micro-batch requires $2M$ clock cycles for forward and backward passes, distributed equally across $M$ accelerators, taking 2 clock cycles per micro-batch. \emph{Computation density} measures how closely an algorithm approaches this ideal. 
Let $N_{\text{c}}(t)$ be the number of micro-batches processed in $t$ clock cycles. We define computation density as the limit ratio of ideal processing time to real time: $\lim_{t\to\infty} \frac{2N_{\text{c}}(t)}{t}$, where the factor $2$ indicates that micro-batch processing involves forward and backward passes. 
{\AnalogSGD} takes $2M$ cycles per micro-batch, yielding computation density $1/M$. In {\AnalogSGDSP}, it takes $2(M+B-1)$ clock cycles to process $B$ micro-batches, resulting in a density of $B/(M+B-1)$. {\AnalogSGDAP} achieves a density of $1$, as all accelerators remain active during training. 
Similar to sample complexity, \textit{wall-clock complexity} is defined as the minimum clock cycle needed for achieving 
$E_K \le \varepsilon + E$, where $E$ is the asymptotic error. Wall-clock complexity is achieved by dividing the sample complexity by the computation density.
The following corollary summarizes the clock cycle complexity of different pipeline strategies.

\begin{corollary}[Wall-clock complexity]
    \label{corollary:clock-cycle-complexity}
    The wall-clock complexities of {\AnalogSGD}, {\AnalogSGDSP}, and {\AnalogSGDAP} are 
    $\ccalO\lp M\sigma^2\varepsilon^{-2}\rp$, 
    $\ccalO\lp (M+B-1)B^{-1}\sigma^2\varepsilon^{-2}\rp$, 
    and $\ccalO\lp \sigma^2\varepsilon^{-2}+\varepsilon^{-1}\rp$,
    respectively. 
\end{corollary}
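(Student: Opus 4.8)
The plan is to obtain the result by directly combining the sample complexities established in Corollary~\ref{corollary:sample-complexity} with the computation densities derived in the ``Computation density'' paragraph, since clock-cycle complexity is, by definition, the number of micro-batches processed divided by the computation density (up to the factor-$2$ bookkeeping convention, which is absorbed into the $O(\cdot)$). First I would recall the three densities: $1/M$ for vanilla model parallelism without pipelining (each micro-batch occupies the $M$ stages for $2M$ clock cycles), $B/(M+B-1)$ for the synchronous pipeline ($B$ micro-batches complete their forward and backward passes in $2(M+B-1)$ clock cycles), and $1$ for the asynchronous pipeline (every device is busy every clock cycle outside the fill/drain transient).

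Next I would make precise the relation between the clock count $t$, the number $N_{\text{c}}(t)$ of micro-batches processed by time $t$, and the density $\rho := \lim_{t\to\infty} 2N_{\text{c}}(t)/t$. In each of the three schedules $N_{\text{c}}(t)$ is an affine function of $t$ with slope $\rho/2$, plus an additive $O(M+B)$ term coming from the fill and drain phases; hence processing $N$ micro-batches requires $t = \Theta(N/\rho) + O(M+B)$ clock cycles. Since the target accuracy $\varepsilon$ is what drives $N$ to infinity while the $O(M+B)$ transient is independent of $\varepsilon$, the transient is dominated and can be folded into the big-$O$, leaving $t = \Theta(N/\rho)$.

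Finally I would substitute the sample complexities $N = O(\sigma^2\varepsilon^{-2})$ (vanilla and synchronous) and $N = O(\sigma^2\varepsilon^{-2}+\varepsilon^{-1})$ (asynchronous) from Corollary~\ref{corollary:sample-complexity}, and divide by the respective densities $1/M$, $B/(M+B-1)$, and $1$, yielding $O(M\sigma^2\varepsilon^{-2})$, $O((M+B-1)B^{-1}\sigma^2\varepsilon^{-2})$, and $O(\sigma^2\varepsilon^{-2}+\varepsilon^{-1})$. The only slightly delicate point, and it is minor, is the justification that the fill/drain periods contribute only a lower-order additive term in $\varepsilon$; everything else is direct arithmetic on the quantities already in hand.
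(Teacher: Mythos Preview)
Your proposal is correct and follows essentially the same approach as the paper: the paper simply states ``Dividing the sample complexity by the computation density, we know the number of clock cycles they need to achieve the $\varepsilon$ accuracy'' and then reads off the three results, exactly as you do. Your additional care in noting that the fill/drain transient contributes only an $O(M+B)$ additive term independent of $\varepsilon$ is a nice touch of rigor that the paper leaves implicit.
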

\begin{tcolorbox}[emphblock, width=1\linewidth]
    \textbf{Takeaway. } {\AnalogSGDAP} maximizes the computation density of DNN training only at the cost of a higher-order error term compared with the synchronous pipeline.
\end{tcolorbox}

\arxivonly{
\begin{remark}[Load balancing in pipeline parallelism]
    The computation density analysis above assumes that the DNN model is evenly partitioned across $M$ pipeline stages, such that each stage incurs the same computational cost per micro-batch. In practice, uneven partitioning would introduce pipeline bubbles at the bottleneck stage, reducing the effective computation density. 
    However, this load-balancing issue is orthogonal to the convergence analysis of this paper. Theorems \ref{theorem:ASGD-sync-convergence-noncvx-linear} and \ref{theorem:ASGD-async-convergence-noncvx-linear}, as well as Corollary \ref{corollary:sample-complexity}, remain valid regardless of how the model is partitioned, as they bound the sample complexity rather than the wall-clock complexity directly. For Corollary \ref{corollary:clock-cycle-complexity}, a computation density of 1 is an achievable upper bound under balanced partitioning. If the partition is unbalanced, the density should be scaled down by a partition-specific factor, which approaches 1 as partitioning becomes more balanced. In practice, balanced partitioning can be achieved using existing pipeline-parallelism frameworks such as \citep{huang2019gpipe,narayanan2019pipedream, shoeybi2019megatron}; a detailed investigation of load-balancing strategies for AIMC pipelines is beyond the scope of this paper.
\end{remark}

\begin{remark}[Comparison with digital-SGD pipelines]
    In fact, the digital-SGD counterpart is a special case of \eqref{dynamic:analog-update} by forcing $\tau\to\infty$, which removes the asymmetric bias term.
    In particular, the analog-specific denominator inflation, $\frac{1}{1-W_{\max,\infty}^2/\tau^2}$ and $\frac{1}{1-(1+u)W_{\max,\infty}^2/\tau^2}$, and asymptotic error terms $S$ and $S'$ vanish in the digital limit.
    Consequently, the pipelining benefit established in Corollaries \ref{corollary:sample-complexity} and \ref{corollary:clock-cycle-complexity} is not unique to AIMC dynamics and can be reproduced in the digital domain. 
\end{remark}
}

\begin{table*}[t]
	\centering
    \begin{tabular}{cc|ccc|ccc}
        \hline\hline
        & & \multicolumn{3}{c|}{CIFAR10} & \multicolumn{3}{c}{CIFAR100}\\
        & & speedup & accuracy (mid) & accuracy (end)
        & speedup & accuracy (mid) & accuracy (end) \\
        \hline
        \multirow{3}{*}{Digital} & w.o. pipeline & 1.00 & 88.63 \stdv{$\pm$ 0.70} & 94.32 \stdv{$\pm$ 0.12}& 1.00 & 67.85 \stdv{$\pm$ 0.52} & 73.99 \stdv{$\pm$ 0.19} \\
        & synchronous & 3.69 & 94.17 \stdv{$\pm$ 0.15} & 94.32 \stdv{$\pm$ 0.12}& 3.69 & 73.91 \stdv{$\pm$ 0.19} & 73.99 \stdv{$\pm$ 0.19} \\
        & asynchronous & \textbf{6.18} & 94.25 \stdv{$\pm$ 0.17} & 94.25 \stdv{$\pm$ 0.16}& \textbf{8.19} & 74.64 \stdv{$\pm$ 0.48} & 74.64 \stdv{$\pm$ 0.53}  \\
        \hline
        \multirow{3}{*}{Analog} & w.o. pipeline & 1.00 & 88.04 \stdv{$\pm$ 0.58} & 93.95 \stdv{$\pm$ 0.07} & 1.00 & 61.35 \stdv{$\pm$ 0.73} & 72.29 \stdv{$\pm$ 0.30}\\
        & synchronous & 3.69 & 93.75 \stdv{$\pm$ 0.08} & 93.95 \stdv{$\pm$ 0.07} & 3.69 & 72.14 \stdv{$\pm$ 0.31} & 72.29 \stdv{$\pm$ 0.30}\\
        & asynchronous & \textbf{5.66} & 93.60 \stdv{$\pm$ 0.05} & 93.60 \stdv{$\pm$ 0.07} & \textbf{6.17} & 72.69 \stdv{$\pm$ 0.30} & 72.64 \stdv{$\pm$ 0.21}\\
        \hline\hline
    \end{tabular}
    \caption{
        ResNet10 @ CIFAR10/CIFAR100 via vanilla model parallelism without pipeline (wo pipeline), synchronous pipeline (Sync), and asynchronous pipeline (Asyn). 
        The speedup is computed by comparing the number of clock cycles required to achieve 93\% accuracy. 
        The test accuracy and standard deviation at clock cycle 300 (mid) and epoch 300 (end) are reported.
    }
    \label{table:resnet10-cifar10/100-pipeline-sync-vs-asyn}
    
	\vspace{0.2cm}
	\begin{tabular}{cc|ccc|ccc}
		\hline\hline
		& \multirow{2}{*}{$B_{\text{micro}}$}
		& \multicolumn{3}{c|}{Synchronous} 
		& \multicolumn{3}{c}{Asynchronous} \\ 
		& 
		& speedup & acc (mid) & acc (end)
		& speedup & acc (mid) & acc (end) \\
		\hline
		\multirow{3}{*}{D} 
		& $16$ & 3.69 & \multirow{4}{*}{94.17 \stdv{$\pm$ 0.15}} & \multirow{4}{*}{94.32 \stdv{$\pm$ 0.12}}
		& 6.18 & 94.25 \stdv{$\pm$ 0.17} & 94.25 \stdv{$\pm$ 0.16} \\
		& $32$ & 2.67 & 
		&
		& 6.06 & 94.27 \stdv{$\pm$ 0.18} & 94.28 \stdv{$\pm$ 0.20} \\
		& $64$ & 1.71 & 
		&
		& 6.00 & 94.09 \stdv{$\pm$ 0.13} & 94.08 \stdv{$\pm$ 0.15} \\
		& $128$ & 1.00 & 
		&
		& 5.83 & 93.68 \stdv{$\pm$ 0.39} & 93.67 \stdv{$\pm$ 0.38}  \\
		\hline
		\multirow{3}{*}{A} 
		& $16$ & 3.69 & 93.75 \stdv{$\pm$ 0.08} & 93.95 \stdv{$\pm$ 0.07} 
		& 5.66 & 93.60 \stdv{$\pm$ 0.05} & 93.60 \stdv{$\pm$ 0.07} \\
		& $32$ & 2.67 & 93.57 \stdv{$\pm$ 0.12} & 93.95 \stdv{$\pm$ 0.07} 
		& 5.88 & 93.62 \stdv{$\pm$ 0.13} & 93.61 \stdv{$\pm$ 0.11}\\
		& $64$ & 1.71 & 90.86 \stdv{$\pm$ 0.44} & 93.95 \stdv{$\pm$ 0.07} 
		& 5.88 & 93.58 \stdv{$\pm$ 0.14} & 93.60 \stdv{$\pm$ 0.17} \\
		& $128$ & 1.00 & 88.04 \stdv{$\pm$ 0.58} & 93.95 \stdv{$\pm$ 0.07} 
		& 4.89 & 93.36 \stdv{$\pm$ 0.06} & 93.37 \stdv{$\pm$ 0.06} \\
		\hline\hline
	\end{tabular}
	\caption{ResNet10 @ CIFAR10 under different micro-batch size setting. Test accuracy and standard deviation at clock cycle 300 (mid) and epoch 300 (end) are reported.
		The speedup is computed by comparing the number of clock cycles required to achieve 93\% target accuracy. 
		The accuracy of synchronous is not affected by micro-batch size.
	}
	\label{table:performance_metrics2}
	\vspace{-0.2cm}
\end{table*}
\section{Numerical Simulations}
\label{section:experiments}
In this section, we aim to numerically verify the theoretical benefits of {\AnalogSGDAP}. We test the analog methods in an open-source \ac{AIHWKIT}~\citep{Rasch2021AFA}; see \url{github.com/IBM/aihwkit}.
The pipeline parallelism is also simulated on a single GPU.
To get the \emph{equivalent clock cycle} in the figures, we map the epoch to clock cycle by dividing the computation density.

\noindent
\textbf{Simulation settings.}
In the simulations, we train ResNet models on two benchmark datasets: CIFAR10 and CIFAR100. 
The CIFAR10 dataset consists of 60,000 32$\times$32 color images across 10 classes, with each class representing a distinct category such as airplanes, cars, and birds. The CIFAR100 dataset is similar in format but includes 100 classes. 
Each model was trained over 300 epochs using a mini-batch size of $B_{\text{mini}}=128$ and micro-batch-size $B_{\text{micro}}=16$ by default. 
The initial learning rate of $\alpha=0.1$ was employed, while for Resnet34 with stage numbers $M=12, 16$, the initial learning rate is $\alpha=0.01$.
The learning rate $\alpha$ was reduced by a factor of 10 every 100 epochs. 
To further enhance the model’s ability to generalize across diverse images, we applied data augmentation techniques such as random cropping, horizontal flipping, and color jittering \cite{cubuk2018autoaugment}.

\noindent
\textbf{AIMC hardware configurations.}
The simulations use the \texttt{SoftBoundsReference} device in AIHWKIT, which directly instantiates the asymmetric update model in \eqref{analog-update} with $\tau=0.9$ by default. Beyond the asymmetric update, the simulations additionally incorporate a comprehensive set of hardware imperfections to more accurately reflect real device behavior, including output noise (0.5 \% of the quantization bin width), quantization and clipping (output range set to 20, output noise 0.1, and input and output quantization to 8 bits). 
In addition, 30\% device and cycle variation are introduced. 
Noise and bound management techniques are used in~\cite{gokmen2017cnn}.
A learnable scaling factor is applied after each analog layer and updated via digital SGD.

\noindent
\textbf{Sample complexity and wall-clock complexity.}
We train a ResNet10 on CIFAR10.
We map the first 5 stages on digital accelerators while the last one onto an AIMC (analog) accelerator. 
The results on the left of Fig. \ref{fig:resnet10-pipeline-sync-vs-asyn-short} show that the final accuracy differences between the three strategies are below 1\%,
implying all of these algorithms have the same sample complexity and align with Theorem \ref{theorem:ASGD-sync-convergence-noncvx-linear}--\ref{theorem:ASGD-async-convergence-noncvx-linear} and Corollary \ref{corollary:sample-complexity}. Moreover, as indicated by Corollary \ref{corollary:clock-cycle-complexity}, the walk-clock complexity of the {\AnalogSGDAP} pipeline is the lowest owing to the highest computation density. 
The results on the CIFAR100 training are reported in Table \ref{table:resnet10-cifar10/100-pipeline-sync-vs-asyn}, whose conclusion is the same as above.

\noindent
\textbf{Effect of stage/accelerator number. }
We also explored the impact of increasing the number of accelerators on the speedup achieved through pipeline parallelism; see Fig. 
\ref{fig:resnet10-pipeline-sync-vs-asyn-short}, Right.
In the simulations, we train a ResNet34 model \citep{wu2019wider} on the CIFAR10 dataset. The model is separated into $M=1, 2, 4, 8, 12, 16$ stages. 
Assuming the model can be evenly divided into multiple stages, and the equivalent clock cycle needed to compute the entire model is fixed.
The results reveal that the asynchronous pipeline achieves linear speedup at least within the range of $1-8$ accelerators.

As the $M$ increases, the speedup deviates from linear scaling, which can be explained by Theorem \ref{theorem:ASGD-async-convergence-noncvx-linear}. As shown by \eqref{convergence_Ek}, the iteration complexity of {\AnalogSGDAP} contains a higher-order term $O(M^3(1+1/u)/K)$, whose coefficient depends on $M$.  Although this term is higher-order and does not affect the dominant sample complexity $O(\sigma^2\varepsilon^{-2})$, it has a non-negligible practical impact when $M$ increases while iterations $K$ remain small. 
Consequently, it results in the observed deviation from linear speedup. Developing acceleration strategies to mitigate this depth-dependent overhead and recover linear speedup at larger scales ($M > 8$) is an interesting direction we leave for future work.

\noindent
\textbf{Effect of micro-batch size. }
We also report the effect of micro-batch sizes in Table \ref{table:performance_metrics2}. We set the mini-batch size to $B_{\text{mini}}=128$ and vary the micro-batch size. For synchronous pipeline, increasing the batch size improves the speedup on both digital and analog accelerators. Moreover, asynchronous pipeline training consistently yields greater speedup than the synchronous version. The trade-off is that the asynchronous pipeline sacrifices accuracy and can only reach lower accuracy when the training time is sufficiently long.

\noindent
\textbf{Effect of hardware imperfections.}
To disentangle the contributions of analog imperfections and asynchronous pipeline staleness to the convergence error, we compare the performance of digital and analog pipelines under identical configurations, as shown in Tables \ref{table:resnet10-cifar10/100-pipeline-sync-vs-asyn} and \ref{table:performance_metrics2}. The results in Table \ref{table:resnet10-cifar10/100-pipeline-sync-vs-asyn} show that (i) the accuracy gap between digital and analog is consistent ($1\%-2\%$) across all three pipeline strategies; (ii) within each hardware setting (digital or analog), the accuracy difference between synchronous and asynchronous pipelines is negligible at convergence ($<1\%$). These results indicate that pipeline staleness introduces only marginal additional error, and the gap is attributable to analog imperfections.
Table \ref{table:performance_metrics2} further shows that this conclusion holds robustly across different micro-batch sizes: the accuracy gap between digital and analog remains stable regardless of micro-batch size, while the gap between synchronous and asynchronous pipelines remains small in both settings. Taken together, these results confirm that analog imperfection is the dominant source of convergence error, while the asynchronous pipeline staleness has a secondary effect.

\noindent
\textbf{Effect of asymmetry parameter $\tau$.}
We further examine how the asymmetry parameter $\tau$ affects both convergence and acceleration; see Fig. \ref{fig:tau-convergence-speedup-cifar10-dtod-saturation-combined}a/b.
The left figure shows that when $\tau>0.5$, {\AnalogSGDAP} still achieves above 90\% test accuracy.
Furthermore, as shown in the right figure, {\AnalogSGDAP} maintains a speedup well above the synchronous pipeline baseline (shown as the dashed line at $3.69\times$) for all $\tau \ge 0.4$. This confirms that the wall-clock gains of the asynchronous pipeline are preserved across a wide range of $\tau$ values.
In contrast, as $\tau$ decreases, training quality degrades significantly, with asynchronous training affected more strongly.
\begin{figure*}[t]
    \centering
    \vspace{-0.1cm}
    \subcaptionbox{}{\includegraphics[width=0.23\linewidth]{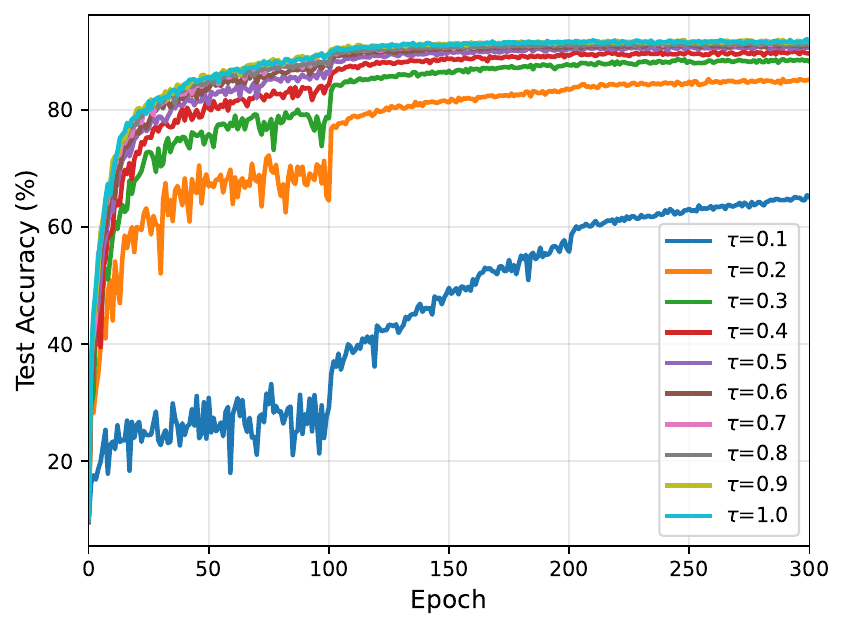}\vspace{-0.7em}}
    \subcaptionbox{}{\includegraphics[width=0.23\linewidth]{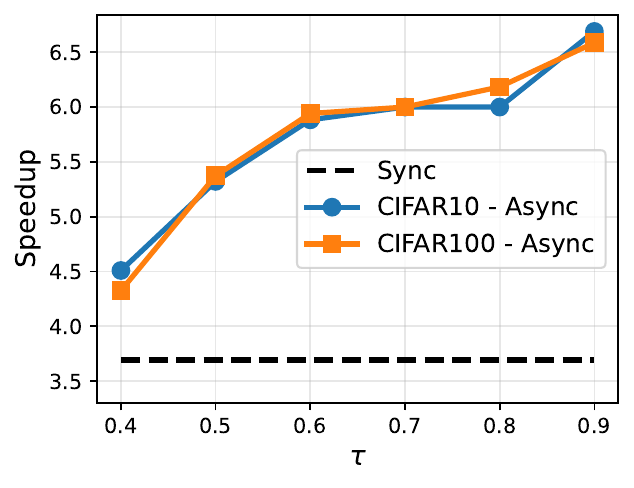}\vspace{-0.7em}}
    \subcaptionbox{}{\includegraphics[width=0.23\linewidth]{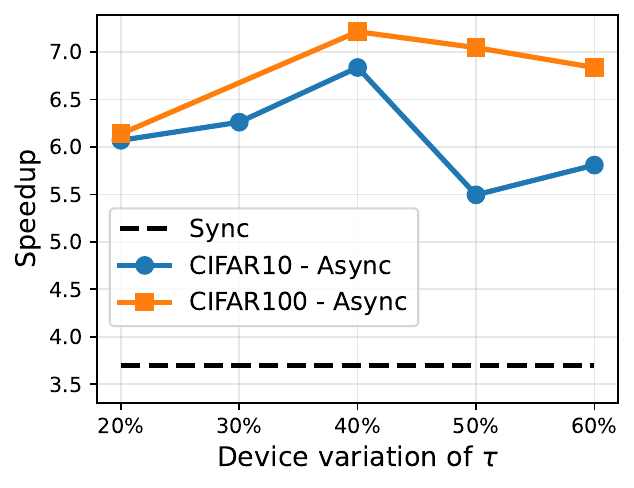}\vspace{-0.7em}}
    \subcaptionbox{}{\includegraphics[width=0.28\linewidth]{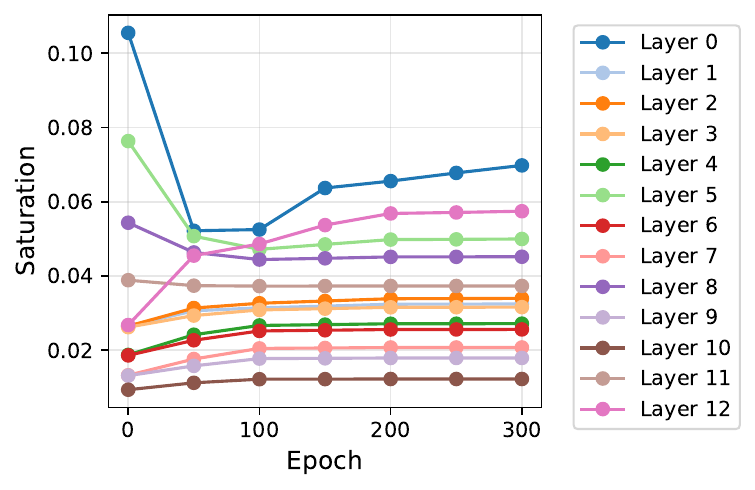}\vspace{-0.7em}}
    \vspace{-0.1cm}
    \caption{
        {\AnalogSGDAP} for CIFAR10 with $B_{\text{micro}}=16$ and delay $=0$.
        \textbf{(a)} Test accuracy versus $\tau$.
        \textbf{(b)} Speedup versus $\tau$.
        \textbf{(c)} Speedup versus device-to-device variation.
        \textbf{(d)} Saturation degree evolution over the trajectory.}
        \label{fig:tau-convergence-speedup-cifar10-dtod-saturation-combined}
\end{figure*}

\noindent
\textbf{Effect of device variation on speedup.}
We next evaluate whether device-to-device variation changes the acceleration trend; see 
Fig. \ref{fig:tau-convergence-speedup-cifar10-dtod-saturation-combined}c.
In the simulation, we set $\tau=1.0$, $B_{\text{micro}}=16$, and delay $=0$. 
Across the typical variation levels tested here, the asynchronous pipeline speedup changes only mildly, suggesting that device variation is not the primary limiter of pipeline acceleration in this regime.

\noindent
\textbf{Saturation degree evolution along training trajectory.}
We examine the saturation degree over the full training trajectory to justify the bounded saturation condition; see Fig. 
\ref{fig:tau-convergence-speedup-cifar10-dtod-saturation-combined}d. 
The saturation degree of $m$-th layer is defined as $\|W_k^\spp{m}\|_\infty / \tau$.
To ensure the formulation is well-defined, we set the device variation to 0.
The results show that the saturation remains nearly constant throughout training, with variations less than 10\%.
This observation is consistent with existing results, such as \cite{zou2019improved}, which demonstrate that the trajectory is bounded in a neighborhood of the initial point. 
It provides empirical evidence for the Assumption \ref{assumption:W-bounded} in the studied configuration; a general theoretical characterization is left for future work.

\begin{table*}[t]
		\vspace{-0.2cm}
		\centering
\end{table*}
\section{Conclusions and Limitations} 
\label{sec:conclusion_limit}
This paper theoretically investigates the pipeline algorithms for AIMC accelerators with respect to convergence and computation complexity. 
To unlock the full potential of pipeline training, asynchronous pipeline discards the synchronization at the end of one mini-batch to maximize the accelerator usage. We theoretically demonstrate that {\AnalogSGDAP} achieves maximal computational density with only a slight sacrifice in sample complexity through higher-order terms, which brings extraordinary speedup on wall-clock complexity.
Simulations validate our theoretical findings. 
A limitation of this work is the lack of real hardware validation, but this is a necessary step for every new technology and is widely used in current research regarding the algorithm development for AIMC accelerators. Future work should verify its efficiency as the technology becomes mature. 
\balance
\bibliographystyle{IEEEtran}
\bibliography{
    bibs/abrv,
    bibs/optimization,
    bibs/analog,
    bibs/textbook,
    bibs/LLM,
    bibs/training,
    bibs/publication,
    bibs/DL-theory
}

@inproceedings{oymak2019overparameterized,
  title={Overparameterized nonlinear learning: Gradient descent takes the shortest path?},
  author={Oymak, Samet and Soltanolkotabi, Mahdi},
  booktitle={International Conference on Machine Learning},
  pages={4951--4960},
  year={2019},
  organization={PMLR}
}

@article{liu2022loss,
  title={Loss landscapes and optimization in over-parameterized non-linear systems and neural networks},
  author={Liu, Chaoyue and Zhu, Libin and Belkin, Mikhail},
  journal={Applied and Computational Harmonic Analysis},
  volume={59},
  pages={85--116},
  year={2022}
}

@article{zou2019improved,
  title={An improved analysis of training over-parameterized deep neural networks},
  author={Zou, Difan and Gu, Quanquan},
  journal={Advances in neural information processing systems},
  volume={32},
  year={2019}
}

@inproceedings{xu2020acceleration,
  title={On the acceleration of deep learning model parallelism with staleness},
  author={Xu, An and Huo, Zhouyuan and Huang, Heng},
  booktitle=CVPR,
  pages={2088--2097},
  year={2020}
}

@article{wu2019wider,
  title={Wider or deeper: Revisiting the resnet model for visual recognition},
  author={Wu, Zifeng and Shen, Chunhua and Van Den Hengel, Anton},
  journal={Pattern recognition},
  volume={90},
  pages={119--133},
  year={2019}
}

@article{touvron2023llama2,
  title={Llama 2: Open foundation and fine-tuned chat models},
  author={Touvron, Hugo and Martin, Louis and Stone, Kevin and Albert, Peter and Almahairi, Amjad and Babaei, Yasmine and Bashlykov, Nikolay and Batra, Soumya and Bhargava, Prajjwal and Bhosale, Shruti and others},
  journal={arXiv preprint arXiv:2307.09288},
  year={2023}
}

@article{huang2019gpipe,
  title={{Gpipe}: Efficient training of giant neural networks using pipeline parallelism},
  author={Huang, Yanping and Cheng, Youlong and Bapna, Ankur and Firat, Orhan and Chen, Dehao and Chen, Mia and Lee, HyoukJoong and Ngiam, Jiquan and Le, Quoc V and Wu, Yonghui and others},
  journal={Advances in neural information processing systems},
  volume={32},
  year={2019}
}

@inproceedings{ren2021zero,
  title={Zero-offload: Democratizing billion-scale model training},
  author={Ren, Jie and Rajbhandari, Samyam and Aminabadi, Reza Yazdani and Ruwase, Olatunji and Yang, Shuangyan and Zhang, Minjia and Li, Dong and He, Yuxiong},
  booktitle={USENIX Annual Technical Conference},
  pages={551--564},
  year={2021}
}

@article{goyal2017accurate,
  title={Accurate, large minibatch {SGD}: Training {Imagenet} in 1 hour},
  author={Goyal, Priya and Doll{\'a}r, Piotr and Girshick, Ross and Noordhuis, Pieter and Wesolowski, Lukasz and Kyrola, Aapo and Tulloch, Andrew and Jia, Yangqing and He, Kaiming},
  journal={arXiv preprint arXiv:1706.02677},
  year={2017}
}

@article{zhao2023pytorch,
  title={Pytorch {FSDP}: experiences on scaling fully sharded data parallel},
  author={Zhao, Yanli and Gu, Andrew and Varma, Rohan and Luo, Liang and Huang, Chien-Chin and Xu, Min and Wright, Less and Shojanazeri, Hamid and Ott, Myle and Shleifer, Sam and others},
  journal={arXiv preprint arXiv:2304.11277},
  year={2023}
}

@inproceedings{you2020Large,
    title={Large Batch Optimization for Deep Learning: Training {BERT} in 76 minutes},
    author={Yang You and Jing Li and Sashank Reddi and Jonathan Hseu and Sanjiv Kumar and Srinadh Bhojanapalli and Xiaodan Song and James Demmel and Kurt Keutzer and Cho-Jui Hsieh},
    booktitle={International Conference on Learning Representations},
    year={2020}
}

@string{CVPR    = "IEEE Conference on Computer Vision and Pattern Recognition"}

@string{arxiv = "arXiv preprint arXiv: "}

@article{gokmen2016acceleration,
  title={Acceleration of deep neural network training with resistive cross-point devices: Design considerations},
  author={Gokmen, Tayfun and Vlasov, Yurii},
  journal={Frontiers in neuroscience},
  volume={10},
  pages={333},
  year={2016}
}

@ARTICLE{gokmen2021,
    AUTHOR={Gokmen, Tayfun},
    TITLE={Enabling Training of Neural Networks on Noisy Hardware},
    JOURNAL={Frontiers in Artificial Intelligence},
    VOLUME={4},
    YEAR={2021},
    pages = {1--14},
}

@article{gokmen2017cnn,
  title={Training Deep Convolutional Neural Networks with Resistive Cross-Point Devices},
  author={Gokmen, Tayfun and Onen, Murat and Haensch, Wilfried},
  journal={Frontiers in neuroscience},
  volume={11},
  pages={538},
  year={2017},
  publisher={Frontiers}
}

@article{gokmen2020,
  title={Algorithm for Training Neural Networks on Resistive Device Arrays},
  author={Gokmen, Tayfun and Haensch, Wilfried},
  journal={Frontiers in Neuroscience},
  volume={14},
  year={2020},
  publisher={Frontiers Media SA}
}

@article{jain2019neural,
  title={Neural network accelerator design with resistive crossbars: Opportunities and challenges},
  author={Jain, Shubham and others},
  journal={IBM Journal of Research and Development},
  volume=63,
  number=6,
  pages={10--1},
  year=2019,
  publisher={IBM}
}

@article{Rasch2021AFA,
    title={A Flexible and Fast {PyTorch} Toolkit for Simulating Training and Inference on Analog Crossbar Arrays},
    author={Rasch, Malte J and Moreda, Diego and Gokmen, Tayfun and Le Gallo, Manuel and Carta, Fabio and Goldberg, Cindy and El Maghraoui, Kaoutar and Sebastian, Abu and Narayanan, Vijay},
    journal={IEEE International Conference on Artificial Intelligence Circuits and Systems},
    year={2021},
    pages={1-4}
}

@article{onen2022neural,
  title={Neural Network Training with Asymmetric Crosspoint Elements},
  author={Onen, Murat and Gokmen, Tayfun and Todorov, Teodor K and Nowicki, Tomasz and Del Alamo, Jes{\'u}s A and Rozen, John and Haensch, Wilfried and Kim, Seyoung},
  journal={Frontiers in artificial intelligence},
  volume={5},
  year={2022},
  publisher={Frontiers Media SA}
}

@article{rasch2024fast,
  title={Fast and robust analog in-memory deep neural network training},
  author={Rasch, Malte J and Carta, Fabio and Fagbohungbe, Omobayode and Gokmen, Tayfun},
  journal={Nature Communications},
  volume={15},
  number={1},
  pages={7133--7147},
  year={2024},
  publisher={Nature Publishing Group UK London}
}

@article{burr2015experimental,
  title={Experimental demonstration and tolerancing of a large-scale neural network (165 000 synapses) using phase-change memory as the synaptic weight element},
  author={Burr, Geoffrey W and Shelby, Robert M and Sidler, Severin and Di Nolfo, Carmelo and Jang, Junwoo and Boybat, Irem and Shenoy, Rohit S and Narayanan, Pritish and Virwani, Kumar and Giacometti, Emanuele U and others},
  journal={IEEE Transactions on Electron Devices},
  volume={62},
  number={11},
  pages={3498--3507},
  year={2015},
  publisher={IEEE}
}

@article{shafiee2016isaac,
    title={{ISAAC}: A convolutional neural network accelerator with in-situ analog arithmetic in crossbars},
    author={Shafiee, Ali and Nag, Anirban and Muralimanohar, Naveen and Balasubramonian, Rajeev and Strachan, John Paul and Hu, Miao and Williams, R Stanley and Srikumar, Vivek},
    journal={ACM SIGARCH Computer Architecture News},
    volume={44},
    number={3},
    pages={14--26},
    year={2016},
    publisher={ACM}
}

@inproceedings{song2017pipelayer,
    title={Pipelayer: A pipelined {ReRAM}-based accelerator for deep learning},
    author={Song, Linghao and Qian, Xuehai and Li, Hai and Chen, Yiran},
    booktitle={IEEE international symposium on high performance computer architecture},
    pages={541--552},
    year={2017},
    organization={IEEE}
}

@article{yao2017face,
  title={Face classification using electronic synapses},
  author={Yao, Peng and Wu, Huaqiang and Gao, Bin and Eryilmaz, Sukru Burc and Huang, Xueyao and Zhang, Wenqiang and Zhang, Qingtian and Deng, Ning and Shi, Luping and Wong, H-S Philip and others},
  journal={Nature communications},
  volume={8},
  number={1},
  pages={15199},
  year={2017},
  publisher={Nature Publishing Group UK London}
}

@article{wang2019situ,
  title={In situ training of feed-forward and recurrent convolutional memristor networks},
  author={Wang, Zhongrui and Li, Can and Lin, Peng and Rao, Mingyi and Nie, Yongyang and Song, Wenhao and Qiu, Qinru and Li, Yunning and Yan, Peng and Strachan, John Paul and others},
  journal={Nature Machine Intelligence},
  volume={1},
  number={9},
  pages={434--442},
  year={2019},
  publisher={Nature Publishing Group UK London}
}

@article{bottou2018optimization,
    title={Optimization methods for large-scale machine learning},
    author={Bottou, L{\'e}on and Curtis, Frank E and Nocedal, Jorge},
    journal={SIAM review},
    volume={60},
    number={2},
    pages={223--311},
    year={2018},
    publisher={SIAM}
}

@article{sun2017asynchronous,
  title={Asynchronous coordinate descent under more realistic assumptions},
  author={Sun, Tao and Hannah, Robert and Yin, Wotao},
  journal={Advances in Neural Information Processing Systems},
  volume={30},
  year={2017}
}

@inproceedings{wu2024towards,
    title={Towards Exact Gradient-based Training on Analog In-memory Computing},
    author={Wu, Zhaoxian and Gokmen, Tayfun and Rasch, Malte J and Chen, Tianyi},
    booktitle={Advances in Neural Information Processing Systems},
    year={2024}
}

@inproceedings{wu2025analog,
    title={Analog In-memory Training on General Non-ideal Resistive Elements: The Impact of Response Functions},
    author={Wu, Zhaoxian and Xiao, Quan and Gokmen, Tayfun and Fagbohungbe, Omobayode and Chen, Tianyi},
    booktitle={Advances in Neural Information Processing Systems},
    year={2025}
}

@inproceedings{li2025memory,
  title={In-memory Training on Analog Devices with Limited Conductance States via Multi-tile Residual Learning},
  author={Li, Jindan and Wu, Zhaoxian and Liu, Gaowen and Gokmen, Tayfun and Chen, Tianyi},
  booktitle={International Conference on Artificial Intelligence and Statistics},
  year={2025}
}

@article{wu2025pipelineArxiv,
    title={On the Convergence Theory of Pipeline Gradient-based Analog In-memory Training},
    author={Wu, Zhaoxian and Xiao, Quan and Gokmen, Tayfun and Tsai, Hsinyu and Maghraoui, Kaoutar El and Chen, Tianyi},
    journal={arXiv preprint arXiv:2410.15155},
    year={2024}
}

@article{chen2022sapipe,
title={{SApipe}: Staleness-aware pipeline for data parallel {DNN} training},
author={Chen, Yangrui and Xie, Cong and Ma, Meng and Gu, Juncheng and Peng, Yanghua and Lin, Haibin and Wu, Chuan and Zhu, Yibo},
journal={Advances in Neural Information Processing Systems},
volume={35},
pages={17981--17993},
year={2022}
}

@inproceedings{huo2018decoupled,
title={Decoupled parallel backpropagation with convergence guarantee},
author={Huo, Zhouyuan and Gu, Bin and Huang, Heng and others},
booktitle={International Conference on Machine Learning},
pages={2098--2106},
year={2018},
organization={PMLR}
}

@article{kosson2021pipelined,
title={Pipelined backpropagation at scale: training large models without batches},
author={Kosson, Atli and Chiley, Vitaliy and Venigalla, Abhinav and Hestness, Joel and Koster, Urs},
journal={Proceedings of Machine Learning and Systems},
volume={3},
pages={479--501},
year={2021}
}

@inproceedings{narayanan2019pipedream,
title={PipeDream: generalized pipeline parallelism for {DNN} training},
author={Narayanan, Deepak and Harlap, Aaron and Phanishayee, Amar and Seshadri, Vivek and Devanur, Nikhil R and Ganger, Gregory R and Gibbons, Phillip B and Zaharia, Matei},
booktitle={Proceedings of the 27th ACM symposium on operating systems principles},
pages={1--15},
year={2019}
}

@article{li2020pytorch,
title={Pytorch distributed: Experiences on accelerating data parallel training},
author={Li, Shen and Zhao, Yanli and Varma, Rohan and Salpekar, Omkar and Noordhuis, Pieter and Li, Teng and Paszke, Adam and Smith, Jeff and Vaughan, Brian and Damania, Pritam and others},
journal={arXiv preprint arXiv:2006.15704},
year={2020}
}

@article{li2018pipe,
title={Pipe-{SGD}: A decentralized pipelined {SGD} framework for distributed deep net training},
author={Li, Youjie and Yu, Mingchao and Li, Songze and Avestimehr, Salman and Kim, Nam Sung and Schwing, Alexander},
journal={Advances in Neural Information Processing Systems},
volume={31},
year={2018}
}

@inproceedings{xu2021parallelizing,
title={Parallelizing {DNN} training on {GPU}s: Challenges and opportunities},
author={Xu, Weizheng and Zhang, Youtao and Tang, Xulong},
booktitle={Companion Proceedings of the Web Conference},
pages={174--178},
year={2021}
}

@article{shoeybi2019megatron,
title={Megatron-lm: Training multi-billion parameter language models using model parallelism},
author={Shoeybi, Mohammad and Patwary, Mostofa and Puri, Raul and LeGresley, Patrick and Casper, Jared and Catanzaro, Bryan},
journal={arXiv preprint arXiv:1909.08053},
year={2019}
}

@inproceedings{li2023colossal,
title={Colossal-{AI}: A unified deep learning system for large-scale parallel training},
author={Li, Shenggui and Liu, Hongxin and Bian, Zhengda and Fang, Jiarui and Huang, Haichen and Liu, Yuliang and Wang, Boxiang and You, Yang},
booktitle={International Conference on Parallel Processing},
pages={766--775},
year={2023}
}

@article{kim2020torchgpipe,
title={torchgpipe: On-the-fly pipeline parallelism for training giant models},
author={Kim, Chiheon and Lee, Heungsub and Jeong, Myungryong and Baek, Woonhyuk and Yoon, Boogeon and Kim, Ildoo and Lim, Sungbin and Kim, Sungwoong},
journal={arXiv preprint arXiv:2004.09910},
year={2020}
}

@article{cubuk2018autoaugment,
title={Autoaugment: Learning augmentation policies from data},
author={Cubuk, Ekin D and Zoph, Barret and Mane, Dandelion and Vasudevan, Vijay and Le, Quoc V},
journal={arXiv preprint arXiv:1805.09501},
year={2018}
}

\arxivonly{
    
\onecolumn
\appendices

\begin{center}
\Large \textbf{Supplementary} \\
\end{center}

\section{Analog SGD with Asynchronous Pipeline in Different Perspectives}
\label{section:asyn-pipeline-details}
Involving multiple accelerators and data at each time step, pipeline algorithms, especially the asynchronous pipeline, are usually hard to figure out. To facilitate the understanding, we list the Analog SGD under two different perspectives. In \emph{accelerator perspective}, we pay more attention to how accelerator $m$ processes a flow of data; see Algorithm \ref{algorithm:analog-pipeline-accelerator}. In \emph{data perspective}, we focus on how a sample $\xi_k$ flows through each accelerator; see Algorithm \ref{algorithm:analog-pipeline-data}. 
To clarify the relationship, the computational dependency is plotted in Fig. \ref{figure:asyn-pipeline-formulation}.
\begin{algorithm}[H]
  \caption{Analog SGD with asynchronous pipeline (accelerator Perspective)}
  \label{algorithm:analog-pipeline-accelerator}
  \begin{algorithmic}[1]
    \For {iteration $k = 0, 1, 2, \cdots, K$}
    \Statex // Periphery device at the beginning
    \State Sample $\xi_k = (x_k, y_k)$ and Send $x^\spp{0}_k = x_k$ to accelerator $1$
    \Statex // Accelerator $m=1, 2, \cdots, M$ in parallel
    \State Receive input $\tdx^\spp{m}_k$ from accelerator $m-1$
    \State Compute $\tdz^\spp{m}_k = W^\spp{m}_{k-(M-m)} \tdx^\spp{m}_k$
    \State Compute $\tdx^\spp{m+1}_k = g^\spp{m}(\tdz^\spp{m}_k)$ and $\nabla g^\spp{m}(\tdz^\spp{m}_k)$
    \State Store input $\tdx^\spp{m}_k$ and $\nabla g^\spp{m}(\tdz^\spp{m}_k)$
    \State Receive backward signal $\tddelta^\spp{m+1}_{k-(M-m)}$
    \State 
    $\tddelta^\spp{m}_k = \tddelta^\spp{m+1}_kW^\spp{m+1}_k \nabla g^\spp{m}(\tdz^\spp{m}_k)$
    \State 
    $W_{k+1}^\spp{m} =  W_k^\spp{m} - \alpha\tddelta^\spp{m}_k \otimes \tdx^\spp{m}_k
     - \frac{\alpha}{\tau}|\tddelta^\spp{m}_k \otimes \tdx^\spp{m}_k| \odot W_k^\spp{m}$
      \Statex // Periphery device at the end
    \State Compute $\tddelta^\spp{M}_k = \nabla_{\tdx^\spp{M+1}}\ell(\tdx^\spp{M+1}, y_k) \nabla g^\spp{M}(\tdz^\spp{M})$ 
    \State Send $\tddelta^\spp{M}_k$ back to accelerator $M$
    \EndFor
  \end{algorithmic}
\end{algorithm}

\begin{algorithm}[H]
  \caption{Analog SGD with asynchronous pipeline (Data Perspective)}
  \label{algorithm:analog-pipeline-data}
  \begin{algorithmic}[1]
      \For {iteration $k = 0, 1, 2, \cdots, K$}
      \State Sample $\xi_k = (x_k, y_k)$, $x^\spp{0}_k = x_k$
      \ForAll {accelerator $m=1, 2, \cdots, M$}
      \State Compute $\tdz^\spp{m}_k = W^\spp{m}_{k-(M-m)} \tdx^\spp{m}_k$
      \State Compute $\tdx^\spp{m+1}_k = g^\spp{m}(\tdz^\spp{m}_k)$ and $\nabla g^\spp{m}(\tdz^\spp{m}_k)$
      \EndFor
      \State Compute $\tddelta^\spp{M}_k = \nabla_{\tdx^\spp{M+1}}\ell(\tdx^\spp{M+1}, y_k) \nabla g^\spp{M}(\tdz^\spp{M})$ 
      \ForAll {layer $m=M, M-1, \cdots, 1$}
      \State $\tddelta^\spp{m}_k = \tddelta^\spp{m+1}_kW^\spp{m+1}_k \nabla g^\spp{m}(\tdz^\spp{m}_k)$
      \State $W_{k+1}^\spp{m} =  W_k^\spp{m} - \alpha\tddelta^\spp{m}_k \otimes \tdx^\spp{m}_k
     - \frac{\alpha}{\tau}|\tddelta^\spp{m}_k \otimes \tdx^\spp{m}_k| \odot W_k^\spp{m}$
      \EndFor
      \EndFor
    \end{algorithmic}
\end{algorithm}

\begin{figure*}[!t]
  \centering
  \includegraphics[width=0.9\linewidth]{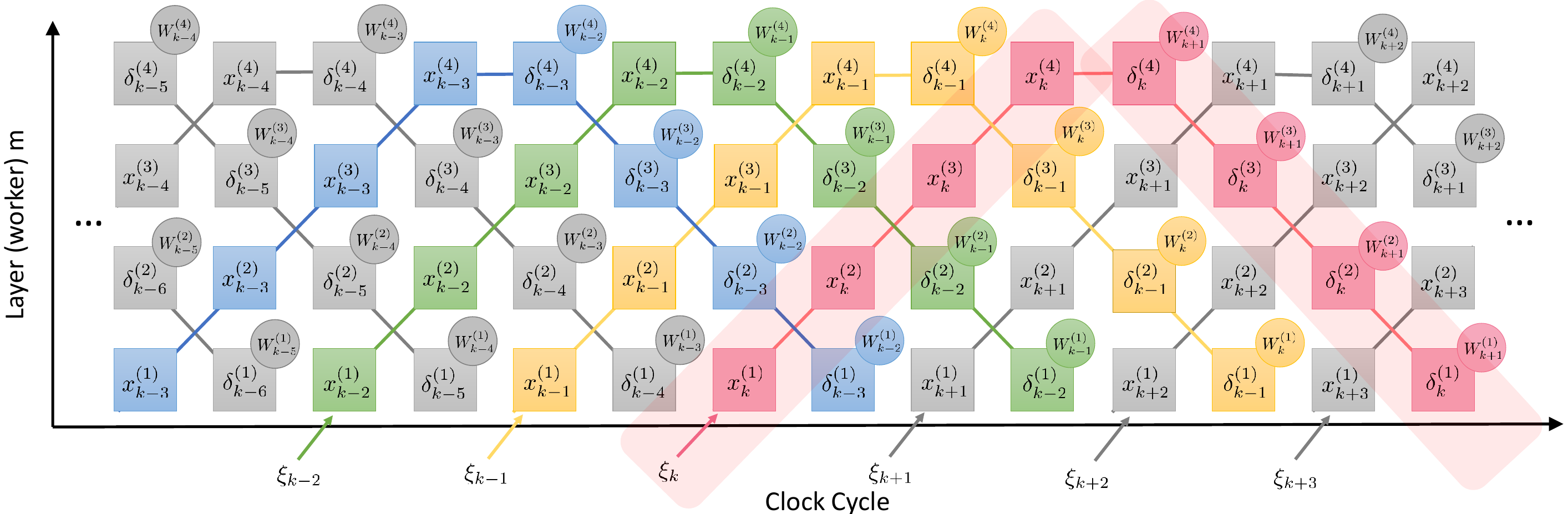}
  \caption{Illustration of the dynamics of the asynchronous pipeline. The $W_k^\spp{m}$ in the circle implies the update happens in this clock cycle, and the symbols in the squares indicate the input of each accelerator.}
  \label{figure:asyn-pipeline-formulation}
\vspace{-0.4cm}
\end{figure*}

\section{Useful Lemmas and their proof}
We first bound the error of the stale variables. 
Define $\Delta^\spp{m}_{k} := W^\spp{m}_{k+1}-W^\spp{m}_k$, which is the different between consecutive weights. 
\begin{lemma}[Delay error of signals, restatement of Lemma \ref{lemma:signal-delay-short}]
    \label{lemma:signal-delay}
    Under Assumption \ref{assumption:normalized-feature}--\ref{assumption:loss-Lip}, the difference between the latest and stale signals generated by {\AnalogSGDAP} are bounded, i.e., $\forall m\in[M+1]$
    \begin{align}
        \|\tdx^\spp{m}_k-x^\spp{m}_k\| \le&\ C_x^\spp{m} \sum_{m'=1}^{m-1}\sum_{k'=k-(M-m')}^{k-1} \|\Delta^\spp{m'}_{k'}\|,\\ 
        \|\tdz^\spp{m}_k-z^\spp{m}_k\| \le&\ C_z^\spp{m} \sum_{m'=1}^{m}\sum_{k'=k-(M-m')}^{k-1} \|\Delta^\spp{m'}_{k'}\|,\\ 
        \|\tddelta^\spp{m}_k-\delta^\spp{m}_k\| \le&\ C_{\delta}^\spp{m}\sum_{m'=1}^{M}\sum_{k'=k-(M-m')}^{k-1} \|\Delta^\spp{m'}_{k'}\|,
    \end{align}
    where the constants above are defined by
    \begin{align}
        \label{definition:C_x}
        C_x^\spp{m} :=&\ L_{g,0}(L_{g,0}W_{\max,2})^{m-1}, \\
        \label{definition:C_z}
        C_z^\spp{m} :=&\ (L_{g,0}W_{\max,2})^{m-1}, \\
        \label{definition:C_delta}
        C_{\delta}^\spp{m} :=&\ (L_{g,0}W_{\max,2})^{M-m+1}(L_{\ell, 1}L_{g, 0}^2+L_{\ell, 0}L_{g, 1})
        (L_{g,0}W_{\max,2})^{M-1} \\
        &\ +(L_{g,0}W_{\max,2})^{M-1}L_{\ell,0}L_{g,1} \frac{1-(L_{g,0}W_{\max,2})^{M-m}}{1-L_{g,0}W_{\max,2}}.
        \nonumber
    \end{align}
\end{lemma}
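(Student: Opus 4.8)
\textbf{Proof plan for Lemma \ref{lemma:signal-delay}.}
The plan is to prove all three bounds simultaneously by induction on $m$, moving forward through the forward pass for the $\tdx$ and $\tdz$ bounds, and then backward through the layers for the $\tddelta$ bound. The key observation driving everything is that the stale weight $W^\spp{m}_{k-(M-m)}$ differs from $W^\spp{m}_k$ by a telescoping sum: $W^\spp{m}_k - W^\spp{m}_{k-(M-m)} = \sum_{k'=k-(M-m)}^{k-1}\Delta^\spp{m}_{k'}$, so $\|W^\spp{m}_k - W^\spp{m}_{k-(M-m)}\| \le \sum_{k'=k-(M-m)}^{k-1}\|\Delta^\spp{m}_{k'}\|$, which is exactly the kind of sum appearing on the right-hand sides. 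So each layer contributes its own ``weight drift'' term plus the propagated error from the previous layer.

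For the $\tdx$ bound, the base case $m=1$ is trivial since $\tdx^\spp{1}_k = x^\spp{1}_k = x_k$. For the inductive step, I would write
\begin{align}
  \tdx^\spp{m+1}_k - x^\spp{m+1}_k
  = g^\spp{m}(\tdz^\spp{m}_k) - g^\spp{m}(z^\spp{m}_k),\nonumber
\end{align}
bound this by $L_{g,0}\|\tdz^\spp{m}_k - z^\spp{m}_k\|$ using Assumption \ref{assumption:g-Lip}(i), and then split
\begin{align}
  \tdz^\spp{m}_k - z^\spp{m}_k
  = W^\spp{m}_{k-(M-m)}\tdx^\spp{m}_k - W^\spp{m}_k x^\spp{m}_k
  = W^\spp{m}_{k-(M-m)}(\tdx^\spp{m}_k - x^\spp{m}_k) + (W^\spp{m}_{k-(M-m)} - W^\spp{m}_k)x^\spp{m}_k.\nonumber
\end{align}
The first term is controlled by $W_{\max,2}$ times the inductive hypothesis on $\|\tdx^\spp{m}_k - x^\spp{m}_k\|$; the second by $\|x^\spp{m}_k\|\le (L_{g,0}W_{\max,2})^m$ (Lemma \ref{lemma:bounded-signal}) times the telescoping weight-drift sum over $k' \in [k-(M-m),k-1]$. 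Collecting geometric factors of $L_{g,0}W_{\max,2}$ across the recursion produces the claimed constant $C_x^\spp{m}$, and the double sum $\sum_{m'=1}^{m-1}\sum_{k'}$ accumulates one inner block per layer passed. The $\tdz$ bound follows as an intermediate byproduct of this same computation (note the slightly shifted index: $\tdz^\spp{m-1}$ vs. $z^\spp{m}$ in the statement, which I would take at face value and track carefully).

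For the $\tddelta$ bound I would induct downward from $m=M$. The base case needs $\|\tddelta^\spp{M}_k - \delta^\spp{M}_k\|$, which comes from expanding $\nabla_{x^\spp{M+1}}\ell(\tdx^\spp{M+1}_k,y_k)\nabla g^\spp{M}(\tdz^\spp{M}_k) - \nabla_{x^\spp{M+1}}\ell(x^\spp{M+1}_k,y_k)\nabla g^\spp{M}(z^\spp{M}_k)$, adding and subtracting a cross term, and using smoothness of $\ell$ (Assumption \ref{assumption:loss-Lip}), smoothness of $g^\spp{M}$ (Assumption \ref{assumption:g-Lip}(ii)), the signal bounds from Lemma \ref{lemma:bounded-signal}, and the already-established $\tdx/\tdz$ delay bounds. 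This is where the combination $L_{\ell,1}L_{g,0}^2 + L_{\ell,0}L_{g,1}$ in $C_\delta^\spp{m}$ originates. For the inductive step, expand
\begin{align}
  \tddelta^\spp{m}_k - \delta^\spp{m}_k
  = \tddelta^\spp{m+1}_k W^\spp{m+1}_k \nabla g^\spp{m}(\tdz^\spp{m}_k)
  - \delta^\spp{m+1}_k W^\spp{m+1}_k \nabla g^\spp{m}(z^\spp{m}_k),\nonumber
\end{align}
and split into three pieces via successive add-subtract: one with $\tddelta^\spp{m+1}_k - \delta^\spp{m+1}_k$ (handled by the downward inductive hypothesis, contributing the full $\sum_{m'=1}^M$ sum), and one with $\nabla g^\spp{m}(\tdz^\spp{m}_k) - \nabla g^\spp{m}(z^\spp{m}_k)$ (handled by $g$-smoothness and the $\tdz$ delay bound). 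Note that $W^\spp{m+1}_k$ here is \emph{not} stale (backward uses the latest weight, per \eqref{dynamic:asyn-backward}), so there is no third weight-drift term at this layer — the recursion just multiplies the accumulated error by $L_{g,0}W_{\max,2}$ each step, giving the geometric series $\frac{1-(L_{g,0}W_{\max,2})^{M-m}}{1-L_{g,0}W_{\max,2}}$ in the second line of $C_\delta^\spp{m}$.

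The main obstacle I anticipate is bookkeeping rather than conceptual depth: keeping the index ranges on the double sums consistent (the inner sum runs $k' \in [k-(M-m'+1), k-1]$, which shifts with $m'$ because layer $m'$ has delay $M-m'$, plus possibly an extra offset from whether the error enters through $\tdx$ or $\tddelta$), and making sure that when I bound a cross term like $\|W^\spp{m+1}_k\|\|\nabla g^\spp{m}(z)\|$ I consistently use $W_{\max,2}$ (the Frobenius/operator-norm bound from norm equivalence) rather than $W_{\max}$ (the $\ell_\infty$ bound from Assumption \ref{assumption:W-bounded}). I would also need to verify that the stated constants $C_x^\spp{m}, C_z^\spp{m}, C_\delta^\spp{m}$ are in fact upper bounds (the paper may be slightly loose, absorbing lower-order geometric-series terms), so rather than matching them exactly I would aim to derive \emph{some} bound of the claimed form and then check it is dominated by the stated constant. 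A minor subtlety worth flagging: the statement writes $\tdz^\spp{m-1}_k - z^\spp{m}_k$ with mismatched superscripts, so I would interpret the middle bound as controlling the delay error of the pre-activation entering layer $m$ and index the constant accordingly.
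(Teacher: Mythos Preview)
Your proposal is correct and follows essentially the same route as the paper's proof: forward induction for the $\tdx$/$\tdz$ bounds using the Lipschitz property of $g^\spp{m}$, the telescoping weight-drift $\|W^\spp{m}_k-W^\spp{m}_{k-(M-m)}\|\le\sum_{k'}\|\Delta^\spp{m}_{k'}\|$, and Lemma~\ref{lemma:bounded-signal}; then downward recursion for $\tddelta$, with the base case at $m=M$ producing the $L_{\ell,1}L_{g,0}^2+L_{\ell,0}L_{g,1}$ factor and the two-term split (no stale weight in the backward) producing the geometric series. The only cosmetic difference is that the paper pairs the weight difference with $\tdx^\spp{m}_k$ and the current weight $W^\spp{m}_k$ with $\tdx^\spp{m}_k-x^\spp{m}_k$, whereas you do the opposite; since Lemma~\ref{lemma:bounded-signal} bounds both $x^\spp{m}_k$ and $\tdx^\spp{m}_k$ identically and both weights are bounded by $W_{\max,2}$, this is immaterial.
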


\begin{proof}[Proof of Lemma \ref{lemma:signal-delay-short} and Lemma \ref{lemma:signal-delay}]
    We first bound the delay of the forward signal $\tdx^\spp{m}_k$. At the first layer, there is no delay, i.e. $\tdx^\spp{1}_k = x^\spp{1}_k = x_k$. For other layers $m>1$
    \begin{align}
        \label{inequality:signal-delay-forward-1}
        &\ \|\tdx^\spp{m+1}_k-x^\spp{m+1}_k\| 
        \le L_{g,0}\|\tdz^\spp{m}_k-z^\spp{m}_k\| \\
        =&\ L_{g,0}\|W^\spp{m}_{k-(M-m)}\tdx^\spp{m}_k-W^\spp{m}_kx^\spp{m}_k\|
        \nonumber \\
        =&\ L_{g,0}\|(W^\spp{m}_{k-(M-m)}-W^\spp{m}_k)\tdx^\spp{m}_k+W^\spp{m}_k(\tdx^\spp{m}_k-x^\spp{m}_k)\|
        \nonumber \\
        \overset{(a)}{\le}&\ L_{g,0}(\|W^\spp{m}_{k-(M-m)}-W^\spp{m}_k\|_2\|\tdx^\spp{m}_k\|+\|W^\spp{m}_k\|_2\|\tdx^\spp{m}_k-x^\spp{m}_k\|)
        \nonumber \\
        \overset{(b)}{\le}&\ L_{g,0}((L_{g,0}W_{\max,2})^m\|W^\spp{m}_{k-(M-m)}-W^\spp{m}_k\|+W_{\max,2}\|\tdx^\spp{m}_k-x^\spp{m}_k\|)
        \nonumber 
    \end{align}
    where (a) follows the triangle inequality and the consistency of $\ell_2$-matrix norm, i.e. $\|Wu\|\le\|W\|_2\|u\|$ where $W$ and $u$ are arbitrary matrices and vectors, respectively; and (b) is because of Lemma \ref{lemma:bounded-signal}. 
    Using triangle inequality over the first term of \ac{RHS} in \eqref{inequality:signal-delay-forward-1}, we have
    \begin{align}
        \|W^\spp{m}_{k-(M-m)}-W^\spp{m}_k\|
        \le \sum_{k'=k-(M-m)}^{k-1}\|W^\spp{m}_{k'+1}-W^\spp{m}_{k'}\|
        =\sum_{k'=k-(M-m)}^{k-1} \|\Delta^\spp{m}_{k'}\|
    \end{align}
    and consequently,
    \begin{align}
        \label{inequality:signal-delay-forward-2}
        \|\tdx^\spp{m+1}_k-x^\spp{m+1}_k\|
        \le&\ L_{g,0}(L_{g,0}W_{\max,2})^m\sum_{k'=k-(M-m)}^{k-1} \|\Delta^\spp{m}_{k'}\|
        +L_{g,0}W_{\max,2}\|\tdx^\spp{m}_k-x^\spp{m}_k\|.
    \end{align}
    Expanding inequality \eqref{inequality:signal-delay-forward-2} from $m$ to $1$ and using the fact $\tdx^\spp{1}_k = x^\spp{1}_k = x_k$ yield
    \begin{align}
        \label{inequality:signal-delay-forward-3}
        &\ 
        \|\tdx^\spp{m}_k-x^\spp{m}_k\|
        \\
        \le&\ L_{g,0}(L_{g,0}W_{\max,2})^{m-1}\sum_{k'=k-(M-m+1)}^{k-1} \|\Delta^\spp{m-1}_{k'}\|
        +L_{g,0}W_{\max,2}\|\tdx^\spp{m-1}_k-x^\spp{m-1}_k\|
        \nonumber
        \\
        \le&\ L_{g,0}\sum_{m'=2}^{m}(L_{g,0}W_{\max,2})^{m'-1}(L_{g,0}W_{\max,2})^{m-m'}\sum_{k'=k-(M-m'+1)}^{k-1} \|\Delta^\spp{m'-1}_{k'}\|
        \nonumber\\
        \le&\ L_{g,0}(L_{g,0}W_{\max,2})^{m-1}\sum_{m'=2}^{m}\sum_{k'=k-(M-m'+1)}^{k-1} \|\Delta^\spp{m'-1}_{k'}\|.
        \nonumber\\
        =&\ L_{g,0}(L_{g,0}W_{\max,2})^{m-1}\sum_{m'=1}^{m-1}\sum_{k'=k-(M-m')}^{k-1} \|\Delta^\spp{m'}_{k'}\|.
        \nonumber
    \end{align}
    where the last inequality reindexes $m'$ in the first summation.
    
    Noticing that \eqref{inequality:signal-delay-forward-1}-\eqref{inequality:signal-delay-forward-3} also implies that
    \begin{align}
        \label{inequality:signal-delay-forward-z-1}
        \|z^\spp{m}_k-\tdz^\spp{m}_k\|\le&\ (L_{g,0}W_{\max,2})^{m-1}\sum_{m'=1}^{m-1}\sum_{k'=k-(M-m')}^{k-1} \|\Delta^\spp{m'}_{k'}\|.
    \end{align}
    We then bound the delay of the backward signal $\tddelta^\spp{m}_k$ for $m\in[M-1]$
    \begin{align}
        \label{inequality:signal-delay-backward-1}
        &\ \|\tddelta^\spp{m}_k-\delta^\spp{m}_k\|
        =\|\delta^\spp{m+1}_k W^\spp{m+1}_k \nabla g^\spp{m}(z^\spp{m}_k)-\tddelta^\spp{m+1}_k W^\spp{m+1}_k \nabla g^\spp{m}(\tdz^\spp{m}_k)\|\\
        \le&\ \|\delta^\spp{m+1}_k-\tddelta^\spp{m+1}_k\| \|W^\spp{m+1}_k\| \|\nabla g^\spp{m}(z^\spp{m}_k)\|
        \nonumber\\
        &\ +\|\tddelta^\spp{m+1}_k\| \|W^\spp{m+1}_k\| \|\nabla g^\spp{m}(z^\spp{m}_k)-\nabla g^\spp{m}(\tdz^\spp{m}_k)\|
        \nonumber\\
        \le&\ L_{g,0}W_{\max,2}\|\delta^\spp{m+1}_k-\tddelta^\spp{m+1}_k\| 
        +W_{\max,2}(L_{g,0}W_{\max,2})^{M-m-1}L_{\ell,0}L_{g,0}L_{g,1} \|z^\spp{m}_k-\tdz^\spp{m}_k\|
        \nonumber
    \end{align}
    where the last inequality comes from Assumption \ref{assumption:g-Lip} (1)-(2) and Lemma \ref{lemma:bounded-signal}. Substituting \eqref{inequality:signal-delay-forward-z-1} into \eqref{inequality:signal-delay-backward-1}, 
    we can further bound \eqref{inequality:signal-delay-backward-1} by
    \begin{align}
        \label{inequality:signal-delay-backward-2}
        &\ 
        \|\tddelta^\spp{m}_k-\delta^\spp{m}_k\|
        \\
        \le&\ L_{g,0}W_{\max,2}\|\delta^\spp{m+1}_k-\tddelta^\spp{m+1}_k\| 
        \bkeq
        +W_{\max,2}(L_{g,0}W_{\max,2})^{M-m-1}L_{\ell,0}L_{g,0}L_{g,1} (L_{g,0}W_{\max,2})^{m-1}\sum_{m'=1}^{m-1}\sum_{k'=k-(M-m')}^{k-1} \|\Delta^\spp{m'}_{k'}\|
        \nonumber\\
        =&\ L_{g,0}W_{\max,2}\|\delta^\spp{m+1}_k-\tddelta^\spp{m+1}_k\| 
        +(L_{g,0}W_{\max,2})^{M-1}L_{\ell,0}L_{g,1} \sum_{m'=1}^{m-1}\sum_{k'=k-(M-m')}^{k-1} \|\Delta^\spp{m'}_{k'}\|
        \nonumber\\
        \le &\ (L_{g,0}W_{\max,2})^{M-m}\|\delta^\spp{M}_k-\tddelta^\spp{M}_k\|
        +(L_{g,0}W_{\max,2})^{M-1}L_{\ell,0}L_{g,1} \times
        \nonumber\\
        &\ \sum_{m''=m}^{M-1}(L_{g,0}W_{\max,2})^{M-1-m''}\sum_{m'=1}^{m''-1}\sum_{k'=k-(M-m')}^{k-1} \|\Delta^\spp{m'}_{k'}\|.
        \nonumber
    \end{align}
    
    To bound the first term in the \ac{RHS} of \eqref{inequality:signal-delay-backward-2}, we manipulate the different $\|\delta^\spp{M}_k-\tddelta^\spp{M}_k\|$ by
    \begin{align}
        \label{inequality:signal-delay-backward-2-1}
        &\ 
        \|\delta^\spp{M}_k-\tddelta^\spp{M}_k\| 
        \\
        =&\ \|\nabla_{x^\spp{M+1}}\ell(x^\spp{M+1}_k, y_k) \nabla g^\spp{M}(z^\spp{M}_k) - \nabla_{x^\spp{M+1}}\ell(\tdx^\spp{M+1}_k, y_k) \nabla g^\spp{M}(\tdz^\spp{M}_k)\|
        \nonumber 
        \\
        \le&\ \|\nabla_{x^\spp{M+1}}\ell(x^\spp{M+1}_k, y_k)- \nabla_{x^\spp{M+1}}\ell(\tdx^\spp{M+1}_k, y_k)\|
        \|\nabla g^\spp{M}(z^\spp{M}_k)\| 
        \nonumber \\
        &\ +\|\nabla_{x^\spp{M+1}}\ell(\tdx^\spp{M+1}_k, y_k)\|
        \|\nabla g^\spp{M}(\tdz^\spp{M}_k)-\nabla g^\spp{M}(z^\spp{M}_k)\|
        \nonumber \\
        \overset{(a)}{\le}&\ L_{\ell, 1}L_{g, 0} \|x^\spp{M+1}_k - \tdx^\spp{M+1}_k\|
        +L_{\ell, 0}L_{g, 1}
        \|\tdz^\spp{M}_k-z^\spp{M}_k\|
        \nonumber \\
        \le&\ (L_{\ell, 1}L_{g, 0}^2+L_{\ell, 0}L_{g, 1})
        \|\tdz^\spp{M}_k-z^\spp{M}_k\|
        \nonumber \\
        \le&\ (L_{\ell, 1}L_{g, 0}^2+L_{\ell, 0}L_{g, 1})
        (L_{g,0}W_{\max,2})^{M-1}\sum_{m'=1}^{M}\sum_{k'=k-(M-m')}^{k-1} \|\Delta^\spp{m'}_{k'}\|.
        \nonumber
    \end{align}
    where (a) comes from Assumption \ref{assumption:g-Lip} and Assumption \ref{assumption:loss-Lip}.
    
    To bound the second term in the \ac{RHS} of \eqref{inequality:signal-delay-backward-2}, we change the upper bound of the second summation from $m''$ to $M$ and get
    \begin{align}
        \label{inequality:signal-delay-backward-2-2}
        &\ \sum_{m''=m}^{M-1}(L_{g,0}W_{\max,2})^{M-1-m''}\sum_{m'=1}^{m''-1}\sum_{k'=k-(M-m')}^{k-1} \|\Delta^\spp{m'}_{k'}\| \\
        \le&\ \sum_{m''=m}^{M-1}(L_{g,0}W_{\max,2})^{M-1-m''}\sum_{m'=1}^{M}\sum_{k'=k-(M-m')}^{k-1} \|\Delta^\spp{m'}_{k'}\| 
        \nonumber\\
        =&\ \frac{1-(L_{g,0}W_{\max,2})^{M-m}}{1-L_{g,0}W_{\max,2}}\sum_{m'=1}^{M}\sum_{k'=k-(M-m')}^{k-1} \|\Delta^\spp{m'}_{k'}\|.
        \nonumber
    \end{align}

    Substituting \eqref{inequality:signal-delay-backward-2-1} and \eqref{inequality:signal-delay-backward-2-2} back into \eqref{inequality:signal-delay-backward-2} we have
    \begin{align}
        \|\tddelta^\spp{m}_k-\delta^\spp{m}_k\|
        \le C_{\delta}^\spp{m}\sum_{m'=1}^{M}\sum_{k'=k-(M-m')}^{k-1} \|\Delta^\spp{m'}_{k'}\|
    \end{align}
    where the constant $C_{\delta}^\spp{m}$ is defined by
    \begin{align}
        C_{\delta}^\spp{m} :=&\ (L_{g,0}W_{\max,2})^{M-m+1}(L_{\ell, 1}L_{g, 0}^2+L_{\ell, 0}L_{g, 1})
        (L_{g,0}W_{\max,2})^{M-1} \\
        &\ +(L_{g,0}W_{\max,2})^{M-1}L_{\ell,0}L_{g,1} \frac{1-(L_{g,0}W_{\max,2})^{M-m}}{1-L_{g,0}W_{\max,2}}.
        \nonumber
    \end{align}
    This completes the proof.
\end{proof}


\begin{lemma}[Delayed error of gradient]
    \label{lemma:gradient-delay}
    Denote the stale gradient as 
    \begin{align}
        \tdnabla_{W^\spp{m}}^k f:= \tddelta^\spp{m}_k\otimes \tdx^\spp{m}_k.
    \end{align}
    The gradient delay is bounded by
    \begin{align}
        \|\tdnabla_{W^\spp{m}}^k f - \nabla_{W^\spp{m}} f(W_k;\xi_k)\|
        \le &\ 
        C_{f}^\spp{m}\sum_{m'=1}^{M}\sum_{k'=k-(M-m')}^{k-1} \|\Delta^\spp{m'-1}_{k'}\|.
    \end{align}
    where the constant $C_{f}^\spp{m}$ is defined by
    \begin{align}
        C_{f}^\spp{m} := (L_{g,0}W_{\max,2})^m C_{\delta}^\spp{m}
        +(L_{g,0}W_{\max,2})^{M-m}L_{\ell,0}L_{g,0} C_{x}^\spp{m}.
    \end{align}
    Consequently, the norm square of the gradient delay is bounded by
    \begin{align}
        \|\tdnabla^k f - \nabla f(W_k;\xi_k)\|^2
        \le &\ 
        C_{f}^2\sum_{m'=1}^{M}\sum_{k'=k-(M-m')}^{k-1} \|\Delta^\spp{m'}_{k'}\|^2
    \end{align}
    where the constant $C_f$ is defined by
    \begin{align}
        C_{f} := M\sqrt{\sum_{m=1}^{M} (C_f^\spp{m})^2}.
    \end{align}
\end{lemma}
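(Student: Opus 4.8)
\smallskip

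The plan is to decompose the gradient delay $\tdnabla_{W^\spp{m}}^k f - \nabla_{W^\spp{m}} f(W_k;\xi_k) = \tddelta^\spp{m}_k\otimes\tdx^\spp{m}_k - \delta^\spp{m}_k\otimes x^\spp{m}_k$ by adding and subtracting the mixed term $\tddelta^\spp{m}_k\otimes x^\spp{m}_k$ (or equivalently $\delta^\spp{m}_k\otimes\tdx^\spp{m}_k$), so that
\[
\tddelta^\spp{m}_k\otimes\tdx^\spp{m}_k - \delta^\spp{m}_k\otimes x^\spp{m}_k
= (\tddelta^\spp{m}_k-\delta^\spp{m}_k)\otimes\tdx^\spp{m}_k + \delta^\spp{m}_k\otimes(\tdx^\spp{m}_k-x^\spp{m}_k).
\]
Then I would apply the triangle inequality together with the submultiplicativity of the Frobenius norm for outer products, $\|a\otimes b\| = \|a\|\,\|b\|$. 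The first term is bounded using $\|\tdx^\spp{m}_k\|\le(L_{g,0}W_{\max,2})^m$ from Lemma \ref{lemma:bounded-signal} and the backward-signal delay bound $\|\tddelta^\spp{m}_k-\delta^\spp{m}_k\|\le C_\delta^\spp{m}\sum_{m'=1}^{M}\sum_{k'=k-(M-m'+1)}^{k-1}\|\Delta^\spp{m'}_{k'}\|$ from Lemma \ref{lemma:signal-delay}; the second term is bounded using $\|\delta^\spp{m}_k\|\le(L_{g,0}W_{\max,2})^{M-m}L_{\ell,0}L_{g,0}$ from Lemma \ref{lemma:bounded-signal} and the forward-signal delay bound $\|\tdx^\spp{m}_k-x^\spp{m}_k\|\le C_x^\spp{m}\sum_{m'=1}^{m-1}\sum_{k'}\|\Delta^\spp{m'}_{k'}\|$ from Lemma \ref{lemma:signal-delay}. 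Collecting the two contributions and noting that the forward-sum ranges over a subset of the backward-sum indices, the combined coefficient is exactly $C_f^\spp{m} = (L_{g,0}W_{\max,2})^m C_\delta^\spp{m} + (L_{g,0}W_{\max,2})^{M-m}L_{\ell,0}L_{g,0}C_x^\spp{m}$, which gives the first claim (up to harmlessly relabelling $\Delta^\spp{m'}$ versus $\Delta^\spp{m'-1}$ in the index, matching the paper's statement).

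\smallskip

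For the squared aggregate bound, I would stack the per-layer gradient delays into the full vector $\tdnabla^k f - \nabla f(W_k;\xi_k)$, so that $\|\tdnabla^k f - \nabla f(W_k;\xi_k)\|^2 = \sum_{m=1}^M \|\tdnabla_{W^\spp{m}}^k f - \nabla_{W^\spp{m}} f(W_k;\xi_k)\|^2$. Using the per-layer bound and $\max_m C_f^\spp{m}\le\sqrt{\sum_m (C_f^\spp{m})^2}$, each summand is at most $\big(\sum_m (C_f^\spp{m})^2\big)\big(\sum_{m'=1}^{M}\sum_{k'}\|\Delta^\spp{m'}_{k'}\|\big)^2$. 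Then I would apply Cauchy–Schwarz to the double sum: the number of terms is at most $M\cdot M = M^2$ (at most $M$ values of $m'$, each contributing at most $M$ consecutive time indices $k'$), so $\big(\sum_{m',k'}\|\Delta^\spp{m'}_{k'}\|\big)^2 \le M^2 \sum_{m',k'}\|\Delta^\spp{m'}_{k'}\|^2$. Multiplying through by $M$ (from the outer sum over $m$) and collecting constants yields the coefficient $C_f^2 = M^2\sum_{m=1}^M (C_f^\spp{m})^2$, i.e. $C_f = M\sqrt{\sum_{m=1}^M (C_f^\spp{m})^2}$, as stated.

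\smallskip

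I expect the main obstacle to be purely bookkeeping rather than conceptual: carefully tracking the summation ranges of the delay errors (they differ slightly between the forward signal, the backward signal, and the gradient — the forward sum runs $m'=1,\dots,m-1$ while the backward sum runs $m'=1,\dots,M$, and the inner time index window $k'=k-(M-m'+1),\dots,k-1$ depends on $m'$) and making sure the Cauchy–Schwarz term count $M^2$ is actually an upper bound on the number of nonzero $\Delta$ increments appearing. One minor subtlety to handle cleanly is the $m=1$ boundary case, where $\tdx^\spp{1}_k = x^\spp{1}_k = x_k$ so the forward delay vanishes — this is consistent with $C_x^\spp{1}$ multiplying an empty sum, and should just be noted. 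No sharper inequalities are needed; the bound is deliberately loose so that only the dependence on $\{\|\Delta^\spp{m'}_{k'}\|^2\}$ and on $M$ matters for the downstream convergence analysis.
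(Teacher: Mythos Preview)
Your approach matches the paper's proof exactly for the first claim: the same add-and-subtract decomposition $(\tddelta^\spp{m}_k-\delta^\spp{m}_k)\otimes\tdx^\spp{m}_k + \delta^\spp{m}_k\otimes(\tdx^\spp{m}_k-x^\spp{m}_k)$, followed by the triangle inequality and the bounds from Lemmas~\ref{lemma:bounded-signal} and~\ref{lemma:signal-delay}. The paper does not spell out the squared aggregate step at all, so your Cauchy--Schwarz argument is a reasonable completion.

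One small bookkeeping slip in your second part: invoking $\max_m C_f^\spp{m}\le\sqrt{\sum_m (C_f^\spp{m})^2}$ and then separately ``multiplying through by $M$ from the outer sum'' double-counts and would give $M^3\sum_m (C_f^\spp{m})^2$ rather than $M^2\sum_m (C_f^\spp{m})^2$. The clean route is simply
\[
\sum_{m=1}^M \|\tdnabla_{W^\spp{m}}^k f - \nabla_{W^\spp{m}} f\|^2
\le \Big(\sum_{m=1}^M (C_f^\spp{m})^2\Big)\Big(\sum_{m',k'}\|\Delta^\spp{m'}_{k'}\|\Big)^2
\le M^2\Big(\sum_{m=1}^M (C_f^\spp{m})^2\Big)\sum_{m',k'}\|\Delta^\spp{m'}_{k'}\|^2,
\]
where the first inequality uses the per-layer bound directly (no max needed) and the second is your Cauchy--Schwarz with at most $M^2$ terms. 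This gives exactly $C_f = M\sqrt{\sum_m (C_f^\spp{m})^2}$.
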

\begin{proof}[Proof of Lemma \ref{lemma:gradient-delay}]
    By definition, the delay error is bounded by
    \begin{align}
        &\ \|\tdnabla_{W^\spp{m}}^k f - \nabla_{W^\spp{m}} f(W_k;\xi_k)\|
        \|\tddelta^\spp{m}_k\otimes \tdx^\spp{m}_k 
        -\delta^\spp{m}_k\otimes x^\spp{m}_k\|
        \\
        =&\ \|(\tddelta^\spp{m}_k-\delta^\spp{m}_k)\otimes \tdx^\spp{m}_k +\delta^\spp{m}_k\otimes(\tdx^\spp{m}_k-x^\spp{m}_k)\|
        \le \|\tddelta^\spp{m}_k-\delta^\spp{m}_k\| \|\tdx^\spp{m}_k\|+\|\delta^\spp{m}_k\|\|\tdx^\spp{m}_k-x^\spp{m}_k\|.
        \nonumber 
    \end{align}
    According to Lemma \ref{lemma:bounded-signal} and  Lemma \ref{lemma:signal-delay}, it follows that
    \begin{align}
        &\ \|\tdnabla_{W^\spp{m}}^k f - \nabla_{W^\spp{m}} f(W_k;\xi_k)\|
        \\
        \le &\ ((L_{g,0}W_{\max,2})^m C_{\delta}^\spp{m}
        +(L_{g,0}W_{\max,2})^{M-m}L_{\ell,0}L_{g,0} C_{x}^\spp{m})
        \sum_{m'=1}^{M}\sum_{k'=k-(M-m')}^{k-1} \|\Delta^\spp{m'}_{k'}\|
        \nonumber
        \\
        = &\ C_{f}^\spp{m}
        \sum_{m'=1}^{M}\sum_{k'=k-(M-m')}^{k-1} \|\Delta^\spp{m'}_{k'}\|.
        \nonumber
    \end{align}
    It completes the proof.
\end{proof}

\noindent
\textbf{Bounds of delay errors.}
In {\AnalogSGDAP}, the computations in both the forward and backward passes involve stale weights, which introduce delay error in the signals.
This section presents bounds on the delay error.

\begin{lemma}[Lipschitz continuity of loss function]
Under same conditions with Lemma \ref{lemma:bounded-signal}, there exists $L_{\ell,0}$ such that for any $m\in [M]$, $\|\nabla_{x^{(m)}}\ell(x_k^{(m)},y_k)\|\leq L_{\ell,0}$ and $\|\nabla_{\tilde x^{(m)}}\ell(\tilde x_k^{(m)},y_k)\|\leq L_{\ell,0}$. 
\end{lemma}
\begin{proof}
According to norm equivalence, there exists $W_{\max,2}=c W_{\max}$ with $c$ is defined by the dimension of $W_k^{(m)}$, such that $\|W_k^{(m)}\|\leq W_{\max,2}\leq\tau$. 
For $m\in[M]$, according to the forward recursion \eqref{recursion:forward}, it holds that
    \begin{align}
        \|x^\spp{m+1}_k\| \overset{(a)}{=}&\ \|g^\spp{m}(z^\spp{m}_k)-g^\spp{m}(0)\| 
        \overset{(b)}{\le} L_{g,0}\|z^\spp{m}_k\|
        = L_{g,0}\|W^\spp{m}_k x^\spp{m}_k\| \\
        \le&\ L_{g,0}\|W^\spp{m}_k\| \|x^\spp{m}_k\|
        {\le} L_{g,0}W_{\max,2} \|x^\spp{m}_k\|
        \nonumber
    \end{align}
    where (a) and (b) hold because of Assumption \ref{assumption:g-Lip} (1) and (2). Then by induction, we get $\|x^\spp{m}_k\| \le\ (L_{g,0}W_{\max,2})^m$ and the same bound holds for $\|x^\spp{m}_k\|$.     Since $\ell(\cdot,\cdot)$ is differentiable so that $\nabla\ell(\cdot,\cdot)$ is continuous, then there exists $L_{f,0}$ such that 
$\|\nabla_{x^\spp{m}}\ell(x^\spp{m}_k, y^k)\|\leq L_{f,0}$ and $\|\nabla_{\tilde x^\spp{m}}\ell(\tilde x^\spp{m}_k, y^k)\|\leq L_{f,0}$ hold for any $k,m\in[M]$.  
\end{proof}

\begin{lemma}[Bounded signal and iterates]
    Under Assumption \ref{assumption:normalized-feature}--\ref{assumption:loss-Lip}, selecting stepsize $\alpha\leq\frac{\tau}{(L_{g,0}\tau)^{M}L_{\ell,0}L_{g,0}}$ and initializing $\|W_{0}^{(m)}\|<\tau, \forall m\in [M+1]$, then both forward and backward signals are bounded, i.e., $\forall m\in[M+1]$
    \label{lemma:bounded-signal}
    \begin{align}
        \|x^\spp{m}_k\| \le&\ (L_{g,0}W_{\max,2})^m,&
        \|\delta^\spp{m}_k\| \le&\ (L_{g,0}W_{\max,2})^{M-m}L_{\ell,0}L_{g,0}, \\ 
        \|\tdx^\spp{m}_k\| \le&\ (L_{g,0}W_{\max,2})^m,&
        \|\tddelta^\spp{m}_k\| \le&\ (L_{g,0}W_{\max,2})^{M-m}L_{\ell,0}L_{g,0}.
    \end{align}
    Moreover, for any $k\leq K$ and $m\in [M+1]$, the iterates $\|W_k^{(m)}\|<\tau$ are bounded within the dynamic range, i.e. $W_{\max,2}:=\max_{k\leq K,m\in [M+1]} \{\|W_k^{(m)}\|\}<\tau$. 
\end{lemma}
\begin{proof}[Proof of Lemma \ref{lemma:bounded-signal}]\allowdisplaybreaks
    This lemma is proved by induction over $m$ and $k$. We can initialize such that $\|W_{0}^{(m)}\|<\tau$ holds for any $m\in [M+1]$. 
    
    At each iteration $k$, assume $\|W_{k}^{(m)}\|<\tau$ holds for any $m\in [M+1]$ and denote $W_{\max,2}^{(m)}=\max_m \{\|W_{k}^{(m)}\|\}$. Clearly, $W_{\max,2}^{(m)}<\tau$. Then the statement for $\|x^\spp{m}_k\|$ and $\|\tdx^\spp{m}_k\|$ at $m=1$ trivially hold because $x^\spp{0}_k=\tdx^\spp{0}_k=x_k$. For $\|\delta^\spp{M+1}_k\|$ and $\|\tddelta^\spp{M+1}_k\|$, we have
    \begin{align}
        \|\delta^\spp{M+1}_k\| =&\ \|\nabla_{x^\spp{M+1}}\ell(x^\spp{M+1}_k, y_k) \nabla g^\spp{M}(z^\spp{M}_k)\|
        \\
        \le&\ \|\nabla_{x^\spp{M+1}}\ell(x^\spp{M+1}_k, y_k)\|\| \nabla g^\spp{M}(z^\spp{M}_k)\|
        \le L_{\ell,0}L_{g,0}
        \nonumber \\
        \|\tddelta^\spp{M+1}_k\| =&\ \|\nabla_{x^\spp{M+1}}\ell(\tdx^\spp{M+1}_k, y_k) \nabla g^\spp{M}(\tdz^\spp{M}_k)\|
        \\
        \le&\ \|\nabla_{x^\spp{M+1}}\ell(\tdx^\spp{M+1}_k, y_k)\|\| \nabla g^\spp{M}(\tdz^\spp{M}_k)\|
        \le L_{\ell,0}L_{g,0}.
        \nonumber
    \end{align}
    For $m\in[M]$, according to the forward recursion \eqref{recursion:forward}, it holds that
    \begin{align}
        \|x^\spp{m+1}_k\| \overset{(a)}{=}&\ \|g^\spp{m}(z^\spp{m}_k)-g^\spp{m}(0)\| 
        \overset{(b)}{\le} L_{g,0}\|x^\spp{m}_k\|
        = L_{g,0}\|W^\spp{m}_k x^\spp{m}_k\| \\
        \le&\ L_{g,0}\|W^\spp{m}_k\| \|x^\spp{m}_k\|
        {\le} L_{g,0}W_{\max,2}^{(m)} \|x^\spp{m}_k\|
        \nonumber
    \end{align}
    where (a) and (b) hold because of Assumption \ref{assumption:g-Lip} (1) and (2). According to the backward recursion \eqref{recursion:backward}, it holds that
    \begin{align}
        \|\delta^\spp{m}_k\|
        =&\ \|\delta^\spp{m+1}_k W^\spp{m+1}_k \nabla g^\spp{m}(z^\spp{m}_k)\|
        \\
        \le&\ \|\delta^\spp{m+1}_k\| \|W^\spp{m+1}_k\| \|\nabla g^\spp{m}(z^\spp{m}_k)\|
        \le L_{g,0}W_{\max,2}^{(m)} \|\delta^\spp{m+1}_k\|,
        \nonumber \\
        \|\tddelta^\spp{m}_k\|
        =&\ \|\tddelta^\spp{m+1}_k W^\spp{m+1}_k \nabla g^\spp{m}(\tdz^\spp{m}_k)\|
        \\
        \le&\ \|\tddelta^\spp{m+1}_k\| \|W^\spp{m+1}_k\| \|\nabla g^\spp{m}(\tdz^\spp{m}_k)\|
        \le L_{g,0}W_{\max,2}^{(m)} \|\tddelta^\spp{m+1}_k\|.
        \nonumber
    \end{align}
    Therefore, by induction over $m$, we obtain, 
    \begin{align}
        \|x^\spp{m}_k\| \le&\ (L_{g,0}W_{\max,2}^{(m)})^m,&
        \|\delta^\spp{m}_k\| \le&\ (L_{g,0}W_{\max,2}^{(m)})^{M-m}L_{\ell,0}L_{g,0}, \\ 
        \|\tdx^\spp{m}_k\| \le&\ (L_{g,0}W_{\max,2}^{(m)})^m,&
        \|\tddelta^\spp{m}_k\| \le&\ (L_{g,0}W_{\max,2}^{(m)})^{M-m}L_{\ell,0}L_{g,0}.
    \end{align}
    On the other hand, if we select the stepsize $\alpha$ such that $\alpha\leq\frac{\tau}{(L_{g,0}\tau)^{M}L_{\ell,0}L_{g,0}}$, we have the magnitude of each update increment 
    \begin{align*}
    \|\Delta \tilde W_k^{(m)}\|=\alpha\|\tdx^\spp{m}_k\|\|\tddelta^\spp{m}_k\|\leq\tau.
    \end{align*}
    Owing to the induction nature proof for Theorem \ref{theorem:bounded-variable} (see {\citep[Theorem 1]{wu2024towards}}), we know $\|W_{k+1}^{(m)}\|<\tau$ holds for any $m\in [M+1]$. 
    
    By induction, the proof is completed. 
\end{proof}

\section{Smoothness of the object}
\label{section:proof-objective-L-smooth}
This section proves Lemma \ref{lemma:objective-L-smooth-short}, i.e. the object is $L$-smooth. Before that, we introduce a lemma as the step stone.
Since the definition of function smoothness involves two weights $W$, $W'$ and both of them generate a sequence of forward and backward signals, we require symbols to represent them. 
Using slightly abusing notation in this section, we denote the forward signals computed by $W$ and $\tdW$ as $x^\spp{m}$ and $\tdx^\spp{m}$, respectively, and denote the backward signals as $\delta^\spp{m}$ and $\tddelta^\spp{m}$, respectively.

\begin{lemma}[Signal stability, restatement of Lemma \ref{lemma:signal-stability-short}]
    \label{lemma:signal-stability}
    Denote the forward signals computed by $W$ and $\tdW$ as $x^\spp{m}$ and $\tdx^\spp{m}$, respectively, and denote the backward signals as $\delta^\spp{m}$ and $\tddelta^\spp{m}$, respectively.
    \begin{align}
        \label{inequality:x-stability}
        \|\tdx^\spp{m}-x^\spp{m}\|^2 \le&\ C_{x'}^\spp{m} 
        \|\tdW-W\|^2,\\ 
        \label{inequality:z-stability}
        \|\tdz^\spp{m}-z^\spp{m}\|^2 \le&\ C_{z'}^\spp{m} 
        \|\tdW-W\|^2,\\ 
        \|\tddelta^\spp{m}-\delta^\spp{m}\|^2 \le&\ C_{\delta'}^\spp{m}
        \|\tdW-W\|^2,
    \end{align}
    where the constants $C_x^\spp{m}$ and $C_z^\spp{m}$ are defined by
    \begin{align}
        \label{definition:C_x_stability}
        C_{x'}^\spp{m} :=&\ 2L_{g,0}(L_{g,0}W_{\max,2})^{2(m-1)}, \\
        \label{definition:C_z_stability}
        C_{z'}^\spp{m} :=&\ 2(L_{g,0}W_{\max,2})^{2(m-1)},
    \end{align}    
    and $C_{\delta'}^\spp{m}$ is defined by
    \begin{align}
        \label{definition:C_delta_stability}
        C_{\delta'}^\spp{m} &\ :=
        3(L_{g,0}W_{\max,2})^2(2L_{\ell, 1}^2L_{g, 0}^4+2L_{\ell, 0}^2L_{g, 1}^2)
        (L_{g,0}W_{\max,2})^{2(M-1)} \\
        &\ +3(L_{g,0}W_{\max,2})^{2(M-m)}L_{\ell,0}^2L_{g,0}^4 \sum_{m''=m}^{M-1}(L_{g,0}W_{\max,2})^{2(M-1-m'')}
        \frac{1-(L_{g,0}W_{\max,2})^{2(M-m)}}{1-L_{g,0}W_{\max,2}} 
        \nonumber\\
        &\ +3(L_{g,0}W_{\max,2})^{2(M-1)}L_{\ell,0}^2L_{g,1}^2 \sum_{m''=m}^{M-1}(L_{g,0}W_{\max,2})^{2(M-1-m'')}
        \frac{1-(L_{g,0}W_{\max,2})^{2(M-m)}}{1-L_{g,0}W_{\max,2}}.
        \nonumber
    \end{align}
\end{lemma}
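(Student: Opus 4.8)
The plan is to follow the template of the proof of Lemma~\ref{lemma:signal-delay}, replacing the opposition ``stale weights versus current weights'' with ``weights $\tdW$ versus weights $W$'', and to carry \emph{squared} norms throughout so that the final estimates come out in terms of $\|\tdW-W\|^2$. I would organize the argument in two blocks: the forward signals \eqref{inequality:x-stability}--\eqref{inequality:z-stability} first, then the backward signal bound.

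For the forward part I would induct on $m$. The base case is immediate since $\tdx^\spp{1}=x^\spp{1}=x$. For the step, Assumption~\ref{assumption:g-Lip}(i) gives $\|\tdx^\spp{m+1}-x^\spp{m+1}\|\le L_{g,0}\|\tdz^\spp{m}-z^\spp{m}\|$, and writing $\tdz^\spp{m}-z^\spp{m}=(\tdW^\spp{m}-W^\spp{m})\tdx^\spp{m}+W^\spp{m}(\tdx^\spp{m}-x^\spp{m})$, then using $\|\tdx^\spp{m}\|\le(L_{g,0}W_{\max,2})^m$ from Lemma~\ref{lemma:bounded-signal} and $\|W^\spp{m}\|\le W_{\max,2}$ (norm equivalence, as established earlier in this section), produces a linear recursion in $\|\tdx^\spp{m}-x^\spp{m}\|$ with source term $\propto\|\tdW^\spp{m}-W^\spp{m}\|$. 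Unrolling it down to layer $1$ yields $\|\tdx^\spp{m}-x^\spp{m}\|\lesssim (L_{g,0}W_{\max,2})^{m-1}\sum_{m'\le m-1}\|\tdW^\spp{m'}-W^\spp{m'}\|$; squaring and applying Cauchy--Schwarz together with $\sum_{m'}\|\tdW^\spp{m'}-W^\spp{m'}\|^2\le\|\tdW-W\|^2$ gives \eqref{inequality:x-stability} with $C_{x'}^\spp{m}$, and the intermediate $\tdz^\spp{m}-z^\spp{m}$ estimate gives \eqref{inequality:z-stability} with $C_{z'}^\spp{m}$. The factor $2$ in \eqref{definition:C_x_stability}--\eqref{definition:C_z_stability} is exactly the loss incurred by $(a+b)^2\le 2a^2+2b^2$ at the splitting step.

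For the backward part I would first treat the top layer: expanding $\tddelta^\spp{M}-\delta^\spp{M}$ and inserting a mixed term, Assumption~\ref{assumption:loss-Lip} and Assumption~\ref{assumption:g-Lip} bound it by $L_{\ell,1}L_{g,0}\|\tdx^\spp{M+1}-x^\spp{M+1}\|+L_{\ell,0}L_{g,1}\|\tdz^\spp{M}-z^\spp{M}\|$, which is $O(\|\tdW-W\|)$ by the forward bounds just proved. For $m<M$, expanding $\tddelta^\spp{m}-\delta^\spp{m}$ from $\tddelta^\spp{m}=\tddelta^\spp{m+1}\tdW^\spp{m+1}\nabla g^\spp{m}(\tdz^\spp{m})$ into three pieces yields the recursion $\|\tddelta^\spp{m}-\delta^\spp{m}\|\le L_{g,0}W_{\max,2}\|\tddelta^\spp{m+1}-\delta^\spp{m+1}\|$ plus two source terms: one $\propto\|\tdW^\spp{m+1}-W^\spp{m+1}\|$ (controlled with $\|\tddelta^\spp{m+1}\|\le(L_{g,0}W_{\max,2})^{M-m-1}L_{\ell,0}L_{g,0}$ and $\|\nabla g^\spp{m}\|\le L_{g,0}$ from Lemma~\ref{lemma:bounded-signal} and Assumption~\ref{assumption:g-Lip}) and one $\propto\|\tdz^\spp{m}-z^\spp{m}\|$ (controlled with \eqref{inequality:z-stability}). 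Unrolling from $M$ down to $m$ produces a geometric factor $\sum_{m''=m}^{M-1}(L_{g,0}W_{\max,2})^{M-1-m''}$; squaring via $(a+b+c)^2\le 3(a^2+b^2+c^2)$ and again bounding $\sum_{m'}\|\tdW^\spp{m'}-W^\spp{m'}\|^2$ by $\|\tdW-W\|^2$ delivers the stated $C_{\delta'}^\spp{m}$.

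The main obstacle is the backward recursion. Unlike Lemma~\ref{lemma:signal-delay}, where both weight sequences coincide on the backward pass, here they genuinely differ, so there is an \emph{extra} source term $\|\tddelta^\spp{m+1}\|\,\|\tdW^\spp{m+1}-W^\spp{m+1}\|\,\|\nabla g^\spp{m}(\tdz^\spp{m})\|$ to propagate through all layers; combining this with the $\tdz$-perturbation term and the top-layer term through the three-way split and the repeated use of $(\sum_{i=1}^n a_i)^2\le n\sum_i a_i^2$ is precisely where every constant in \eqref{definition:C_delta_stability} is generated. I would also keep the geometric sums symbolic rather than summing them, since $L_{g,0}W_{\max,2}$ need not be less than $1$ (consistent with how \eqref{definition:C_delta} was handled). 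Everything else is routine bookkeeping using the assumptions and Lemma~\ref{lemma:bounded-signal}.
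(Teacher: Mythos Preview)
Your proposal is essentially correct and matches the paper's proof in all structural ingredients: induction on $m$ for the forward signals via the split $\tdz^\spp{m}-z^\spp{m}=(\tdW^\spp{m}-W^\spp{m})\tdx^\spp{m}+W^\spp{m}(\tdx^\spp{m}-x^\spp{m})$, use of Lemma~\ref{lemma:bounded-signal} for the signal norms, the three-way split for $\tddelta^\spp{m}-\delta^\spp{m}$, and the extra backward source term $\propto\|\tdW^\spp{m+1}-W^\spp{m+1}\|$ that you correctly flag as the new piece relative to Lemma~\ref{lemma:signal-delay}.

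The one discrepancy is the \emph{order} in which you square. Your opening line promises to ``carry squared norms throughout,'' but the detailed plan actually unrolls the \emph{unsquared} recursion and only squares at the end via Cauchy--Schwarz; that route would produce an $(m{-}1)$- or $M$-dependent factor, not the clean $2$ and $3$ in \eqref{definition:C_x_stability}--\eqref{definition:C_delta_stability}. The paper instead squares at every step---applying $(a+b)^2\le 2a^2+2b^2$ in the forward recursion and $(a+b+c)^2\le 3(a^2+b^2+c^2)$ in the backward recursion---and then unrolls the resulting \emph{squared}-norm recursions directly; the sums $\sum_{m'}\|\tdW^\spp{m'}-W^\spp{m'}\|^2$ then collapse to $\|\tdW-W\|^2$ without any Cauchy--Schwarz loss. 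Both routes prove a bound of the required form, but if you want to recover the stated constants verbatim, square first and unroll second.
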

\begin{proof}[Proof of Lemma \ref{lemma:signal-stability-short} and Lemma \ref{lemma:signal-stability}]
    We first bound the delay of forward signal $\tdx^\spp{m}_k$. At the first layer, there is no delay, i.e. $\tdx^\spp{1} = x^\spp{1} = x$. For other layers $m\ge1$
    \begin{align}
        \label{inequality:signal-stability-forward-1}
        &\ \|\tdx^\spp{m+1}-x^\spp{m+1}\|^2 
        \le L_{g,0}^2\|\tdz^\spp{m}-z^\spp{m}\|^2 \\
        =&\ L_{g,0}^2\|\tdW^\spp{m}\tdx^\spp{m}-W^\spp{m}x^\spp{m}\|^2
        \nonumber \\
        =&\ L_{g,0}^2\|(\tdW^\spp{m}-W^\spp{m})\tdx^\spp{m}+W^\spp{m}(\tdx^\spp{m}-x^\spp{m})\|^2
        \nonumber \\
        \le&\ 2L_{g,0}^2(\|\tdW^\spp{m}-W^\spp{m}\|^2\|\tdx^\spp{m}\|^2+\|W^\spp{m}\|^2\|\tdx^\spp{m}-x^\spp{m}\|^2)
        \nonumber \\
        \le&\ 2L_{g,0}^2((L_{g,0}W_{\max,2})^{2m}\|\tdW^\spp{m}-W^\spp{m}\|^2+W_{\max,2}^2\|\tdx^\spp{m}-x^\spp{m}\|^2).
        \nonumber 
    \end{align}
    Expanding inequality \eqref{inequality:signal-stability-forward-1} from $m$ to $1$ and using the fact $\tdx^\spp{1} = x^\spp{1} = x$ yield
    \begin{align}
        \label{inequality:signal-stability-forward-3}
        \|\tdx^\spp{m}-x^\spp{m}\|^2
        \le&\ 2L_{g,0}^2(L_{g,0}W_{\max,2})^{2(m-1)}\|\tdW^\spp{m-1}-W^\spp{m-1}\|^2
        +2L_{g,0}^2W_{\max,2}^2\|\tdx^\spp{m-1}-x^\spp{m-1}\|^2
        \\
        \le&\ 2L_{g,0}^2\sum_{m'=2}^{m}(L_{g,0}W_{\max,2})^{2(m'-1)}(L_{g,0}W_{\max,2})^{2(m-m')}\|\tdW^\spp{m'-1}-W^\spp{m'-1}\|^2
        \nonumber\\
        =&\ 2L_{g,0}^2(L_{g,0}W_{\max,2})^{2(m-1)}\sum_{m'=2}^{m}\|\tdW^\spp{m'-1}-W^\spp{m'-1}\|^2
        \nonumber\\
        =&\ 2L_{g,0}^2(L_{g,0}W_{\max,2})^{2(m-1)}\sum_{m'=1}^{m-1}\|\tdW^\spp{m'}-W^\spp{m'}\|^2
        \nonumber
    \end{align}
    where the last equality reindexes the summation.
    By applying the relation 
    \begin{align}
        \sum_{m'=1}^{m-1}\|\tdW^\spp{m'}-W^\spp{m'}\|^2 \le \sum_{m'=1}^{M}\|\tdW^\spp{m'}-W^\spp{m'}\|^2 = \|\tdW-W\|^2
    \end{align}
    we prove \eqref{inequality:x-stability}.
    Inferring in the similar way of \eqref{inequality:signal-stability-forward-1}-\eqref{inequality:signal-stability-forward-3}, we obtain \eqref{inequality:z-stability}
    \begin{align}
        \label{inequality:signal-stability-forward-z-1}
        \|\tdz^\spp{m}-z^\spp{m}\|^2\le&\ 2(L_{g,0}W_{\max,2})^{2(m-1)}
        \|\tdW-W\|^2
    \end{align}
    We then bound the difference of the backward signal $\tddelta^\spp{m}$ for $m\in[M-1]$
    \begin{align}
        \label{inequality:signal-stability-backward-1}
        &\ \|\tddelta^\spp{m}-\delta^\spp{m}\|^2
        =\|\tddelta^\spp{m+1} \tdW^\spp{m+1} \nabla g^\spp{m}(z^\spp{m})-\delta^\spp{m+1} W^\spp{m+1} \nabla g^\spp{m}(z^\spp{m})\|^2\\
        \le&\ 3\|\tddelta^\spp{m+1}-\delta^\spp{m+1}\|^2 \|\tdW^\spp{m+1}\|^2 \|\nabla g^\spp{m}(z^\spp{m})\|^2
        \nonumber\\
        &\ +3\|\delta^\spp{m+1}\|^2 \|\tdW^\spp{m+1}-W^\spp{m+1}\|^2 \|\nabla g^\spp{m}(z^\spp{m})\|^2
        \nonumber\\
        &\ +3\|\delta^\spp{m+1}\|^2 \|W^\spp{m+1}\|^2 \|\nabla g^\spp{m}(z^\spp{m})-\nabla g^\spp{m}(\tdz^\spp{m})\|^2
        \nonumber\\
        \le&\ 3(L_{g,0}W_{\max,2})^2\|\tddelta^\spp{m+1}-\delta^\spp{m+1}\|^2 
        +3(L_{g,0}W_{\max,2})^{2(M-m)}L_{\ell,0}^2L_{g,0}^4 \|\tdW^\spp{m+1}-W^\spp{m+1}\|^2
        \nonumber\\
        &\ +3W_{\max,2}^2(L_{g,0}W_{\max,2})^{2(M-m-1)}L_{\ell,0}^2 L_{g,0}^2 L_{g,1}^2 \|z^\spp{m}-\tdz^\spp{m}\|^2
        \nonumber
    \end{align}
    where the last inequality comes from Assumption \ref{assumption:g-Lip} (1)-(2) and Lemma \ref{lemma:bounded-signal}. Substituting \eqref{inequality:signal-stability-forward-z-1} into \eqref{inequality:signal-stability-backward-1}, 
    we can further bound \eqref{inequality:signal-stability-backward-1} by
    \begin{align}
        \label{inequality:signal-stability-backward-2}
        &\ 
        \|\tddelta^\spp{m}-\delta^\spp{m}\|^2
        \\
        \le&\ 3L_{g,0}^2W_{\max,2}^2\|\tddelta^\spp{m+1}-\delta^\spp{m+1}\|^2
        +3(L_{g,0}W_{\max,2})^{2(M-m)}L_{\ell,0}^2L_{g,0}^4 \|\tdW^\spp{m+1}-W^\spp{m+1}\|^2
        \nonumber
        \\
        &\ +3W_{\max,2}^2(L_{g,0}W_{\max,2})^{2(M-m-1)}L_{\ell,0}^2L_{g,0}^2L_{g,1} ^2(L_{g,0}W_{\max,2})^{2(m-1)}\sum_{m'=1}^{m-1}\|\tdW^\spp{m'}-W^\spp{m'}\|^2
        \nonumber\\
        =&\ 3L_{g,0}^2W_{\max,2}^2\|\tddelta^\spp{m+1}-\delta^\spp{m+1}\|^2 
        +3(L_{g,0}W_{\max,2})^{2(M-m)}L_{\ell,0}^2L_{g,0}^4 \|\tdW^\spp{m+1}-W^\spp{m+1}\|^2
        \nonumber\\
        &\ +3(L_{g,0}W_{\max,2})^{2(M-1)}L_{\ell,0}^2L_{g,1}^2 \sum_{m'=1}^{m-1}\|\tdW^\spp{m'}-W^\spp{m'}\|^2
        \nonumber\\
        \le &\ 3(L_{g,0}W_{\max,2})^{2(M-m)}\|\delta^\spp{M}-\tddelta^\spp{M}\|^2
        \nonumber\\
        &\ + 3(L_{g,0}W_{\max,2})^{2(M-m)}L_{\ell,0}^2L_{g,0}^4 \sum_{m''=m}^{M-1}(L_{g,0}W_{\max,2})^{2(M-1-m'')}\|\tdW^\spp{m''+1}-W^\spp{m''+1}\|^2
        \nonumber\\
        &\ +3(L_{g,0}W_{\max,2})^{2(M-1)}L_{\ell,0}^2L_{g,1}^2 \sum_{m''=m}^{M-1}(L_{g,0}W_{\max,2})^{2(M-1-m'')}\sum_{m'=1}^{m''-1}\|\tdW^\spp{m'}-W^\spp{m'}\|^2.
        \nonumber
    \end{align}
    
    To bound the first term in the \ac{RHS} of \eqref{inequality:signal-stability-backward-2}, we manipulate the different $\|\delta^\spp{M}-\tddelta^\spp{M}\|$ by
    \begin{align}
        \label{inequality:signal-stability-backward-2-1}
        &\ 
        \|\delta^\spp{M}-\tddelta^\spp{M}\|^2 
        \\
        =&\ \|\nabla_{x^\spp{M+1}}\ell(x^\spp{M+1}, y) \nabla g^\spp{M}(z^\spp{M}) - \nabla_{x^\spp{M+1}}\ell(\tdx^\spp{M+1}, y) \nabla g^\spp{M}(\tdz^\spp{M})\|^2
        \nonumber 
        \\
        \le&\ 2\|\nabla_{x^\spp{M+1}}\ell(x^\spp{M+1}, y)- \nabla_{x^\spp{M+1}}\ell(\tdx^\spp{M+1}, y)\|^2
        \|\nabla g^\spp{M}(z^\spp{M})\|^2 
        \nonumber \\
        &\ +2\|\nabla_{x^\spp{M+1}}\ell(\tdx^\spp{M+1}, y)\|^2
        \|\nabla g^\spp{M}(\tdz^\spp{M})-\nabla g^\spp{M}(z^\spp{M})\|^2
        \nonumber \\
        \overset{(a)}{\le}&\ 2L_{\ell, 1}^2L_{g, 0}^2 \|x^\spp{M+1} - \tdx^\spp{M+1}\|^2
        +2L_{\ell, 0}^2L_{g, 1}^2
        \|\tdz^\spp{M}-z^\spp{M}\|^2
        \nonumber \\
        \le&\ (2L_{\ell, 1}^2L_{g, 0}^4+2L_{\ell, 0}^2L_{g, 1}^2)
        \|\tdz^\spp{M}-z^\spp{M}\|^2
        \nonumber \\
        \le&\ (2L_{\ell, 1}^2L_{g, 0}^4+2L_{\ell, 0}^2L_{g, 1}^2)
        (L_{g,0}W_{\max,2})^{2(M-1)}
        \sum_{m'=1}^{M-1}\|\tdW^\spp{m'}-W^\spp{m'}\|^2.
        \nonumber
    \end{align}
    where (a) comes from Assumption \ref{assumption:g-Lip} and Assumption \ref{assumption:loss-Lip}.
    
    To bound the second term in the \ac{RHS} of \eqref{inequality:signal-stability-backward-2}, we add some terms into the summation
    \begin{align}
        \label{inequality:signal-stability-backward-2-2}
        &\ 
        \sum_{m''=m}^{M-1}(L_{g,0}W_{\max,2})^{2(M-1-m'')}\|\tdW^\spp{m''+1}-W^\spp{m''+1}\|^2 
        \\
        \le&\ \sum_{m''=m}^{M-1}(L_{g,0}W_{\max,2})^{2(M-1-m'')}\sum_{m'=1}^{M}\|\tdW^\spp{m'}-W^\spp{m'}\|^2
        \nonumber 
        \\
        =&\ \frac{1-(L_{g,0}W_{\max,2})^{2(M-m)}}{1-L_{g,0}W_{\max,2}}\sum_{m'=1}^{M}\|\tdW^\spp{m'}-W^\spp{m'}\|^2.
        \nonumber
    \end{align}
    
    To bound the third term in the \ac{RHS} of \eqref{inequality:signal-stability-backward-2}, we change the upper bound of the second summation from $m''$ to $M$ and get
    \begin{align}
        \label{inequality:signal-stability-backward-2-3}
        &\ 
        \sum_{m''=m}^{M-1}(L_{g,0}W_{\max,2})^{2(M-1-m'')}\sum_{m'=1}^{m''-1}\|\tdW^\spp{m'}-W^\spp{m'}\|^2 
        \\
        \le&\ \sum_{m''=m}^{M-1}(L_{g,0}W_{\max,2})^{2(M-1-m'')}\sum_{m'=1}^{M}\|\tdW^\spp{m'}-W^\spp{m'}\|^2 
        \nonumber
        \\
        =&\ \frac{1-(L_{g,0}W_{\max,2})^{2(M-m)}}{1-L_{g,0}W_{\max,2}}\sum_{m'=1}^{M}\|\tdW^\spp{m'}-W^\spp{m'}\|^2.
        \nonumber
    \end{align}

    Substituting \eqref{inequality:signal-stability-backward-2-1}--\eqref{inequality:signal-stability-backward-2-3} back into \eqref{inequality:signal-stability-backward-2} we have
    \begin{align}
        \|\tddelta^\spp{m}-\delta^\spp{m}\|^2
        \le C_{\delta'}^\spp{m}\sum_{m'=1}^{M}\|\tdW^\spp{m'}-W^\spp{m'}\|^2.
    \end{align}
    This completes the proof.
\end{proof}


\begin{lemma}[smoothness of the objective, restatement of Lemma \ref{lemma:objective-L-smooth-short}]
    \label{lemma:objective-L-smooth}
    Under Assumption \ref{assumption:normalized-feature}--\ref{assumption:W-bounded}, the objective is $L$-smooth with respect to $W$, i.e. $\forall \xi=(x, y)\in\ccalX\times\ccalY$, 
    \begin{align}
        \label{equation:L}
        \|\nabla f(W;\xi)-\nabla f(\tdW;\xi)\| \le L_f\|W-\tdW\|
    \end{align}
    where the smoothness constant is defined as 
    \begin{align}
        L_f := \sqrt{2\sum_{m=1}^{M}((L_{g,0}W_{\max,2})^{2m} C_{\delta'}^\spp{m}
        +(L_{g,0}W_{\max,2})^{2(M-m)}L_{\ell,0}^2L_{g,0}^2 C_{x'}^\spp{m})}.
    \end{align}
\end{lemma}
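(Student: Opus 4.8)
The plan is to differentiate $f(W;\xi)$ blockwise and reduce the claim to the signal-stability bounds already established. Recall from \eqref{definition:gradient} that $\nabla_{W^\spp{m}} f(W;\xi) = \delta^\spp{m}\otimes x^\spp{m}$, so, since the Frobenius norm decomposes across the weight blocks,
\begin{align}
    \|\nabla f(W;\xi)-\nabla f(\tdW;\xi)\|^2 = \sum_{m=1}^{M} \|\delta^\spp{m}\otimes x^\spp{m} - \tddelta^\spp{m}\otimes \tdx^\spp{m}\|^2,
\end{align}
where $x^\spp{m},\delta^\spp{m}$ denote the forward/backward signals generated by $W$ and $\tdx^\spp{m},\tddelta^\spp{m}$ those generated by $\tdW$, exactly as in Lemma \ref{lemma:signal-stability}.

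First I would split each summand by adding and subtracting $\tddelta^\spp{m}\otimes x^\spp{m}$:
\begin{align}
    \delta^\spp{m}\otimes x^\spp{m} - \tddelta^\spp{m}\otimes \tdx^\spp{m}
    = (\delta^\spp{m}-\tddelta^\spp{m})\otimes x^\spp{m} + \tddelta^\spp{m}\otimes(x^\spp{m}-\tdx^\spp{m}),
\end{align}
so that, using $\|a\otimes b\|=\|a\|\,\|b\|$ together with $(s+t)^2\le 2s^2+2t^2$,
\begin{align}
    \|\delta^\spp{m}\otimes x^\spp{m} - \tddelta^\spp{m}\otimes \tdx^\spp{m}\|^2
    \le 2\|\delta^\spp{m}-\tddelta^\spp{m}\|^2\|x^\spp{m}\|^2 + 2\|\tddelta^\spp{m}\|^2\|x^\spp{m}-\tdx^\spp{m}\|^2 .
\end{align}
Next I would invoke Lemma \ref{lemma:bounded-signal} (which applies to both $W$ and $\tdW$, since both satisfy Assumption \ref{assumption:W-bounded}) to get $\|x^\spp{m}\|^2\le (L_{g,0}W_{\max,2})^{2m}$ and $\|\tddelta^\spp{m}\|^2\le (L_{g,0}W_{\max,2})^{2(M-m)}L_{\ell,0}^2L_{g,0}^2$, and Lemma \ref{lemma:signal-stability} (inequality \eqref{inequality:x-stability} and its backward counterpart) to get $\|x^\spp{m}-\tdx^\spp{m}\|^2\le C_{x'}^\spp{m}\|\tdW-W\|^2$ and $\|\delta^\spp{m}-\tddelta^\spp{m}\|^2\le C_{\delta'}^\spp{m}\|\tdW-W\|^2$. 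Substituting these into the display above and summing over $m\in[M]$ gives
\begin{align}
    \|\nabla f(W;\xi)-\nabla f(\tdW;\xi)\|^2 \le 2\sum_{m=1}^{M}\left((L_{g,0}W_{\max,2})^{2m}C_{\delta'}^\spp{m} + (L_{g,0}W_{\max,2})^{2(M-m)}L_{\ell,0}^2L_{g,0}^2 C_{x'}^\spp{m}\right)\|W-\tdW\|^2,
\end{align}
and taking square roots yields \eqref{equation:L} with $L_f$ as stated.

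The only delicate point is upstream, in Lemma \ref{lemma:signal-stability}: one must check that the layerwise recursions for $\|x^\spp{m}-\tdx^\spp{m}\|^2$ and $\|\delta^\spp{m}-\tddelta^\spp{m}\|^2$ close, i.e. that the geometric factors $(L_{g,0}W_{\max,2})$ accumulated layer by layer are tracked correctly and that the terminal discrepancy $\|\delta^\spp{M}-\tddelta^\spp{M}\|^2$ is controlled via smoothness of $\ell$ and of $g^\spp{M}$ (Assumptions \ref{assumption:g-Lip} and \ref{assumption:loss-Lip}). Once that lemma is in hand, the present proof is a short assembly; no attempt is made to make $L_f$ tight, only finite and independent of $\xi$.
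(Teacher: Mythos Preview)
Your proof is correct and essentially identical to the paper's: both decompose the gradient difference blockwise via $\nabla_{W^\spp{m}} f=\delta^\spp{m}\otimes x^\spp{m}$, split each outer-product difference into two pieces, bound the undifferenced factors by Lemma \ref{lemma:bounded-signal} and the differences by Lemma \ref{lemma:signal-stability}, and take a square root. The only cosmetic difference is that you pivot on $\tddelta^\spp{m}\otimes x^\spp{m}$ whereas the paper pivots on $\delta^\spp{m}\otimes\tdx^\spp{m}$; since Lemma \ref{lemma:bounded-signal} gives the same bounds for tilded and untilded signals, this changes nothing.
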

\begin{proof}[Proof of Lemma \ref{lemma:objective-L-smooth-short} and Lemma \ref{lemma:objective-L-smooth}]
    Analogous to the proof of Lemma \eqref{lemma:signal-stability}, we use $x^\spp{m}$/$\delta^\spp{m}$ and $\tdx^\spp{m}$/$\tddelta^\spp{m}$ to denote the forward/backward signals related to $W$ and $\tdW$, respectively.
    It follows by definition of gradient \eqref{definition:gradient} that
    \begin{align}
        \|\nabla f(W;\xi)-\nabla f(\tdW;\xi)\|^2
        =&\ \sum_{m=1}^{M}\|\nabla_{W^\spp{m}} f(W;\xi)-\nabla_{W^\spp{m}} f(\tdW;\xi)\|^2
        \\
        =&\ \sum_{m=1}^{M}\|\tddelta^\spp{m}\otimes \tdx^\spp{m} 
        -\delta^\spp{m}\otimes x^\spp{m}\|^2
        \nonumber \\
        =&\ \sum_{m=1}^{M}\|(\tddelta^\spp{m}-\delta^\spp{m})\otimes \tdx^\spp{m} +\delta^\spp{m}\otimes(\tdx^\spp{m}-x^\spp{m})\|^2
        \nonumber \\
        \le &\ 2\sum_{m=1}^{M}\|\tddelta^\spp{m}-\delta^\spp{m}\|^2 \|\tdx^\spp{m}\|^2+2\|\delta^\spp{m}\|^2\|\tdx^\spp{m}-x^\spp{m}\|^2.
        \nonumber 
    \end{align}
    According to Lemma \ref{lemma:bounded-signal} and  Lemma \ref{lemma:signal-stability}, it follows that
    \begin{align}
        \label{inequality:objective-L-smooth-2}
        &\ 
        \|\nabla f(W;\xi)-\nabla f(\tdW;\xi)\|^2
        \\
        \le &\ 2\sum_{m=1}^{M}((L_{g,0}W_{\max,2})^{2m} C_{\delta'}^\spp{m}
        +(L_{g,0}W_{\max,2})^{2(M-m)}L_{\ell,0}^2L_{g,0}^2 C_{x'}^\spp{m})
        \|\tdW-W\|^2
        \nonumber
        \\
        = &\ L_f^2 \|\tdW-W\|^2.
        \nonumber
    \end{align}
    Taking the square root on both sides of \eqref{inequality:objective-L-smooth-2} completes the proof.
\end{proof}

\section{Proof of Theorem \ref{theorem:ASGD-sync-convergence-noncvx-linear}: Analog SGD with synchronous pipeline}
\label{section:proof-ASGD-sync-convergence-noncvx-linear}
This section provides the convergence guarantee of the Analog SGD with synchronous pipeline, whose iteration is
\begin{align}
    W_{k, b+1}^\spp{m} = W_{k, b}^\spp{m} - \frac{\alpha}{B} \nabla_{W^\spp{m}} f(W_k; \xi_{k, b})
    - \frac{\alpha}{\tau B}|\nabla_{W^\spp{m}} f(W_k; \xi_{k, b})|\odot W_{k,b}^\spp{m}.
\end{align}
\ThmASGDSyncConvergenceNoncvxLinear*

\begin{proof}[Proof of Theorem \ref{theorem:ASGD-sync-convergence-noncvx-linear}]
Denote the average gradient by
\begin{align}
    \barnabla f(W_k;\xi_k) := \frac{1}{B}\sum_{b=1}^{B}\nabla f(W_k;\xi_{k,b}),
\end{align}
which satisfies that $\mbE_{\{\xi_{k,b}:b\in[B]\}}[\barnabla f(W_k;\xi_k)] = \nabla f(W_k)$. Furthermore, the independence of sampling (c.f. Assumption \ref{assumption:noise}) ensures that 
\begin{align}
    \label{inequality:ASGD-sync-convergence-linear-0}
    &\ \mbE_{\{\xi_{k,b}:b\in[B]\}}[\|\barnabla f(W_k;\xi_k)-\nabla f(W_k)\|^2]
    \\
    =&\ \mbE_{\{\xi_{k,b}:b\in[B]\}}[\|\frac{1}{B}\sum_{b=1}^{B}\nabla f(W_k;\xi_{k,b})-\nabla f(W_k)\|^2] 
    \nonumber
    \\
    =&\ \frac{1}{B^2}\sum_{b=1}^{B}\mbE_{\xi_{k,b}}[\|\nabla f(W_k;\xi_{k,b})-\nabla f(W_k)\|^2]
    \le \sigma^2 / B.
    \nonumber
\end{align}
The $L$-smooth assumption of the objective (c.f. Lemma \ref{lemma:objective-L-smooth}) implies that
    \begin{align}
        \label{inequality:ASGD-sync-convergence-linear-1}
        &\  \mbE_{\{\xi_{k,b}:b\in[B]\}}[f(W_{k+1})]  \\
        \le&\ f(W_k)+ \mbE_{\{\xi_{k,b}:b\in[B]\}}[\la \nabla f(W_k), W_{k+1}-W_k\ra] + \frac{L_f}{2} \mbE_{\{\xi_{k,b}:b\in[B]\}}[\|W_{k+1}-W_k\|^2]
        \nonumber \\
        \le&\ f(W_k) - \frac{\alpha}{2} \|\nabla f(W_k)\|^2 -  (\frac{1}{2\alpha}-L_f)\mathbb{E}_{\varepsilon_k}[\|W_{k+1}-W_k+\alpha(\barnabla f(W_k;\xi_k)-\nabla f(W_k))\|^2]
        \nonumber \\
        &\ +\alpha^2L_f \mathbb{E}_{\varepsilon_k}[\|\barnabla f(W_k;\xi_k)-\nabla f(W_k)\|^2]
        + \frac{1}{2\alpha}\|W_{k+1}-W_k+\alpha(\nabla f(W_k)+\varepsilon_k)\|^2
        \nonumber
    \end{align}
    where the second inequality comes from the assumption that the expectation of the noise is $0$ (Assumption \ref{assumption:noise})
    \begin{align}
        &\ 
        \mathbb{E}_{\varepsilon_k}[\la \nabla f(W_k), W_{k+1}-W_k\ra] 
        \\
        =&\ \mathbb{E}_{\varepsilon_k}[\la \nabla f(W_k), W_{k+1}-W_k\alpha (\barnabla f(W_k;\xi_k)-\nabla f(W_k))\ra]
        \nonumber 
        \\
        =&\ - \frac{\alpha}{2} \|\nabla f(W_k)\|^2 
        - \frac{1}{2\alpha}\mathbb{E}_{\varepsilon_k}[\|W_{k+1}-W_k+\alpha (\barnabla f(W_k;\xi_k)-\nabla f(W_k))\|^2]
        \nonumber \\
        &\ + \frac{1}{2\alpha}\mathbb{E}_{\varepsilon_k}[\|W_{k+1}-W_k+\alpha \barnabla f(W_k;\xi_k)\|^2]
        \nonumber
    \end{align}
    and the following inequality
    \begin{align}
        \frac{L_f}{2}\mathbb{E}_{\varepsilon_k}[\|W_{k+1}-W_k\|^2]  
        \le&\  L_f \mbE_{\{\xi_{k,b}:b\in[B]\}}[\|W_{k+1}-W_k+\alpha(\barnabla f(W_k;\xi_k)-\nabla f(W_k))\|^2]  
        \\
        &\ +\alpha^2L \mbE_{\{\xi_{k,b}:b\in[B]\}}[\|\barnabla f(W_k;\xi_k)-\nabla f(W_k)\|^2].
        \nonumber
    \end{align}
    With the learning rate $\alpha\le\frac{1}{2L}$ and bounded variance (c.f. \eqref{inequality:ASGD-sync-convergence-linear-0}), \eqref{inequality:ASGD-sync-convergence-linear-1} becomes
    \begin{align}
        \label{inequality:ASGD-sync-convergence-linear-2}
        \mbE_{\{\xi_{k,b}:b\in[B]\}}[f(W_{k+1})]
        \le&\ f(W_k) - \frac{\alpha}{2} \|\nabla f(W_k)\|^2 
        +\frac{\alpha^2L_f\sigma^2}{B}
        \bkeqwn
        + \frac{1}{2\alpha}\mbE_{\{\xi_{k,b}:b\in[B]\}}[\|W_{k+1}-W_k+\alpha\barnabla f(W_k;\xi_k)\|^2].
    \end{align}
    According the dynamics of {\AnalogSGDSP},
    the last term in the \ac{RHS} of \eqref{inequality:ASGD-sync-convergence-linear-2} is bounded by
    \begin{align}
        \label{inequality:ASGD-sync-convergence-linear-2-1}
        &\ \frac{1}{2\alpha} \mbE_{\{\xi_{k,b}:b\in[B]\}}[\|W_{k+1}-W_k+\alpha \barnabla f(W_k;\xi_k)\|^2]
        \\
        =&\ \frac{\alpha}{2\tau^2} \mbE_{\{\xi_{k,b}:b\in[B]\}}[\|\frac{1}{ B}\sum_{b=1}^B|\nabla f(W_k; \xi_{k, b})|\odot W_{k,b}\|^2]
        \nonumber
        \\
        \le&\ \frac{\alpha}{2\tau^2} \frac{1}{ B}\sum_{b=1}^B\mbE_{\xi_{k,b}}[\||\nabla f(W_k; \xi_{k, b})|\odot W_{k,b}\|^2]
        \nonumber
        \\
        \le&\ \frac{\alpha}{2\tau^2}\max_{b\in[B]}\|W_{k,b}\|^2_\infty \frac{1}{B}\sum_{b=1}^B\mbE_{\xi_{k,b}}[\|\nabla f(W_k; \xi_{k, b})\|^2 ].
        \nonumber
    \end{align}
    By the variance decomposition and Assumption \ref{assumption:noise}, the term above can be bounded by
    \begin{align}
        \label{inequality:grad-var-decomposion-microbatch}
        \mbE_{\xi_{k,b}}[\|f(W_k;\xi_{k,b})\|^2]
        =&\ \|\nabla f(W_k)\|^2+\mbE_{\xi_k}[\|\nabla f(W_k;\xi_{k,b})-\nabla f(W_k)\|^2] 
        \le \|\nabla f(W_k)\|^2+\sigma^2.
    \end{align}
    Consequently, combining \eqref{inequality:ASGD-sync-convergence-linear-2}, \eqref{inequality:ASGD-sync-convergence-linear-2-1} and \eqref{inequality:grad-var-decomposion-microbatch} yields
    \begin{align}
        \label{inequality:ASGD-sync-convergence-linear-3}
        &\ 
        \mbE_{\{\xi_{k,b}:b\in[B]\}}[f(W_{k+1})] 
        \\
        \le&\ f(W_k) - \frac{\alpha}{2}\lp 1-\frac{\max_{b\in[B]}\|W_{k,b}\|^2_\infty}{\tau^2}\rp \|\nabla f(W_k)\|^2 
        + \frac{\alpha^2L_f\sigma^2}{B}
        + \frac{\alpha\sigma^2}{2\tau^2}\max_{b\in[B]}\|W_{k,b}\|^2_\infty .
        \nonumber
    \end{align}
    Organizing the terms, taking expectation, averaging \eqref{inequality:ASGD-sync-convergence-linear-3} over $k$ from $0$ to $K-1$, we obtain
    \begin{align}
        \label{inequality:ASGD-sync-convergence-linear-4}
        \frac{1}{K}\sum_{k=0}^{K-1} \mbE[\|\nabla f(W_k)\|^2]
        \le&\ \frac{2(f(W_0) - \mbE[f(W_{k+1})])}{1-W_{\max, \infty}^2/\tau^2}
        + \frac{\alpha L_f\sigma^2}{B(1-W_{\max, \infty}^2/\tau^2)} 
        \\
        & 
        + \frac{\sigma^2}{K}\sum_{k=0}^{K-1} \frac{\max_{b\in[B]}\|W_{k,b}\|^2_\infty / \tau^2}{1-\max_{b\in[B]}\|W_{k,b}\|^2_\infty/\tau^2} .
        \nonumber 
        \\
        \le&\ 4\sqrt{\frac{(f(W_0) -f^*)\sigma^2L_f}{BK}}
        \frac{1}{1-W_{\max,\infty}^2/\tau^2}
        + \sigma^2 S_K
        \nonumber
    \end{align}
    where the second inequality uses $f(W_{k+1})\ge f^*$ and specifies $\alpha=\sqrt{\frac{B(f(W_0) - f^*)}{\sigma^2L_fK}}$. Now we complete the proof.
\end{proof}

\section{Proof of Theorem \ref{theorem:ASGD-async-convergence-noncvx-linear}: Analog SGD with asynchronous pipeline}
\label{section:proof-ASGD-async-convergence-noncvx-linear}
This section provides the convergence guarantee of the Analog SGD under non-convex assumption on asymmetric linear devices.
\ThmASGDAsyncConvergenceNoncvxLinear*

\begin{proof}[Proof of Theorem \ref{theorem:ASGD-async-convergence-noncvx-linear}]\allowdisplaybreaks
    
    The $L$-smooth assumption of the objective (c.f. Lemma \ref{lemma:objective-L-smooth}) implies that
    \begin{align}
        \label{inequality:ASGD-convergence-linear-1}
        &\ 
        \mathbb{E}_{\xi_k}[f(W_{k+1})]  
        \\
        \le&\ f(W_k)+\mathbb{E}_{\xi_k}[\la \nabla f(W_k), W_{k+1}-W_k\ra] + \frac{L_f}{2}\mathbb{E}_{\xi_k}[\|W_{k+1}-W_k\|^2]
        \nonumber 
        \\
        \le&\ f(W_k) - \frac{\alpha}{2} \|\nabla f(W_k)\|^2 -  (\frac{1}{2\alpha}-L_f)\mathbb{E}_{\xi_k}[\|W_{k+1}-W_k+\alpha(\nabla f(W_k;\xi_k)-\nabla f(W_k))\|^2]
        \nonumber \\
        &\ +\alpha^2L_f \mathbb{E}_{\xi_k}[\|\nabla f(W_k;\xi_k)-\nabla f(W_k)\|^2]
        + \frac{1}{2\alpha}\mathbb{E}_{\xi_k}[\|W_{k+1}-W_k+\alpha \nabla f(W_k;\xi_k)\|^2]
        \nonumber
    \end{align}
    where the second inequality comes from the assumption that the expectation of the noise is $0$ (Assumption \ref{assumption:noise})
    \begin{align}
        &\ 
        \mathbb{E}_{\xi_k}[\la \nabla f(W_k), W_{k+1}-W_k\ra] 
        \\
        =&\ \mathbb{E}_{\xi_k}[\la \nabla f(W_k), W_{k+1}-W_k+\alpha (\nabla f(W_k;\xi_k)-\nabla f(W_k))\ra]
        \nonumber 
        \\
        =&\ - \frac{\alpha}{2} \|\nabla f(W_k)\|^2 
        - \frac{1}{2\alpha}\mathbb{E}_{\xi_k}[\|W_{k+1}-W_k+\alpha (\nabla f(W_k;\xi_k)-\nabla f(W_k))\|^2]
        \bkeq
        + \frac{1}{2\alpha}\mathbb{E}_{\xi_k}[\|W_{k+1}-W_k+\alpha \nabla f(W_k;\xi_k)\|^2]
        \nonumber
    \end{align}
    and the following inequality
    \begin{align}
        \frac{L_f}{2}\mathbb{E}_{\xi_k}[\|W_{k+1}-W_k\|^2]
        \le&\  L_f\mathbb{E}_{\xi_k}[\|W_{k+1}-W_k+\alpha(\nabla f(W_k;\xi_k)-\nabla f(W_k))\|^2]  \\
        &\ +\alpha^2L_f\mathbb{E}_{\xi_k}[\|\nabla f(W_k;\xi_k)-\nabla f(W_k)\|^2].
        \nonumber
    \end{align}
    With the learning rate $\alpha\le\frac{1}{4L_f}$ and bounded variance of noise (Assumption \ref{assumption:noise}), \eqref{inequality:ASGD-convergence-linear-1} becomes
    \begin{align}
        \label{inequality:ASGD-convergence-linear-2}
        \mathbb{E}_{\xi_k}[f(W_{k+1})]
        \le&\ f(W_k) - \frac{\alpha}{2} \|\nabla f(W_k)\|^2 
        +\alpha^2L_f\sigma^2
        + \frac{1}{2\alpha}\mathbb{E}_{\xi_k}[\|W_{k+1}-W_k+\alpha \nabla f(W_k;\xi_k)\|^2]
        \nonumber \\
        &\ - \frac{1}{4\alpha}\mathbb{E}_{\xi_k}[\|W_{k+1}-W_k+\alpha(\nabla f(W_k;\xi_k)-\nabla f(W_k))\|^2].
    \end{align}
    By the dynamics of Analog SGD, the last in the \ac{RHS} of \eqref{inequality:ASGD-convergence-linear-2} is bounded by
    \begin{align}
        \label{inequality:ASGD-asyn-bias-1}
        &\ \frac{1}{2\alpha}\mathbb{E}_{\xi_k}[\|W_{k+1}-W_k+\alpha \nabla f(W_k;\xi_k)\|^2]
        \\
        =&\ \frac{\alpha}{2}\mathbb{E}_{\xi_k}[\|\tdnabla^k f - \nabla f(W_k;\xi_k)+\frac{1}{\tau}|\tdnabla^k f|\odot W_k \|^2]
        \nonumber \\
        =&\ \frac{\alpha}{2}\mathbb{E}_{\xi_k}[\|\tdnabla^k f - \nabla f(W_k;\xi_k)+\frac{1}{\tau}\tdnabla^k f \odot \sign(\tdnabla^k f)\odot W_k \|^2]
        \nonumber \\
        =&\ \frac{\alpha}{2}\mathbb{E}_{\xi_k}[\|\tdnabla^k f - \nabla f(W_k;\xi_k)
        +\frac{1}{\tau}(\tdnabla^k f-\nabla f(W_k;\xi_k)) \odot \sign(\tdnabla^k f)\odot W_k 
        \nonumber \\
        &\ \hspace{10em}
        + \frac{1}{\tau}\nabla f(W_k;\xi_k) \odot \sign(\tdnabla^k f)\odot W_k \|^2]
        \nonumber \\
        =&\ \frac{\alpha}{2}\mathbb{E}_{\xi_k}[\|(\bbone+\frac{1}{\tau} \odot \sign(\tdnabla^k f)\odot W_k )(\tdnabla^k f - \nabla f(W_k;\xi_k))
        \nonumber \\
        &\ \hspace{10em}
        + \frac{1}{\tau}\nabla f(W_k;\xi_k) \odot \sign(\tdnabla^k f)\odot W_k \|^2]
        \nonumber
    \end{align}
    where $\bbone$ is the all-one vector.
    Using inequality $\|x+y\|^2\le (1+\frac{1}{u}) \|x\|^2+(1+u)\|y\|^2$ we obtain 
    \begin{align}
        \label{inequality:ASGD-asyn-bias-2}
        &\ \frac{1}{2\alpha}\mathbb{E}_{\xi_k}[\|W_{k+1}-W_k+\alpha \nabla f(W_k;\xi_k)\|^2]
        \\
        \le&\ \frac{\alpha}{2}(1+\frac{1}{u})\mathbb{E}_{\xi_k}[\|(\bbone+\frac{1}{\tau} \odot \sign(\tdnabla^k f)\odot W_k )\odot(\tdnabla^k f - \nabla f(W_k;\xi_k)) \|^2]
        \nonumber \\
        &\ + \frac{\alpha}{2\tau^2}(1+u)\mathbb{E}_{\xi_k}[\|\nabla f(W_k;\xi_k) \odot \sign(\tdnabla^k f)\odot W_k \|^2]
        \nonumber
    \end{align}
    The first term in the \ac{RHS} of \eqref{inequality:ASGD-asyn-bias-2} is bounded by
    \begin{align}
        \label{inequality:ASGD-asyn-bias-2-1}
        &\ \frac{\alpha}{2}(1+\frac{1}{u})\mathbb{E}_{\xi_k}[\|(\bbone+\frac{1}{\tau} \odot \sign(\tdnabla^k f)\odot W_k )\odot(\tdnabla^k f - \nabla f(W_k;\xi_k)) \|^2]
        \\
        \le &\ \frac{\alpha}{2}(1+\frac{1}{u})\mathbb{E}_{\xi_k}[
            \|\bbone+\frac{1}{\tau} \odot \sign(\tdnabla^k f)\odot W_k \|^2_\infty
            \|\tdnabla^k f - \nabla f(W_k;\xi_k) \|^2]
        \nonumber\\
        \le &\ \frac{\alpha}{2}(1+\frac{1}{u})
        (1+W_{\max,\infty}/\tau)^2\mathbb{E}_{\xi_k}[
            \|\tdnabla^k f - \nabla f(W_k;\xi_k) \|^2]
        \nonumber\\
        \le &\ \frac{\alpha}{2}(1+\frac{1}{u})(1+W_{\max,\infty}/\tau)^2M^2
            C_{f}^2 
            \mathbb{E}_{\xi_k}\lB\sum_{m'=1}^{M}\sum_{k'=k-(M-m')}^{k-1} \|\Delta^\spp{m'}_{k'}\|^2\rB.
        \nonumber
    \end{align}
    where the last inequality comes from Lemma \ref{lemma:gradient-delay}.
    
    The second term in the \ac{RHS} of \eqref{inequality:ASGD-asyn-bias-2} is bounded by
    \begin{align}
        \label{inequality:ASGD-asyn-bias-2-2}
        &\ \frac{\alpha}{2\tau^2}(1+u)\mathbb{E}_{\xi_k}[\|\nabla f(W_k;\xi_k) \odot \sign(\tdnabla^k f)\odot W_k \|^2]
        \\
        \le&\ \frac{\alpha}{2\tau^2}(1+u)\mathbb{E}_{\xi_k}[\|\nabla f(W_k;\xi_k)\|^2]\ 
        \|W_k \|^2_\infty
        \nonumber \\
        \le&\ \frac{\alpha}{2\tau^2}(1+u)\|\nabla f(W_k)\|^2 \|W_k \|^2_\infty
            +\frac{\alpha\sigma^2}{2\tau^2}(1+u)\|W_k \|^2_\infty
        \nonumber 
    \end{align}
    where the last inequality comes from Assumption \ref{assumption:noise}
    \begin{align}
        \label{inequality:grad-var-decomposion}
        \mbE_{\xi_k}[\|\nabla f(W_k;\xi_k)\|^2]
        =&\ \|\nabla f(W_k)\|^2+\mbE_{\xi_k}[\|\nabla f(W_k;\xi_k)-\nabla f(W_k)\|^2] 
        \\
        \le 
        &\ 
        \|\nabla f(W_k)\|^2+\sigma^2.
        \nonumber
    \end{align}
    Plugging \eqref{inequality:ASGD-asyn-bias-2}, \eqref{inequality:ASGD-asyn-bias-2-1} and \eqref{inequality:ASGD-asyn-bias-2-2} back into \eqref{inequality:ASGD-asyn-bias-1}, we have
    \begin{align}
        \label{inequality:ASGD-asyn-bias-3}
        &\ \frac{1}{2\alpha}\mathbb{E}_{\xi_k}[\|W_{k+1}-W_k+\alpha \nabla f(W_k;\xi_k)\|^2]
        \\
        =&\ \frac{\alpha}{2}(1+\frac{1}{u})(1+W_{\max,\infty}/\tau)^2
            C_{f}^2 M^2 \sum_{m'=1}^{M}\sum_{k'=k-(M-m')}^{k-1} \mathbb{E}_{\xi_k}[\|\Delta^\spp{m'}_{k'}\|^2]
        \nonumber\\
        &\ +\frac{\alpha}{2\tau^2}(1+u)\|\nabla f(W_k)\|^2 \|W_k \|^2_\infty
        +\frac{\alpha\sigma^2}{2\tau^2}(1+u)\|W_k \|^2_\infty.
        \nonumber
    \end{align}
    Substituting \eqref{inequality:ASGD-asyn-bias-3} into \eqref{inequality:ASGD-convergence-linear-2} yields
    \begin{align}
        \label{inequality:ASGD-convergence-linear-3}
        &\ 
        \mathbb{E}_{\xi_k}[f(W_{k+1})]
        \\
        \le&\  f(W_k) - \frac{\alpha}{2} \|\nabla f(W_k)\|^2 
        + \alpha^2L_f\sigma^2
        + \frac{\alpha}{2\tau^2}(1+u)\|\nabla f(W_k)\|^2 \|W_k \|^2_\infty
        + \frac{\alpha\sigma^2}{2\tau^2}(1+u)\|W_k \|^2_\infty
        \nonumber 
        \\
        &\ - \frac{1}{4\alpha}\mathbb{E}_{\xi_k}[\|W_{k+1}-W_k+\alpha(\nabla f(W_k;\xi_k)-\nabla f(W_k))\|^2]
        \nonumber \\
        &\ +\frac{\alpha}{2}(1+\frac{1}{u})(1+W_{\max,\infty}/\tau)^2
        C_{f}^2 M^2\sum_{m'=1}^{M}\sum_{k'=k-(M-m')}^{k-1} \mathbb{E}_{\xi_k}[\|\Delta^\spp{m'}_{k'}\|^2].
        \nonumber\\
        =&\  f(W_k) - \frac{\alpha}{2} \lp 1-(1+u)\frac{\|W_k \|^2_\infty}{\tau^2} \rp\|\nabla f(W_k)\|^2 
        + \alpha^2L_f\sigma^2
        + \frac{\alpha\sigma^2}{2\tau^2}(1+u)\|W_k \|^2_\infty
        \nonumber \\
        &\ - \frac{1}{4\alpha}\mathbb{E}_{\xi_k}[\|W_{k+1}-W_k+\alpha(\nabla f(W_k;\xi_k)-\nabla f(W_k))\|^2]
        \nonumber \\
        &\ +\frac{\alpha}{2}(1+\frac{1}{u})(1+W_{\max,\infty}/\tau)^2
        C_{f}^2M^2
        \mathbb{E}_{\xi_k}\Big[
        \underbrace{\sum_{m'=1}^{M}\sum_{k'=k-(M-m')}^{k-1} \|\Delta^\spp{m'}_{k'}\|^2
        }_{=: \psi_k}
        \Big].
        \nonumber
    \end{align}
    To deal with $\psi_k$ in the \ac{RHS} of \eqref{inequality:ASGD-convergence-linear-3}, we define an auxiliary function
    \begin{align}
        \Psi_k := \sum_{m'=1}^{M}\sum_{k'=k-(M-m')}^{k-1}
        (k'-k+M)\|\Delta^\spp{m'}_{k'}\|^2
    \end{align}
    which has the following recursion from $k$ to $k+1$
    \begin{align}
        \Psi_{k+1} = &\ \sum_{m'=1}^{M}\sum_{k'=k+1-(M-m')}^{k}
        (k'-(k+1)+M)\|\Delta^\spp{m'}_{k'}\|^2
        \\
        =&\ \sum_{m'=1}^{M}\sum_{k'=k+1-(M-m')}^{k}
        (k'-k+M)\|\Delta^\spp{m'}_{k'}\|^2
        -\sum_{m'=1}^{M}\sum_{k'=k+1-(M-m')}^{k}
        \|\Delta^\spp{m'}_{k'}\|^2
        \nonumber\\
        =&\ \sum_{m'=1}^{M}\sum_{k'=k-(M-m')}^{k-1}
        (k'-k+M)\|\Delta^\spp{m'}_{k'}\|^2
        -\sum_{m'=1}^{M}\sum_{k'=k+1-(M-m')}^{k}
        \|\Delta^\spp{m'}_{k'}\|^2
        \nonumber\\
        &\ + M\sum_{m'=1}^{M}\|\Delta^\spp{m'}_{k}\|^2
        - \sum_{m'=1}^{M}m' \|\Delta^\spp{m'}_{k-(M-m')}\|^2
        \nonumber\\
        \le&\ \Psi_k - \psi_k
        + M\|W_{k+1}-W_k\|^2
        \nonumber
    \end{align}
    where the last inequality holds by definition of $\Delta^\spp{m'}_{k}$, i.e., 
    \begin{align}
        \sum_{m'=1}^{M}\|\Delta^\spp{m'}_{k}\|^2
        =\sum_{m'=1}^{M}\|W^\spp{m'}_{k+1}-W^\spp{m'}_k\|^2
        =\|W_{k+1}-W_k\|^2.
    \end{align} 
    Construct a Lyapunov function by
    \begin{align}
        \label{lyapunov}
        \mbV_k := f(W_k) - f^* + \frac{\alpha}{2}(1+\frac{1}{u})(1+W_{\max,\infty}/\tau)^2C_{f}^2M^2 \Psi_k.
    \end{align}
    According to \eqref{inequality:ASGD-convergence-linear-3}, we have
    \begin{align}
        \label{inequality:ASGD-convergence-linear-4}
        \mbE_{\xi_k}[\mbV_{k+1}] 
        =&\ f(W_{k+1})-f^*
        + \frac{\alpha}{2}(1+\frac{1}{u})(1+W_{\max,\infty}/\tau)^2C_{f}^2M^2 \Psi_{k+1}
        \\
        \le&\ f(W_k) - \frac{\alpha}{2} \lp 1-(1+u)\frac{\|W_k \|^2_\infty}{\tau^2} \rp\|\nabla f(W_k)\|^2 
        + \alpha^2L_f\sigma^2
        + \frac{\alpha\sigma^2}{2\tau^2}(1+u)\|W_k \|^2_\infty
        \nonumber\\
        &\ - \frac{1}{4\alpha}\mathbb{E}_{\xi_k}[\|W_{k+1}-W_k+\alpha(\nabla f(W_k;\xi_k)-\nabla f(W_k))\|^2]
        \nonumber \\
        &\ + \frac{\alpha}{2}(1+\frac{1}{u})(1+W_{\max,\infty}/\tau)^2C_{f}^2M^3\mathbb{E}_{\xi_k}[\|W_{k+1}-W_k\|^2].
        \nonumber
    \end{align}
    We bound the last term of the \ac{RHS} of \eqref{inequality:ASGD-convergence-linear-4} by
    \begin{align}
        &\ 
        \|W_{k+1}-W_k\|^2
        \\
        \le&\ 2\|W_{k+1}-W_k+\alpha (\nabla f(W_k;\xi_k)-\nabla f(W_k))\|^2 + 2\alpha^2\|\nabla f(W_k;\xi_k)-\nabla f(W_k)\|^2
        \nonumber
        \\
        \le&\ 2\|W_{k+1}-W_k+\alpha (\nabla f(W_k;\xi_k)-\nabla f(W_k))\|^2 +2\alpha^2\sigma^2.
        \nonumber
    \end{align}
    Organizing the terms of \eqref{inequality:ASGD-convergence-linear-4},
    we obtain
    \begin{align}
        \label{inequality:ASGD-convergence-linear-5}
        &\ \frac{\alpha}{2} \lp 1-(1+u)\frac{\|W_k \|^2_\infty}{\tau^2} \rp\|\nabla f(W_k)\|^2 
        \\
        \le &\ \mbV_k - \mbE_{\xi_k}[\mbV_{k+1}] 
        + \alpha^2L_f\sigma^2
        + \frac{\alpha\sigma^2}{2\tau^2}(1+u)\|W_k \|^2_\infty
        \nonumber\\
        &\ - \lp\frac{1}{4\alpha}-\alpha(1+\frac{1}{u})(1+W_{\max,\infty}/\tau)^2C_{f}^2M^2\rp\mathbb{E}_{\xi_k}[\|W_{k+1}-W_k+\alpha(\nabla f(W_k;\xi_k)-\nabla f(W_k))\|^2]
        \nonumber \\
        &\ +\alpha^3(1+\frac{1}{u})(1+W_{\max,\infty}/\tau)^2C_{f}^2 M^3 \sigma^2.
        \nonumber
    \end{align}
    Taking expectation, averaging \eqref{inequality:ASGD-convergence-linear-5} over $k$ from $0$ to $K-1$, and choosing the stepsize  $\alpha\leq\frac{1}{2\sqrt{1+1/u}(1+W_{\max,\infty}/\tau)C_f M}$ yield  
    \begin{align}
        &\ 
        \frac{1}{K}\sum_{k=0}^{K-1}\mbE\lB\|\nabla f(W_k)\|^2\rB
        \\
        \le &\ 
        \frac{2(\mbV_0 - \mbE[\mbV_{k+1}])}{\alpha K(1-(1+u)W_{\max,\infty}^2/\tau^2)}
        +\frac{2\alpha L_f\sigma^2}{1-(1+u)W_{\max,\infty}^2/\tau^2}
        \bkeq
        + \sigma^2 \frac{1}{K}\sum_{k=0}^{K-1}
        \frac{(1+u)\|W_k\|^2_\infty/\tau}{1-(1+u)\|W_k \|^2_\infty/\tau^2}
        +\frac{\alpha^2(1+1/u)(1+W_{\max,\infty}/\tau)^4C_{f}^2M^3 \sigma^2}{(1-(1+u)W_{\max,\infty}^2/\tau^2)}.
        \nonumber
    \end{align}
    Using the fact that  $V_{k}\ge 0$ for any $k\in\naturals$ and $V_0=f(W_0)-f^*$, and choosing the learning rate $\alpha=\sqrt{\frac{f(W_0) - f^*}{\sigma^2L_fK}}$, we have
    \begin{align}
        \label{inequality:ASGD-convergence-linear-6}
        &\ 
        \frac{1}{K}\sum_{k=0}^{K-1}\mbE\lB\|\nabla f(W_k)\|^2\rB
        \\
        \le &\ 
        4\sqrt{\frac{(f(W_0) -f^*)\sigma^2L_f}{K}}
        \frac{1}{1-(1+u)W_{\max,\infty}^2/\tau^2}
        + \sigma^2 S_K'
        +\ccalO\lp \frac{1+1/u}{K}\rp.
        \nonumber
    \end{align}
    where $S_K'$ denotes the amplification factor given by
    \begin{align}
        S_K' := \frac{1}{K}\sum_{k=0}^{K}\frac{(1+u)\|W_k\|_\infty^2/\tau^2}{1-(1+u)\|W_k\|_\infty^2/\tau^2} \le \frac{(1+u)W_{\max}^2/\tau^2}{1-(1+u)W_{\max}^2/\tau^2}.
    \end{align}
    The proof is completed now.
\end{proof}

\section{Convergence of Digital Asynchronous Pipeline SGD}\label{sec:digital-pipeline-SGD}
\begin{restatable}[Iteration complexity, asynchronous pipeline]{theorem}{ThmDSGDAsyncConvergenceNoncvxLinear}
  \label{theorem:SGD-async-convergence-noncvx-linear}
  Under Assumption \ref{assumption:noise}--\ref{assumption:W-bounded}, if the learning rate is set as $\alpha=\sqrt{\frac{f(W_0) - f^*}{\sigma^2L_fK}}$ and $K$ is sufficiently large such that $\alpha\leq\sqrt{\frac{1}{4C_f M^3}}$,
  it holds that 
    \begin{align}
        \ \frac{1}{K}\sum_{k=0}^{K-1}\mbE\lB\|\nabla f(W_k)\|^2\rB
        \le \ 4\sqrt{\frac{(f(W_0)-f^*)L_f\sigma^2}{K}}
        + \frac{2C_f M^3 (f(W_0)-f^*)}{K L_f }. 
    \end{align}
\end{restatable}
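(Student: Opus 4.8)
The plan is to reuse the argument behind Theorem \ref{theorem:ASGD-async-convergence-noncvx-linear} (analog asynchronous pipeline), specialized to the digital regime $\tau\to\infty$ so that the asymmetric-bias term in \eqref{dynamic:asyn-update-analog-sgd} vanishes and the update reduces to the clean stale step $W_{k+1}-W_k=-\alpha\tdnabla^k f$, where $\tdnabla^k f=\tddelta^\spp{m}_k\otimes\tdx^\spp{m}_k$ is the partial delay gradient. The effect of passing to $\tau\to\infty$ is that the contraction factor $1/(1-(1+u)W_{\max}^2/\tau^2)$ becomes $1$ and the asymptotic error $\sigma^2S'$ disappears, leaving only a benign $O(1/K)$ correction coming from the staleness of $\tdnabla^k f$.

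First I would invoke $L_f$-smoothness of the objective (Lemma \ref{lemma:objective-L-smooth}) and run the standard descent manipulation: expand $\mbE_{\xi_k}[f(W_{k+1})]$, insert and subtract $\alpha(\nabla f(W_k;\xi_k)-\nabla f(W_k))$ inside the inner-product and squared-norm terms to complete the square, and use unbiasedness and bounded variance (Assumption \ref{assumption:noise}) exactly as in \eqref{inequality:ASGD-convergence-linear-2}. With $\alpha$ small (which holds for $K$ large), this gives $\mbE_{\xi_k}[f(W_{k+1})]\le f(W_k)-\frac{\alpha}{2}\|\nabla f(W_k)\|^2+\alpha^2L_f\sigma^2-\frac{1}{4\alpha}\mbE_{\xi_k}[\|W_{k+1}-W_k+\alpha(\nabla f(W_k;\xi_k)-\nabla f(W_k))\|^2]+\frac{1}{2\alpha}\mbE_{\xi_k}[\|W_{k+1}-W_k+\alpha\nabla f(W_k;\xi_k)\|^2]$. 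The last term is exactly $\frac{\alpha}{2}\mbE_{\xi_k}[\|\tdnabla^k f-\nabla f(W_k;\xi_k)\|^2]$, which Lemma \ref{lemma:gradient-delay} bounds by $\frac{\alpha C_f^2}{2}\sum_{m'=1}^{M}\sum_{k'=k-(M-m'+1)}^{k-1}\mbE[\|\Delta^\spp{m'}_{k'}\|^2]$, i.e. entirely in terms of past one-step weight increments.

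Next I would introduce the delay-weighted Lyapunov function $\mbV_k:=f(W_k)-f^*+\frac{\alpha C_f^2}{2}\sum_{m'=1}^{M}\sum_{k'=k-(M-m'+1)}^{k-1}(k'-k+M+1)\|\Delta^\spp{m'}_{k'}\|^2$, whose linearly decreasing weights make the telescoping of the double sum produce (i) a cancellation against the $+\frac{\alpha C_f^2}{2}\sum\sum$ term above and (ii) a single fresh contribution of order $\alpha C_f^2 M\sum_{m'=1}^{M}\|\Delta^\spp{m'}_k\|^2$. I then bound $\sum_{m'=1}^{M}\|\Delta^\spp{m'}_k\|^2=\|W_{k+1}-W_k\|^2\le 2\|W_{k+1}-W_k+\alpha(\nabla f(W_k;\xi_k)-\nabla f(W_k))\|^2+2\alpha^2\sigma^2$; the first piece is absorbed into the negative $-\frac{1}{4\alpha}\mbE[\,\cdot\,]$ term precisely when $\alpha^2\le\frac{1}{4C_f M^3}$ (the stated learning-rate condition), and the second leaves an $O(\alpha^3 C_f M^3\sigma^2)$ residual. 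Summing $\mbE[\mbV_{k+1}]\le\mbV_k-\frac{\alpha}{2}\|\nabla f(W_k)\|^2+\alpha^2L_f\sigma^2+O(\alpha^3 C_f M^3\sigma^2)$ over $k=0,\dots,K-1$, using $\mbV_K\ge 0$ and $\mbV_0=f(W_0)-f^*$, dividing by $\alpha K/2$, and finally substituting $\alpha=\sqrt{(f(W_0)-f^*)/(\sigma^2 L_f K)}$ produces the dominant term $4\sqrt{(f(W_0)-f^*)L_f\sigma^2/K}$ and the correction $\frac{2C_f M^3(f(W_0)-f^*)}{K L_f}$.

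The main obstacle is the one the paper flags in its discussion of PipeDream-style pipelines: $\tdnabla^k f$ is not the gradient of $f$ at any weight, so the error $\tdnabla^k f-\nabla f(W_k;\xi_k)$ cannot be handled by a simple delayed-SGD argument and must instead be controlled through accumulated increments $\|\Delta^\spp{m'}_{k'}\|$, which forces the Lyapunov weighting to be designed so those increments telescope cleanly without weight stashing. Once Lemma \ref{lemma:gradient-delay} is in hand this reduces to bookkeeping, and — unlike the analog case — the absence of the $1/\tau$ bias means no contraction factor and no persistent asymptotic error appear, so the digital asynchronous pipeline retains the $O(1/\sqrt{K})$ rate up to a lower-order $O(1/K)$ term.
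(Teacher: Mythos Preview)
Your proposal is correct and follows essentially the same approach as the paper: smoothness descent lemma yielding \eqref{inequality:ASGD-convergence-linear-2}, bounding the bias term $\frac{1}{2\alpha}\mbE[\|W_{k+1}-W_k+\alpha\nabla f(W_k;\xi_k)\|^2]=\frac{\alpha}{2}\mbE[\|\tdnabla^k f-\nabla f(W_k;\xi_k)\|^2]$ via Lemma~\ref{lemma:gradient-delay}, a delay-weighted Lyapunov function to telescope the double sums, absorption of the fresh increment into the negative $-\frac{1}{4\alpha}$ term under the stated step-size condition, and the final $\alpha$ substitution. The only cosmetic differences are in the exact Lyapunov weights (you use $(k'-k+M+1)$, the paper uses $(k'-k+M-m'+2)$) and the tracking of constants ($C_f^2$ versus $C_f M^2$), neither of which changes the argument.
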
 

\begin{proof}\allowdisplaybreaks
    The validation of inequality \eqref{inequality:ASGD-convergence-linear-1} does not rely on any training algorithm, and hence it still holds here. To bound the last term in the \ac{RHS} of \eqref{inequality:ASGD-convergence-linear-1}, we have
    \begin{align}
        \label{inequality:ASGD-convergence-linear-digital-SGD}
        &\ \frac{1}{2\alpha}\mathbb{E}_{\xi_k}[\|W_{k+1}-W_k+\alpha \nabla f(W_k;\xi_k)\|^2]\nonumber
        = \frac{\alpha}{2}\mathbb{E}_{\xi_k}[\|\tdnabla^k f - \nabla f(W_k;\xi_k)\|^2]
        \\
        \le&\ \frac{\alpha C_{f}}{2}\mathbb{E}_{\xi_k}\left[\sum_{m'=1}^{M}\sum_{k'=k-(M-m'+1)}^{k-1} \|\Delta^\spp{m'}_{k'}\|\right]^2
        \le \frac{\alpha C_{f}M^2}{2}\sum_{m'=1}^{M}\sum_{k'=k-(M-m'+1)}^{k-1} \mathbb{E}_{\xi_k}[\|\Delta^\spp{m'}_{k'}\|^2]
    \end{align}
where the first inequality is according to Lemma \ref{lemma:gradient-delay}. 

According to the descent lemma in \eqref{inequality:ASGD-convergence-linear-2}, we have 
\begin{align}
    \label{eq:f-descent}
    &\ \mathbb{E}_{\xi_k}[f(W_{k+1})]
    \\
    \le&\ f(W_k) - \frac{\alpha}{2} \|\nabla f(W_k)\|^2 
    +\alpha^2L_f\sigma^2
    + \frac{1}{2\alpha}\mathbb{E}_{\xi_k}[\|W_{k+1}-W_k+\alpha \nabla f(W_k;\xi_k)\|^2]
    \nonumber \\
    &\ - \frac{1}{4\alpha}\mathbb{E}_{\xi_k}[\|W_{k+1}-W_k+\alpha(\nabla f(W_k;\xi_k)-\nabla f(W_k))\|^2]
    \nonumber 
    \\
    \le&\ f(W_k) - \frac{\alpha}{2} \|\nabla f(W_k)\|^2 
    +\alpha^2L_f\sigma^2
    + \frac{\alpha C_{f}M^2}{2}\sum_{m'=1}^{M}\sum_{k'=k-(M-m'+1)}^{k-1} \mathbb{E}_{\xi_k}[\|\Delta^\spp{m'}_{k'}\|^2]
    \nonumber \\
    &\ - \frac{1}{4\alpha}\mathbb{E}_{\xi_k}[\|W_{k+1}-W_k+\alpha(\nabla f(W_k;\xi_k)-\nabla f(W_k))\|^2]
    \nonumber
\end{align}
 where the second inequality is earned by plugging in \eqref{inequality:ASGD-convergence-linear-digital-SGD}. 

    Construct a Lyapunov function as 
    \begin{align}
        \mbV_k := f(W_k) 
        + \frac{\alpha C_{f}M^2}{2}\sum_{m'=1}^{M}\sum_{k'=k-(M-m'+1)}^{k-1}
        (k'-k+M-m'+2)\|\Delta^\spp{m'}_{k'}\|^2.
    \end{align}
    According to \eqref{eq:f-descent}, we have
    \begin{align}
        &\ 
        \mbE_{\xi_{k}}[\mbV_{k+1}] -\mbV_{k}
        \nonumber\\
        \le&\ - \frac{\alpha}{2} \|\nabla f(W_k)\|^2 
        +\alpha^2L_f\sigma^2
        + \frac{\alpha C_{f}M^2}{2}\sum_{m'=1}^{M}\sum_{k'=k-(M-m'+1)}^{k-1} \mbE_{\xi_{k}}[\|\Delta^\spp{m'}_{k'}\|^2]
        \nonumber
        \\
        &\ - \frac{1}{4\alpha}\mathbb{E}_{\xi_k}[\|W_{k+1}-W_k+\alpha(\nabla f(W_k;\xi_k)-\nabla f(W_k))\|^2]\nonumber\\
        &\ + \frac{\alpha C_{f}M^2}{2} \sum_{m'=1}^{M}\sum_{k'=k-(M-m'+1)}^{k}
        (k'-k+M-m'+1)\mbE_{\xi_{k}}[\|\Delta^\spp{m'}_{k'}\|^2]\nonumber
        \\
        &\ -\frac{\alpha C_{f}M^2}{2}\sum_{m'=1}^{M}\sum_{k'=k-(M-m'+1)}^{k-1}
        (k'-k+M-m'+2)\mathbb{E}_{\xi_k}[\|\Delta^\spp{m'}_{k'}\|^2]\nonumber\\
        \le&\ - \frac{\alpha}{2} \|\nabla f(W_k)\|^2 
        +\alpha^2L_f\sigma^2
        + \frac{\alpha C_{f}M^2}{2}\sum_{m'=1}^{M}\sum_{k'=k-(M-m'+1)}^{k-1} \mbE_{\xi_{k}}[\|\Delta^\spp{m'}_{k'}\|^2]
        \nonumber\\
        &\ - \frac{1}{4\alpha}\mathbb{E}_{\xi_k}[\|W_{k+1}-W_k+\alpha(\nabla f(W_k;\xi_k)-\nabla f(W_k))\|^2]\nonumber\\
        &\ -\frac{\alpha C_{f}M^2}{2}\sum_{m'=1}^{M}\sum_{k'=k-(M-m'+1)}^{k-1}
        \mbE_{\xi_{k}}[\|\Delta^\spp{m'}_{k'}\|^2]
        + \frac{\alpha C_{f}M^2}{2}(M-m^\prime+1)\sum_{m'=1}^{M}\mbE_{\xi_{k}}[\|\Delta^\spp{m'}_{k}\|^2].
        \nonumber\\
        \le&\ - \frac{\alpha}{2} \|\nabla f(W_k)\|^2 
        +\alpha^2L_f\sigma^2
        + \frac{\alpha C_{f}M^2}{2}\sum_{m'=1}^{M}\sum_{k'=k-(M-m'+1)}^{k-1} \mbE_{\xi_{k}}[\|\Delta^\spp{m'}_{k'}\|^2]
        \nonumber\\
        &\ - \frac{1}{4\alpha}\mathbb{E}_{\xi_k}[\|W_{k+1}-W_k+\alpha(\nabla f(W_k;\xi_k)-\nabla f(W_k))\|^2]\nonumber\\
        &\ -\frac{\alpha C_{f}M^2}{2}\sum_{m'=1}^{M}\sum_{k'=k-(M-m'+1)}^{k-1}
        \mbE_{\xi_{k}}[\|\Delta^\spp{m'}_{k'}\|^2] + \frac{\alpha C_{f}M^3}{2}\sum_{m'=1}^{M}\mbE_{\xi_{k}}[\|\Delta^\spp{m'}_{k}\|^2]
        \nonumber\\
        \le&\ - \frac{\alpha}{2} \|\nabla f(W_k)\|^2 
        +\alpha^2L_f\sigma^2
        + \frac{\alpha C_{f}M^2}{2}\sum_{m'=1}^{M}\sum_{k'=k-(M-m'+1)}^{k-1} \mbE_{\xi_{k}}[\|\Delta^\spp{m'}_{k'}\|^2]
        \nonumber\\
        &\ -\frac{\alpha C_{f}M^2}{2}\sum_{m'=1}^{M}\sum_{k'=k-(M-m'+1)}^{k-1}
        \mbE_{\xi_{k}}[\|\Delta^\spp{m'}_{k'}\|^2]+\alpha^3 C_f M^3 \sigma^2\nonumber\\
        &\ -\left(\frac{1}{4\alpha}- \alpha C_{f}M^3\right)\mbE_{\xi_{k}}[\|W_{k+1}-W_{k}+\alpha(\nabla f(W_k;\xi_k)-\nabla f(W_k))\|^2]
        \nonumber\\
        \le&\ - \frac{\alpha}{2} \|\nabla f(W_k)\|^2 
        +\alpha^2L_f\sigma^2+\alpha^3 C_f M^3 \sigma^2\nonumber
    \end{align}
where the fourth inequality holds because 
\begin{align}
    \label{eq:important1}
    \sum_{m'=1}^{M}\mbE_{\xi_{k}}[\|\Delta^\spp{m'}_{k}\|^2]
    &= 
    \sum_{m'=1}^{M}\mbE_{\xi_{k}}[\|W_{k+1}^{(m^\prime)}-W_{k}^{(m^\prime)}\|^2]
    = \mbE_{\xi_{k}}[\|W_{k+1}-W_{k}\|^2]
    \\
    &\leq 2\mbE_{\xi_{k}}[\|W_{k+1}-W_{k}+\alpha(\nabla f(W_k;\xi_k)-\nabla f(W_k))\|^2]\nonumber\\
    &~~~~+2\alpha^2\mbE_{\xi_{k}}[\|\nabla f(W_k;\xi_k)-\nabla f(W_k)\|^2]\nonumber\\
    &\leq 2\mbE_{\xi_{k}}[\|W_{k+1}-W_{k}+\alpha(\nabla f(W_k;\xi_k)-\nabla f(W_k))\|^2]+2\alpha^2\sigma^2. 
    \nonumber
\end{align}
and the last inequality is earned by letting  $\alpha\leq\sqrt{\frac{1}{4C_f M^3}}$. Reorganizing the terms yields
    \begin{align}
        \frac{\alpha}{2} \|\nabla f(W_k)\|^2 
        \le &\ \mbV_k - \mbE_{\xi_{k}}[\mbV_{k+1}] 
        +\alpha^2L_f\sigma^2+\alpha^3 C_f M^3 \sigma^2. 
        \nonumber
    \end{align}
Therefore, by telescoping and taking the expectation, we get 
\begin{align}
    \frac{1}{K}\sum_{k=0}^{K-1}\mbE[\|\nabla f(W_k)\|^2] 
    \le&\ \frac{2(\mbV_0 - \mbE[\mbV_{K}])}{\alpha K}
    +2\alpha L_f\sigma^2+2\alpha^2 C_f M^3 \sigma^2
    \\
    \le \frac{2(f(W_0)-f^*)}{\alpha K}
    +2\alpha L_f\sigma^2
    + 2\alpha^2 C_{f} M^3\sigma^2.  
    \nonumber
\end{align}
Letting $\alpha\leq\sqrt{\frac{f(W_0)-f^*}{L_f\sigma^2 K}}$, we obtain that    
    \begin{align}
        \frac{1}{K}\sum_{k=0}^{K-1}\mbE[\|\nabla f(W_k)\|^2]
        \le &\ 4\sqrt{\frac{(f(W_0)-f^*)L_f\sigma^2}{K}}
        + \frac{2C_f M^3 (f(W_0)-f^*)}{K L_f }.
        \nonumber
    \end{align}

\end{proof}
}

\end{document}